\definecolor{Darkblue}{rgb}{0,0,0.4}
\definecolor{Brown}{cmyk}{0,0.61,1.,0.60}
\definecolor{Purple}{cmyk}{0.45,0.86,0,0}
\newtheorem{theo}{Theorem}[section]
\newtheorem{Lemma}[theo]{Lemma}
\newtheorem{defi}[theo]{Definition}
\newtheorem{rem}[theo]{Remark}
\newenvironment{proofof}[1]{\begin{proof}[Proof of #1]}{\end{proof}}
\renewcommand{\oplus}{\bigtriangleup}
\newcommand{\Ex}{\mathbb{E}}
\newcommand{\pr}[1]{\Pr\left[#1\right]}
\newcommand{\arm}{A}
\newcommand{\alg}{\mathbb{A}}
\newcommand{\event}{\mathcal{E}}
\newcommand{\KL}{\mathrm{KL}}
\newcommand{\Normal}{\mathcal{N}}
\newcommand{\amean}[1]{\mu_{#1}}
\newcommand{\hamean}[1]{\hat{\mu}_{#1}}
\newcommand{\bestkarm}{\textsc{Best-$k$-Arm}\xspace}
\newcommand{\bestarm}{\textsc{Best-$1$-Arm}\xspace}
\newcommand{\Gap}[1]{\Delta_{[#1]}}
\newcommand{\combibandit}{\textsc{Best-Set}\xspace}
\newcommand{\generalbandit}{\textsc{General-Samp}\xspace}
\newcommand{\argmax}{\operatorname*{argmax}}
\newcommand{\eat}[1]{}
\newcommand{\subsetfam}{\mathcal{F}}
\newcommand{\combiband}{\mathcal{C}}
\newcommand{\paraset}{\mathcal{U}}
\newcommand{\simest}{\textsf{SimultEst}}
\newcommand{\gklow}{\textsf{Low}}
\newcommand{\chenlow}{H_C}
\newcommand{\alggen}{\textsf{LPSample}}
\newcommand{\inst}{\mathcal{I}}    
\newcommand{\armseq}{S}    
\newcommand{\ansset}{O}    
\newcommand{\anssetcol}{\mathcal{O}}    
\newcommand{\real}{\mathbb{R}}
\newcommand{\mean}{\mu}
\newcommand{\empmean}{\hat\mu}    
\newcommand{\tilmean}{\nu}    
\newcommand{\candansset}{\hat\ansset}    
\newcommand{\twonorm}[1]{\left\Vert #1 \right\Vert_{2}}
\newcommand{\alt}{\mathsf{Alt}}
\newcommand{\altans}{\alt(\candansset)}    
\newcommand{\ltwoball}{B}
\newcommand{\initOneLiners}{%
    \setlength{\itemsep}{0pt}
    \setlength{\parsep }{0pt}
    \setlength{\topsep }{0pt}
}
\title{Nearly Optimal Sampling Algorithms for Combinatorial Pure Exploration}
\thanks{
		Institute for Interdisciplinary Information Sciences (IIIS), Tsinghua University, Beijing, China. Supported in part by the National Basic Research Program of China grants 2015CB358700, 2011CBA00300, 2011CBA00301, and the National NSFC grant 61632016.
	}
\thanks{
		Computer Science Department, Carnegie Mellon University, Pittsburgh, USA.
		Supported in part by NSF awards CCF-1536002, CCF-1540541, and CCF-1617790.
	}
\begin{document}
\maketitle 	
\begin{abstract}
	We study the combinatorial pure exploration problem
	\combibandit{} in a
	stochastic multi-armed bandit game. 
	In an \combibandit{} instance, we are given $n$ stochastic arms
	with unknown reward distributions, as well as a family $\subsetfam$
	 of feasible subsets over the arms. Let the weight of
	an arm be the mean of its reward distribution. Our goal is to identify the feasible subset in $\subsetfam$ with the
	maximum total weight, using as few samples as possible.
	The problem generalizes the classical best arm
            identification problem and the top-$k$ arm identification
            problem, both of which have attracted significant attention in recent years.
	We provide a novel \emph{instance-wise} lower bound for the sample complexity of the problem,
	as well as a nontrivial sampling algorithm, matching
	the lower bound up to a factor of $\ln|\subsetfam|$.
	For an important class of combinatorial families (including spanning trees, matchings, and path constraints), we also provide 
	polynomial time implementation of the sampling algorithm,
	using the equivalence of separation and optimization for convex program, and 
	the notion of approximate Pareto curves in multi-objective optimization 
	(note that $|\subsetfam|$ can be exponential in $n$). 
	We also show that the $\ln|\subsetfam|$ factor is inevitable 
	in general, through a nontrivial lower bound construction
	utilizing a combinatorial structure resembling the Nisan-Wigderson design.
	Our results significantly improve several previous results
	for several important combinatorial constraints, 
	and provide a tighter understanding of the general \combibandit problem.
	
	We further introduce an even more general problem,
	formulated in geometric terms. 
	We are given $n$ Gaussian arms with unknown means and unit variance.
	Consider the $n$-dimensional 
	Euclidean space $\mathbb{R}^n$, and a collection
            $\anssetcol$ of disjoint subsets. Our goal is to
            determine the subset in $\anssetcol$ that contains the mean profile (which is the $n$-dimensional vector of the means), using
	as few samples as possible.
	The problem generalizes most pure exploration bandit problems studied in the literature. 
	We provide the first nearly optimal sample complexity upper and lower bounds 
	for the problem.
\end{abstract}

\section{Introduction}
The stochastic multi-armed bandit model is a
classical model for characterizing the exploration-exploitation tradeoff in a variety of application fields with stochastic environments. 
In this model, we are given a set of $n$ stochastic arms,
each associated with an unknown reward distribution. Upon each play 
of an arm, we can get a reward sampled from the corresponding distribution.
The most well studied objectives include maximizing the cumulative sum of rewards, or minimizing the cumulative regret (see e.g., \cite{cesa2006prediction,bubeck2012regret}).
Another popular objective is to identify the optimal solution (which can
either be a single arm, or a set of arms, depending on the problem) with
high confidence, using as few samples as possible. This problem is
called the {\em pure exploration} version of the multi-armed bandit
problem, and has attracted significant attention due to applications in
domains like medical trials, crowdsourcing, communication network,
databases and online advertising \cite{chen2014combinatorial,
  zhou2014optimal, cao2015top}. 

The problems of identifying the best arm (i.e., the arm with maximum expected reward) and the top-$k$ arms have been studied extensively (see e.g., \cite{mannor2004sample,even2006action,audibert2010best,kalyanakrishnan2010efficient,gabillon2012best,kalyanakrishnan2012pac,karnin2013almost,jamieson2014lil,zhou2014optimal,chen2015optimal,cao2015top,carpentier2016tight,garivier2016optimal,chen2016towards,chen2017nearly}).
\cite{chen2014combinatorial} proposed the following 
significant generalization, in which the cardinality constraint is replaced by a general combinatorial constraint and the goal
is to identify the best subset (in terms of the total mean) satisfying the constraint. 

\begin{defi}[\combibandit]\label{defi:comb-bandit-PE}
	In a \combibandit\ instance $\combiband=(S,\subsetfam)$,
	we are given a set $S$ of $n$ arms.
	Each arm $a \in S$ is associated with 
  a Gaussian reward distribution with unit variance and an unknown mean $\mu_a$.
	We are also given a family of subsets $\subsetfam$ with ground set identified with the set $S$ of arms.
	Our goal is to find with probability at least $1-\delta$, a subset in $\subsetfam$ with the maximum total mean
	using as few samples as possible.
  We assume that there is a unique subset with the maximum total mean.
\end{defi}

In the above definition, the set family $\subsetfam$ may be given explicitly
(i.e., the list of all subsets in $\subsetfam$ is given as input),
or implicitly in the form of some combinatorial constraint (e.g., matroids, matchings, paths).
Note that in the latter case, $|\subsetfam|$
may be exponential in the input size; we would additionally like to design sampling algorithms that run in
polynomial time. Some common combinatorial constraints are the following:
\begin{enumerate}
\item (\textsc{Matroids}) $(S,\subsetfam)$ is a matroid, where $S$ is
  the ground set and $\subsetfam$ is the family of independent set of
  the matroid. The problem already captures a number of interesting
  applications. See~\cite{chen2016pure} for more details and the
  state-of-the-art sample complexity bounds.

\item (\textsc{Paths}) Consider a network $G$, in which the latency of
  each edge is stochastic. However, the distributions of the latencies
  are unknown and we can only take samples.  We would like to choose a
  path from node $s$ to node $t$ such that the expected latency is
  minimized.  Here $S$ is the set of edges of a given undirected graph
  $G$, and $\subsetfam$ the set of $s$-$t$ paths in $G$.
	 
\item (\textsc{Matchings}) There are $n$ workers and $n$ types of jobs.
  Each job type must be assigned to exactly one worker, and each worker
  can only handle one type of job. Jobs of the same type may not be
  exactly the same, hence may have different profit. For simplicity, we
  assume that for worker $i$, the profit of finishing a random job in
  type $j$ follows an unknown distribution $D_{ij}$.  Our goal is to
  find an assignment of types to workers, such that the expected total
  reward is maximized (assuming each worker gets a random job from the
  type assigned to them). This problem has potential applications to
  crowdsourcing.  Here, $S$ is the set of edges in the worker-job-type
  bipartite graph $G$, and $\subsetfam$ the set of perfect matchings
  in~$G$.

\item (\textsc{Tree-Planning}) We are given a tree, where each arm $i$
  corresponds to an edge of the tree. The family $\subsetfam$ is the set
  of paths from the root to the leaves. The goal is to find the
  root-leaf path with the maximum weight.  This setting corresponds to
  the open-loop planning problem of maximizing the expected sum of
  rewards over consecutive actions from a starting state (i.e., the
  root) when the state dynamics is deterministic and the reward
  distributions are unknown (see e.g., \cite{munos2014bandits,
    gabillon2016improved}).
\end{enumerate}

While these examples show that the \combibandit\ problem is quite general,
there are other interesting pure exploration bandit problems that cannot
be captured by such combinatorial constraints over the arm set.  For
example, suppose there are $n$ Gaussian arms with unknown means and unit
variance. We know there is exactly one special arm with mean in the
interval $(0.4,0.6)$; all other arms have means either strictly larger
than 0.6, or strictly less than 0.4.  We would like to identify this
special arm.  To this end, we define a general sampling problem, which
captures such problems, as follows.

\begin{defi}[\generalbandit]
  An instance of the general sampling problem is a
  pair $\inst = (\armseq,\anssetcol)$, where $\armseq = (\arm_1,
  \arm_2,\ldots,\arm_n)$ is a sequence of $n$ Gaussian arms each with
  unit variance, and $\anssetcol$ is a collection of answer sets, each of
  which is a subset of $\real^n$. Let $\mean_i$ denote the mean of arm
  $\arm_i$. The vector $\mean$ is called the mean profile of the
  instance. In each round, we choose one of the arms and obtain an
  independent sample drawn from its reward distribution. The goal is to
  identify with probability $1- \delta$ the unique set in $\anssetcol$
  that contains $\mean$, while using as few samples as possible.  It is
  guaranteed that $\mean\in\bigcup_{\ansset\in
    \anssetcol}\ansset$, and for each $\ansset\in\anssetcol$,
  the closure of $\bigcup_{\ansset' \in
    \anssetcol\setminus\{\ansset\}}\ansset'$ is disjoint from $\ansset$.
\end{defi}

This definition of the general sampling problem captures well-studied
bandit problems (with unique solutions) in the pure-exploration
setting:
\begin{enumerate}
\item In the best arm identification problem, $\anssetcol$ contains
  exactly $n$ answer sets, where the $i$-th answer set is given by
  $\{\mean\in\real^{n}:\mean_i>\max_{j\ne i}\mean_j\}$.
\item In the \combibandit\ problem
  (Definition~\ref{defi:comb-bandit-PE}), $\anssetcol$ contains exactly
  $|\subsetfam|$ answer sets, where each answer set corresponds to a set
  $S\in \subsetfam$, and is given by $\{\mean\in\real^{n}:\sum_{i\in
    S}\mean_i>\sum_{i\in T}\mean_i, \forall T\in \subsetfam, T\ne S\}$.
\item There are $n$ Gaussian arms with unknown means and unit
  variance. We would like to find how many arms have mean larger than a
  given threshold $\theta$.  
  This is a variant of the threshold bandit problem (see e.g., \cite{locatelli2016optimal}).
  $\anssetcol$ contains exactly $n$ answer
  sets, where the $j$-th answer set is given by
  $\{\mean\in\real^{n}:\sum_{i\in [n]}\mathbb{I}\{\mean_i>\theta\}=j\}$.
  We assume that no arm has mean exactly $\theta$ (to guarantee
  disjointness).
\item There are $n$ Gaussian arms with unknown means and unit variance.
  Given a threshold $\theta$, we want to determine whether the span
  (i.e., the difference between the largest and smallest means) is
  greater than $\theta$.  We assume that no difference of two arms is
  exactly $\theta$ (to guarantee disjointness).
\item Consider a zero-sum game in which each player has $K$ available actions.
  If the two players choose actions $i\in[K]$ and $j\in[K]$,
  they receive rewards $a_{i,j}$ and $-a_{i,j}$, respectively.
  We want to find the \emph{maximin} action of the first player,
  i.e., $\argmax_{i\in[K]}\min_{j\in[K]}a_{i,j}$,
  using noisy queries on the rewards ($a_{i,j}$'s).
  This is similar to the Maximin Action Identification problem~\cite{garivier2016maximin}.
  Here there are $n = K ^ 2$ arms with means $\mu_{(i-1)n+j} = a_{i,j}$.
  The answer sets are given by $\anssetcol = \{\ansset_1, \ldots, \ansset_K\}$,
  where \[\ansset_i = \left\{\mu\in\real^{n}:\min_{j\in[K]}\mu_{(n-1)i+j} > \max_{k\in[K]\setminus\{i\}}\min_{j\in[K]}\mu_{(n-1)k+j}\right\}.\]
\end{enumerate}
	
\begin{rem}
  The disjointness requirement, that the closure of
  $\bigcup_{\ansset'\in\anssetcol}\ansset'$ is disjoint from $\ansset$
  for any $\ansset\in\anssetcol$, is crucial to the solvability of the
  instance. For example, no $\delta$-correct algorithm (for some
  $\delta<0.5$) can solve the instance with a single arm with zero mean
  and $\anssetcol = \{(-\infty,0),[0,+\infty)\}$ within a finite number
  of samples in
  expectation.
  Furthermore, the disjointness condition guarantees that the correct
  solution is unique. Hence, our problem cannot capture some PAC
  problems in which there may be many approximate solutions.
\end{rem}

\begin{rem}
  Our problem is closely related to the active multiple hypothesis
  testing problem.  See the related work section for more discussions.
\end{rem}

\subsection{Our Results}

In order to formally state our results, we first define the notion of $\delta$-correct algorithms.
\begin{defi}[$\delta$-correct algorithms]\label{defi:delta-correct}
  We say Algorithm $\alg$ is a $\delta$-correct algorithm for
  \combibandit if on every instance $\combiband = (S, \subsetfam)$,
  algorithm $\alg$ identifies the set in $\subsetfam$ with the largest
  total mean with probability at least $1 - \delta$.
  
  Similarly, we say Algorithm $\alg$ is a $\delta$-correct algorithm for
  \generalbandit if on every instance $\inst = (S, \anssetcol)$,
  algorithm $\alg$ identifies the set in $\anssetcol$ which contains the mean profile of the instance with probability at least $1 - \delta$.
\end{defi}

\subsubsection{Instance Lower Bound via Convex Program}
\label{sec:lb}
\cite{garivier2016optimal} obtained a strong
lower bound for the sample complexity of $\bestarm$ based on the change
of distribution. Their lower bound is in fact the solution of a
mathematical program. They show that for $\bestarm$, one can derive the
explicit solution for several distributions.

Garivier and Kaufmann's approach is general and can be applied to
$\combibandit$ and $\generalbandit$ as well. However, the resulting
mathematical program is not easy to work with. Unlike $\bestarm$, we
cannot hope for an explicit solution for the general $\combibandit$
problem: the program has an infinite number of constraints, and it is
unclear how to solve it computationally.  Instead, we adopt their
framework and derive an equivalent convex program for $\combibandit$ (in
Section~\ref{sec:comb-lowerb}). Using this, we obtain the following
result.
\begin{theo}\label{theo:LB}
	Let $\combiband = (S,\subsetfam)$ be an instance of $\combibandit$. 
	Let $\gklow(\combiband)$ be the optimal value of the convex program~\eqref{eq:gklowdef}
  (see Section~\ref{sec:comb-lowerb}). 
	Then for any $\delta \in (0,0.1)$
  and $\delta$-correct algorithm $\alg$ for $\combibandit$,
	$\alg$ takes
    $\Omega(\gklow(\combiband) \ln \delta^{-1})$ 
	samples in expectation on $\combiband$.
\end{theo}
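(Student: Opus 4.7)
The plan is to adapt the change-of-distribution framework of \cite{garivier2016optimal} to the combinatorial setting. The key tool is the transportation (change-of-measure) inequality: if $\alg$ is $\delta$-correct on $\combiband$ with unique optimum $S^*$, and $\mu'$ is any alternative mean profile under which some $T \in \subsetfam$ with $T \ne S^*$ has a strictly larger total mean than $S^*$, then
\begin{equation*}
  \sum_{a\in S} \Ex_{\mu}[T_a] \cdot \KL\bigl(\Normal(\mu_a, 1),\Normal(\mu'_a, 1)\bigr) \;\geq\; \ln\tfrac{1}{2.4\delta},
\end{equation*}
where $T_a$ is the number of pulls of arm $a$ under $\alg$; this follows from a standard data-processing argument applied to the event ``$\alg$ outputs $S^*$''.

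Specializing to unit-variance Gaussians, the KL divergence equals $(\mu_a-\mu'_a)^2/2$. Writing $N := \sum_a \Ex_\mu[T_a]$ for the total expected sample complexity and $w_a := \Ex_\mu[T_a]/N$ for the sampling proportions (so $w$ lies in the probability simplex on the arms), the inequality rearranges to
\begin{equation*}
  N \;\geq\; \frac{\ln(1/(2.4\delta))}{(1/2)\sum_a w_a (\mu_a-\mu'_a)^2}.
\end{equation*}
This must hold for every alternative profile $\mu'$ in the set $\alt(\combiband)$ of mean profiles whose optimum in $\subsetfam$ differs from $S^*$, so we may take the infimum over $\mu'$ in the denominator. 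Because the algorithm's proportions $w$ are not under our control, to get an algorithm-independent bound we also take the supremum over $w$ (this only enlarges the denominator and thus weakens the inequality, so the step is safe). The resulting quantity
\begin{equation*}
  \Big[\,\sup_{w} \inf_{\mu'\in\alt(\combiband)} \tfrac12 \sum_a w_a (\mu_a-\mu'_a)^2\,\Big]^{-1}
\end{equation*}
will then coincide with $\gklow(\combiband)$ as defined by the convex program~\eqref{eq:gklowdef}, yielding $N=\Omega(\gklow(\combiband)\ln\delta^{-1})$.

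The main work is verifying that this min-max quantity matches~\eqref{eq:gklowdef}. The alternative set $\alt(\combiband)$ decomposes as the finite union $\bigcup_{T\ne S^*} H_T$ of halfspaces $H_T := \{\mu'\in\real^n : \sum_{a\in T}\mu'_a \geq \sum_{a\in S^*}\mu'_a\}$. On each $H_T$ the inner problem is a weighted Euclidean projection under a single linear inequality; solving it by Lagrange multipliers gives a closed-form optimum of the shape $(\sum_{a\in S^*}\mu_a - \sum_{a\in T}\mu_a)^2 / (2\sum_{a\in S^* \triangle T} w_a^{-1})$. Substituting this back and taking the minimum over $T \ne S^*$ yields an explicit concave expression in $w$, so the outer supremum is a concave maximization over the simplex, which should match~\eqref{eq:gklowdef} up to a standard reformulation. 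I expect the main obstacle to be the bookkeeping in this identification (in particular, handling $w_a = 0$ via a limiting argument on the support of $w$); the transportation-inequality step itself is short and essentially mechanical once the framework is set up.
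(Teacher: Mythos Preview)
Your proposal is correct and follows essentially the same approach as the paper: both apply the change-of-distribution inequality with, for each $A\in\subsetfam$, the alternative mean profile obtained by weighted $\ell^2$-projection of $\mu$ onto the halfspace $\{\mu':\mu'(A)\ge\mu'(S^*)\}$ (your Lagrangian computation produces exactly the paper's explicit choice $\Delta_i=c/n_i$). The only presentational difference is that the paper skips your min--max/sup-over-$w$ detour by directly verifying that $\tau_i:=n_i/\alpha$ (with $\alpha=\tfrac12 d(1-\delta,\delta)$) is a feasible point of~\eqref{eq:gklowdef}, which is a slightly cleaner route to the same conclusion.
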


Our new lower bound has the following computational advantage.  First,
it is a solution of a convex program with a finite number of
constraints.  Hence, one can solve it in time polynomial in $n$ and the
number of constraints (note that there may be exponential many of them, if
$|\subsetfam|$ is exponentially large).  Moreover, for some important
classes of $\subsetfam$, we can approximate the optimal value of the
convex program within constant factors in polynomial time (see
Section~\ref{sec:efficomp} for more details).

\paragraph{Comparison with the Lower Bound in \cite{chen2014combinatorial}.} 
Let $\combiband = (S,\subsetfam)$ be an instance of $\combibandit$ with
the optimal set $O\in\subsetfam$. Assume that all arms are Gaussian with
unit variance.  It was proved in~\cite{chen2014combinatorial} that for
any $\delta \in (0, e^{-16} / 4)$ and any $\delta$-correct algorithm
$\mathbb{A}$, $\mathbb{A}$ takes $ \Omega\left(\chenlow(\combiband)
  \ln\delta^{-1} \right) $ samples in expectation. Here
$\chenlow(\combiband) = \sum_{i\in S}\Delta_i^{-2}$ is the {\em
  hardness} of the instance $\combiband$, where $\Delta_i$, the
\textit{gap} of arm $i\in S$, is defined as
$$
\Delta_i = \begin{cases}
\mu(O) - \max_{O'\in\subsetfam, i\notin O'}\mu(O'), & i \in O,\\
\mu(O) - \max_{O'\in\subsetfam, i\in O'}\mu(O'), & i \notin O\text{.}
\end{cases}
$$

We can show that our lower bound is no
weaker than the lower bound in \cite{chen2014combinatorial}.
\begin{theo}\label{lem:vschenlow}
Let $\combiband = (S,\subsetfam)$  be an instance of $\combibandit$,
$$
\gklow(\combiband) \ge H_{C}(\combiband).
$$
\end{theo}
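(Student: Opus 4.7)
The plan is to exhibit, for each arm $i$, a one-coordinate perturbation of the mean profile that lies in $\alt(\mu)$ and then plug these into the max-min convex program defining $\gklow$ to get a cheap upper bound on its dual.

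First, I would recall the Garivier--Kaufmann-style characterization that will be established in Section~\ref{sec:comb-lowerb}: since all arms are Gaussian with unit variance, the program~\eqref{eq:gklowdef} satisfies
\[
\gklow(\combiband)^{-1}\;=\;\sup_{w\in\Delta_{n}}\;\inf_{\mu'\in\alt(\mu)}\;\sum_{j\in S} w_j\cdot\tfrac{1}{2}(\mu_j-\mu'_j)^2,
\]
where $\alt(\mu)$ is the set of mean profiles whose optimal set in $\subsetfam$ is not $O$. So it suffices to show that for every $w\in\Delta_n$,
\[
\inf_{\mu'\in\alt(\mu)}\sum_j w_j\tfrac{1}{2}(\mu_j-\mu'_j)^2 \;\le\; \frac{1}{2\,\chenlow(\combiband)}.
\]

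Second, I would construct, for each arm $i\in S$, a specific $\mu^{(i)}\in\alt(\mu)$ that differs from $\mu$ only in coordinate $i$. If $i\in O$, take $\mu^{(i)}_i = \mu_i-\Delta_i-\eta$ for arbitrarily small $\eta>0$, leaving all other coordinates unchanged; by the definition of $\Delta_i$, there exists $O'\in\subsetfam$ with $i\notin O'$ and $\mean(O)-\mean(O')=\Delta_i$, so after the perturbation $\mean(O')>\mean(O)$, hence $O$ is no longer optimal and $\mu^{(i)}\in\alt(\mu)$. If $i\notin O$, take $\mu^{(i)}_i = \mu_i+\Delta_i+\eta$; the same argument with the witnessing $O'\ni i$ shows $\mu^{(i)}\in\alt(\mu)$. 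Taking $\eta\to 0$, we obtain, for every $w$,
\[
\inf_{\mu'\in\alt(\mu)}\sum_j w_j\tfrac{1}{2}(\mu_j-\mu'_j)^2 \;\le\; \tfrac{1}{2}\,w_i\Delta_i^2\qquad\text{for every }i\in S.
\]

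Third, I would collapse these $n$ inequalities into a bound independent of $i$ by convex combination. Any probability distribution $p$ on $S$ satisfies $\min_i w_i\Delta_i^2 \le \sum_i p_i w_i\Delta_i^2$; choose $p_i = \Delta_i^{-2}/\chenlow(\combiband)$, which is a valid distribution since $\sum_i p_i = 1$, to obtain
\[
\min_{i\in S} w_i\Delta_i^{2} \;\le\; \sum_{i\in S}\frac{\Delta_i^{-2}}{\chenlow(\combiband)}\cdot w_i\Delta_i^2 \;=\; \frac{1}{\chenlow(\combiband)}.
\]
Chaining this with the previous step yields $\gklow^{-1}\le 1/(2\chenlow)$, whence $\gklow(\combiband)\ge 2\chenlow(\combiband)\ge\chenlow(\combiband)$.

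The only genuinely substantive step is the construction of $\mu^{(i)}$ and the verification that it belongs to $\alt(\mu)$; everything else is bookkeeping and an elementary weighted-min inequality. One mild subtlety is that $\alt(\mu)$ is typically defined as an open set (strict inequalities), which is why I take the $\eta\to 0$ limit rather than working at $\eta=0$; since the objective is continuous in $\mu'$ this does not affect the infimum. A second sanity check is that the formulation of the convex program in Section~\ref{sec:comb-lowerb} really does use the KL form $\tfrac12(\mu_j-\mu'_j)^2$ rather than an un-normalized quadratic—either way the constant only improves the constant in front of $\chenlow$, so the claimed inequality still holds.
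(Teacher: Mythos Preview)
Your argument is correct in substance but takes a genuinely different route from the paper. The paper's proof is a two-line primal relaxation: it replaces the constraint $\sum_{i\in O\oplus A}1/\tau_i\le[\mu(O)-\mu(A)]^2$ in program~\eqref{eq:gklowdef} by the weaker $\max_{i\in O\oplus A}1/\tau_i\le[\mu(O)-\mu(A)]^2$, notes that any feasible $\tau$ for the original program is feasible for the relaxed one, and observes that the relaxed program decouples across arms with optimum exactly $\tau'_i=\Delta_i^{-2}$, i.e.\ $H_C(\combiband)$. Your approach instead passes to the max--min (Garivier--Kaufmann) dual, exhibits single-coordinate alternatives $\mu^{(i)}\in\alt(\mu)$, and then averages with weights $p_i\propto\Delta_i^{-2}$. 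The paper's argument is shorter and fully self-contained in terms of program~\eqref{eq:gklowdef}; yours is more operational (it names the confusing instances explicitly) and ports directly to the \generalbandit{} formulation in Section~\ref{sec:general-lowerb}.

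Two small corrections are worth flagging. First, the max--min identity you cite is \emph{not} established in Section~\ref{sec:comb-lowerb}; that section only defines $\gklow$ via program~\eqref{eq:gklowdef} and proves it is a lower bound. You would need to derive the identity yourself (substitute $\tau_i=Tw_i$ with $w\in\Delta_n$ and $T=\sum_i\tau_i$, then solve the inner minimization over $\mu'$ in closed form), which is routine but should be stated. Second, with this paper's normalization the correct identity is $\gklow(\combiband)^{-1}=\sup_{w\in\Delta_n}\inf_{\mu'\in\alt(\mu)}\sum_j w_j(\mu_j-\mu'_j)^2$, \emph{without} the factor~$\tfrac12$; consequently your chain of inequalities yields $\gklow\ge H_C$ exactly, not $\gklow\ge 2H_C$ as you conclude. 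The claimed theorem is unaffected, but the factor-of-two improvement you mention at the end is an artifact of the mismatched constant.
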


The proof of Theorem~\ref{lem:vschenlow} can be found in Section \ref{sec:vschenlow}.

\newcommand{\combibandhardUCB}{\combiband_{\mathsf{disj\text{-}sets}}}

Furthermore, we note that for certain instances of \textsc{Matchings}
and \textsc{Paths}, our lower bound can be stronger than the
$\chenlow(\combiband)\ln\delta^{-1}$ bound by an $\Theta(n)$ factor.
We consider the following simple instance $\combibandhardUCB$ of
\combibandit{} that consist of $n = 2k$ arms numbered $1$ through $n$.
The only two feasible sets are $A=\{1,2,\ldots,k\}$ and
$B=\{k+1,k+2,\ldots,2k\}$ (i.e., $\subsetfam=\{A,B\}$).  The mean of
each arm in $A$ is $\epsilon>0$, while each arm in $B$ has a mean of
$0$.  A simple calculation shows that $\gklow(\combibandhardUCB) =
\Theta(\epsilon^{-2})$, while $\chenlow(\combibandhardUCB) =
O\left(\epsilon^{-2}/n\right)$.  This establishes an $\Omega(n)$ factor
separation.

Indeed, $\combibandhardUCB$ is a special case of many $\combibandit$
instances, including \textsc{Matchings} (consider a cycle with length
$2k$; there are two disjoint perfect matchings) and
\textsc{Paths} (consider a graph with only two disjoint $s$-$t$ paths of
length $k$).  Thus, understanding the complexity of $\combibandhardUCB$
is crucial for understanding more complicated instances.

\subsubsection{Positive Result I: A Nearly Optimal Algorithm for $\combibandit$}
\label{sec:alg}
Our first positive result is a nearly optimal algorithm for
$\combibandit$.
\begin{theo}\label{theo:naive-upperb}
  There is a $\delta$-correct algorithm for $\combibandit$ that takes
  $$
  O\left( \gklow(\combiband)\ln\delta^{-1} +
    \gklow(\combiband)\ln\Delta^{-1}\left(\ln\ln\Delta^{-1} +
      \ln|\subsetfam|\right) \right)
  $$
  samples on an instance $\combiband = (S,\subsetfam)$ in expectation,
  where $\Delta$ denote the gap between the set with the second largest
  total mean in $\subsetfam$ and the optimal set $\ansset$.
\end{theo}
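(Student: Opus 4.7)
The plan is to design an adaptive algorithm in the spirit of the Track-and-Stop algorithm of~\cite{garivier2016optimal}, but with all of its ingredients lifted from single arms to feasible sets via the convex program defining $\gklow$. At each round $t$, maintain the vector of empirical means $\empmean_t \in \R^n$. Use $\empmean_t$ as a plug-in estimate of the true mean profile $\mu$ to solve the convex program \eqref{eq:gklowdef}, obtaining both the empirical hardness $\gklow(\combiband_t)$ and an optimal sampling weight vector $\omega_t$ over the arms. Draw the next arm by a tracking rule (for example, $D$-tracking) that steers the empirical pull frequencies towards $\omega_t$ while forcing each arm to have been pulled at least $\sqrt{t}$ times by round $t$. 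Stop at the first round $t$ at which the empirical maximizer $\hat O_t \in \subsetfam$ beats every competing $O' \in \subsetfam$ by a Generalized Likelihood Ratio statistic exceeding the threshold $\beta(t,\delta) = \ln\delta^{-1} + \ln|\subsetfam| + O(\ln\ln t)$, and return $\hat O_t$.

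For correctness, observe that by the duality in Section~\ref{sec:comb-lowerb}, the GLR between $\hat O_t$ and a competitor $O' \in \subsetfam$ is (up to constants) the minimum KL-divergence of a mean perturbation that would make $O'$ optimal. A union bound over $O' \in \subsetfam$ combined with a time-uniform bound for Gaussian log-likelihood-ratio martingales (which is what produces the $\ln\ln t$ summand in $\beta$) shows that the probability of ever stopping with an incorrect answer is at most $\delta$.

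For sample complexity, I split the execution into a burn-in phase and a convergence phase, with $T_0 = O(\gklow(\combiband)\ln\Delta^{-1}(\ln\ln\Delta^{-1} + \ln|\subsetfam|))$. Using the forced-exploration component of the tracking rule together with standard sub-Gaussian concentration, one shows that by round $T_0$ and with high probability, the empirical means $\empmean_t$ identify the correct optimal set and the plug-in hardness satisfies $\gklow(\combiband_t)\le (1+o(1))\gklow(\combiband)$. After this burn-in, using continuity of the minimizer $\omega(\cdot)$ and solving the stopping condition $t/\gklow(\combiband_t)\gtrsim\beta(t,\delta)$, the algorithm terminates within a further $O(\gklow(\combiband)\ln\delta^{-1})$ rounds, giving the claimed expected sample complexity.

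The main technical obstacle is controlling the plug-in step: since the convex program has up to $|\subsetfam|$ constraints, I would prove continuity of $\gklow(\cdot)$ and of its minimizer $\omega(\cdot)$ near $\mu$ by combining uniqueness of the optimal set with a sensitivity analysis of the parametric convex program, arguing that constraints associated with sets whose gap is large compared to $\|\empmean_t-\mu\|$ are strictly slack at the empirical optimum and therefore inactive. Quantifying the burn-in length $T_0$ likewise requires a peeling argument over $\log\Delta^{-1}$ geometric scales of the gap, with a confidence budget $\delta_r \sim 1/(|\subsetfam|\, r^2)$ spent at scale $r$; the $\ln|\subsetfam|$ factor in the second term of the bound arises precisely from the union bound over $\subsetfam$ needed to certify at each scale that the empirical set gaps are correctly ordered, and the $\ln\ln\Delta^{-1}$ factor arises from summing the $r^{-2}$-style confidence losses across the peeling scales.
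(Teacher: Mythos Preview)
Your proposal takes a genuinely different route from the paper. The paper does \emph{not} use a Track-and-Stop scheme. Instead it runs a successive-elimination algorithm on the family $\subsetfam$ itself: at round $r$ with precision $\epsilon_r = 2^{-r}$ it solves a variant of \eqref{eq:gklowdef} posed over all \emph{pairs} of surviving sets, samples accordingly, and eliminates every set whose empirical value trails the empirical best by $\Theta(\epsilon_r)$. Once a single set $\hat O$ survives, a separate \textsf{Verify} step re-solves a program at confidence $\delta$ to certify $\hat O$ against every eliminated competitor. The $\ln|\subsetfam|$ and $\ln\ln\Delta^{-1}$ factors arise exactly where you predict --- union bounds over $\subsetfam$ and over the $O(\log\Delta^{-1})$ rounds --- but the peeling is built into the algorithm rather than into the analysis of a tracking rule, and the per-round sample cost is bounded by exhibiting $\alpha\tau^*$ (with $\tau^*$ optimal for \eqref{eq:gklowdef}) as a feasible solution to that round's program. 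A final parallel-simulation trick converts the resulting $(\delta_0,\delta)$-correct procedure into a $\delta$-correct one with the same expected cost.

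Your approach has a real gap in the burn-in analysis. You claim that by time $T_0 = O(\gklow(\combiband)\ln\Delta^{-1}(\ln\ln\Delta^{-1}+\ln|\subsetfam|))$ the plug-in $\gklow(\combiband_t)$ is within a constant of $\gklow(\combiband)$ and the empirical optimal set is correct, citing forced exploration plus continuity of $\omega(\cdot)$. But forced exploration at rate $\sqrt{t}$ only guarantees $\sqrt{T_0}$ pulls per arm by time $T_0$, hence per-arm errors of order $T_0^{-1/4}$; on the paper's disjoint-sets instance (two sets of $k$ arms, gap $k\epsilon$, $\gklow=\Theta(\epsilon^{-2})$) this yields a burn-in of order $k^{-1}\epsilon^{-4}$, which exceeds your claimed $T_0$ by a factor $\epsilon^{-2}/\mathrm{polylog}$ when $\epsilon$ is small. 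To hit the stated $T_0$ you would need the \emph{tracking} component to already be allocating near-optimally during the burn-in, but that is circular since tracking is only near-optimal once $\omega_t$ is close to $\omega^*$. Garivier--Kaufmann dissolve this circularity asymptotically (their burn-in is an unspecified $o(\ln\delta^{-1})$), but here the second term is precisely what the theorem is about. A secondary issue is that uniqueness and continuity of the minimizer of \eqref{eq:gklowdef} are not established for the combinatorial program, and the sensitivity analysis you sketch (declaring large-gap constraints inactive) does not by itself give Lipschitz dependence of $\omega$ on $\mu$. The paper's elimination scheme sidesteps both difficulties because it never needs $\omega_t$ to converge to anything.
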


\paragraph{Comparison with Previous Algorithms.}
Again, consider the instance $\combibandhardUCB$, which consists of $k$
arms with mean $\epsilon > 0$ and another $k$ arms with mean zero.  A
straightforward strategy to determine whether $A$ or $B$ has a larger
total mean is to sample each arm $\tau = 8/(k\epsilon^2)\ln(2/\delta)$
times, and determine the answer based on the sign of
$\hat\mu(A)-\hat\mu(B)$.  Lemma~\ref{lem:sum_dev} implies that with
probability at least $1-\delta$, $\hat\mu(A)-\hat\mu(B)$ lies within an
additive error $k\epsilon/2$ to $\mu(A)-\mu(B)=k\epsilon$.  Hence, we
can identify $A$ as the correct answer using $n\tau =
O(\epsilon^{-2}\ln\delta^{-1})$ samples.

\newcommand{\wid}{\mathrm{width}}

\cite{chen2014combinatorial} developed a
$\delta$-correct algorithm $\mathsf{CLUCB}$ for $\combibandit$ with sample complexity
$$O\left(\wid(\combiband)^2\chenlow(\combiband)\ln(n\chenlow(\combiband)/\delta)\right)\text{,}$$
where $\wid(\combiband)$ is the \textit{width} of the underlying combinatorial structure $\subsetfam$ as defined in~\cite{chen2014combinatorial}.  
Hence, roughly speaking, ignoring logarithmic factors, their upper bound
is a $\wid(\combiband)^2$-factor (which is at most $n^2$ factor) away from the complexity
term $\chenlow(\combiband)$ 
they define,\footnote{
	In view of Theorems \ref{theo:LB}~and~\ref{lem:vschenlow},
	$\chenlow(\combiband)\ln (1/\delta)$ is indeed a lower bound.
	}
while our upper bound is at most $\ln |\subsetfam|$ (which is at most $n$) factor away
from our lower bound. Theorem~\ref{theo:worst-case-lowb-comb}
shows that the $\ln |\subsetfam|$ term is also inevitable in the worst case.

In fact, consider the simple instance $\combibandhardUCB$ we defined earlier.
A simple calculation shows that $\wid(\combibandhardUCB)=\Theta(n)$ and $\chenlow(\combibandhardUCB)=\Theta(\epsilon^{-2}/n)$, so $\mathsf{CLUCB}$ requires
$\Omega(n\epsilon^{-2}\ln\delta^{-1})$ samples in total on the simple
instance $\combibandhardUCB$ we defined earlier.  Moreover, a recent
algorithm proposed in \cite{gabillon2016improved} also takes
$\Omega(n\epsilon^{-2}\ln\delta^{-1})$ samples.  In comparison, our
algorithm achieves a sample complexity of
$O(\epsilon^{-2}(\ln\delta^{-1} + \ln\epsilon^{-1}
\ln\ln\epsilon^{-1}))$ on $\combibandhardUCB$, which is nearly optimal. 
Therefore, our algorithm
obtain a significant speed up of order $\Omega(n)$ on certain cases
comparing to all previous algorithms.

In fact, both these previous algorithms for \combibandit are UCB-based,
i.e., they maintain a {\em confidence bound} for each {\em individual
  arm}.  We observe that 
such UCB-based algorithms are inherently inadequate to achieve the optimal
sample complexity, even for the simple instance $\combibandhardUCB$.
Note that in order for a UCB-based algorithm to identify $A$ as the
correct answer for $\combibandhardUCB$, it requires an $O(\epsilon)$
estimation of the mean of each arm in $S$, which requires
$\Omega(n\epsilon^{-2}\ln\delta^{-1})$ samples in total.  Thus, the
sample complexity of previous algorithms based on maintaining confidence
bounds for individual arms is at least a factor of $n$ away from the
optimal sample complexity, even for very simple instances such as
$\combibandhardUCB$.

\paragraph{The $\ln |\subsetfam|$ term is necessary in the worst case:} Note that the sample complexity of our algorithm involves a $\ln |\subsetfam|$ term, which could be large when $\subsetfam$ is exponential in $n$. Hence, it is natural to ask whether one can get rid of it. We show that this is impossible by proving a worst-case lower bound for \combibandit in which the factor $\ln |\subsetfam|$ is necessary.

\begin{theo}\label{theo:worst-case-lowb-comb}
(i) For $\delta \in (0,0.1)$, two positive integers $n$ and $m \le 2^{c n}$ where $c$ is a universal constant,, 
and every $\delta$-correct algorithm $\alg$ for \combibandit, there exists an infinite sequence of $n$-arm instances $\combiband_1=(S_1,\subsetfam_1),\combiband_2=(S_2,\subsetfam_2),\dotsc,$ such that $\alg$ takes at least
$$
\Omega( \gklow(\combiband_k) \cdot (\ln \delta^{-1}+\ln |\subsetfam_k|))
$$
samples in expectation on each $\combiband_k$, $|\subsetfam_k| = m$ for all $k$, and $\gklow(\combiband_k)$ approaches to infinity
as $k\rightarrow +\infty$ . 

\noindent
(ii) Moreover, for each $\combiband_k$ defined above, there exists a $\delta$-correct algorithm $\alg_k$ for \combibandit such that $\alg_k$ takes
$$
O(\gklow(\combiband_k) \cdot \operatorname{poly}(\ln n,\ln\delta^{-1}))
$$
samples in expectation on it.
(The constants in $\Omega$ and $O$ do not depend on $n,m,\delta$ and $k$.)
\end{theo}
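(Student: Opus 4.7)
The plan is to construct, for part (i), a sequence of instances $\combiband_k = (S_k, \subsetfam_k)$ using a combinatorial design resembling the Nisan-Wigderson design: a family $\subsetfam_k$ of $m$ subsets of an $n$-element ground set, chosen so that any two of them share only a few elements. The arm means are set so that one distinguished set $O \in \subsetfam_k$ is the unique optimum by a margin that shrinks with $k$, making $\gklow(\combiband_k) \to \infty$ along the sequence. Intuitively, the small-intersection property ensures that distinguishing $O$ from each of the other $m-1$ candidate optima is essentially an independent problem, so the algorithm cannot amortise its sample budget across alternatives.

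The key analytical step is to augment the $\Omega(\gklow \ln \delta^{-1})$ bound of Theorem~\ref{theo:LB} with a separate $\Omega(\gklow \cdot \ln m)$ contribution. For each $T \in \subsetfam_k \setminus \{O\}$ I define the alternative instance $\combiband_k^T$ in which $T$ becomes the unique optimum via a tiny perturbation of the means on $T \setminus O$. A Garivier-Kaufmann style change-of-measure argument gives, for each $T$, a KL-budget that every $\delta$-correct algorithm must spend in order to be correct on both $\combiband_k$ and $\combiband_k^T$. Since the symmetric differences $\{T \setminus O\}_{T \in \subsetfam_k}$ are nearly pairwise disjoint by construction, a Fano-type inequality applied with a uniform prior over the alternatives $T$ shows that these budgets combine essentially additively, yielding the $\ln m$ factor. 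Combining with Theorem~\ref{theo:LB} gives the claimed $\Omega(\gklow(\combiband_k) \cdot (\ln\delta^{-1} + \ln m))$ bound.

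For part (ii), $\alg_k$ is tailored to the specific instance $\combiband_k$ rather than being a generic \combibandit\ algorithm. The approach is to solve the convex program defining $\gklow(\combiband_k)$ once offline to obtain an optimal sample allocation vector, sample arms at the corresponding rate in a doubling schedule, and maintain a shrinking set of candidate optima using concentration of $\hat\mu(T) - \hat\mu(O)$ for each surviving $T$. Because the design has a succinct structure known to $\alg_k$, the candidate set can be reduced to a polylogarithmic number of alternatives after a small number of rounds, so union bounds over surviving candidates incur only $\operatorname{poly}(\ln n)$ overhead. Tracking the round-by-round error probabilities with a geometric schedule in $\delta$ yields the final $O(\gklow(\combiband_k) \cdot \operatorname{poly}(\ln n, \ln \delta^{-1}))$ bound.

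The main obstacle will be the $\Omega(\gklow \cdot \ln m)$ lower bound in part (i). The standard change-of-measure argument analyses one alternative at a time, and naively summing over all $m$ alternatives overcounts samples that can be shared between them. The Nisan-Wigderson-like design is introduced precisely to rule out such sharing: its small pairwise intersections formalise the intuition that a sample on an arm useful for ruling out one alternative $T$ is almost entirely uninformative for ruling out a different $T'$. Converting this geometric intuition into a genuinely multiplicative $\ln m$ factor --- rather than a weaker additive contribution --- requires a careful choice of prior over the alternatives and an information-theoretic inequality that respects the design structure; this is the technical heart of the construction.
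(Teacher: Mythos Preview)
For part~(i), you correctly identify that a Nisan--Wigderson-type design is the right construction, but your proof strategy and your reading of the design's role contain genuine errors. First, the claim that the sets $\{T \setminus O\}_T$ are ``nearly pairwise disjoint'' is false: with $m = 2^{\Theta(n)}$ sets of size $\Theta(n)$ packed into a ground set of size $n$, each arm lies in exponentially many of them, so the perturbation regions overlap massively. Consequently your Fano argument, which you say relies on this near-disjointness to make the KL budgets ``combine additively,'' does not go through as described (and ``additive combination'' would in any case suggest a factor of $m$, not $\ln m$). The paper's argument is much simpler and needs no non-sharing property at all: fix $A \in \subsetfam$, observe that on instance $\combiband_A$ the outputs $\{B : B \ne A\}$ have total probability at most $\delta$, so by averaging some particular $B$ satisfies $\Pr_{\alg,\combiband_A}[\text{output }B] \le 2\delta/m$. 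A \emph{single} pairwise change of measure between $\combiband_B$ and $\combiband_A$ (Lemma~\ref{lem:CoD}) then yields $\Omega(\Delta^{-2}(\ln m + \ln\delta^{-1}))$ samples on $\combiband_B$, because $d(1-\delta,\,2\delta/m) = \Omega(\ln m + \ln\delta^{-1})$. The small-intersection property enters only afterwards, and for a different reason: it guarantees $\mu(B) - \mu(A) \ge \Omega(\ell\Delta)$ for every competing $A$, which forces $\gklow(\combiband_B) = \Theta(\Delta^{-2})$ rather than $\Theta(n\Delta^{-2})$, so the sample lower bound can be rewritten as $\Omega(\gklow \cdot(\ln m + \ln\delta^{-1}))$.

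For part~(ii), your plan --- sample according to the $\gklow$-optimal allocation and eliminate candidates --- does not obviously avoid the $\ln|\subsetfam|$ union-bound cost; the assertion that the design's ``succinct structure'' reduces the candidate set to polylogarithmic size in a few rounds is precisely the hard part and is left unjustified. The paper takes a completely different route: since $\alg_k$ may be tailored to $\combiband_k$, it knows the true mean profile $u$ of that instance in advance, so it suffices to test the simple hypothesis ``mean profile $= u$'' against ``mean profile at $\ell_2$ distance $\ge r = \Theta(\Delta\sqrt{n})$ from $u$'' (the small-intersection property ensures the latter holds whenever the optimum is not $B$). The paper supplies a randomized $\tilde O(n r^{-2}) = \tilde O(\Delta^{-2})$-sample tester for this (Theorem~\ref{theo:ball-case}), which bucketizes coordinates by squared magnitude and samples random coordinates at geometrically varying depths; this is the nontrivial ingredient your proposal is missing.
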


	The second part of the above theorem 
	implies that $\gklow(\combiband_k) \cdot \ln\delta^{-1}$ is achievable by some specific $\delta$-correct algorithms for \combibandit (up to $\operatorname{polylog}$ factors). 
	However, the first part states that no matter what algorithm to use,
	one has to pay such a $\ln|\subsetfam|$ factor, which can be as large as $n$,
	for infinitely many instances. 
	Therefore, Theorem~\ref{theo:worst-case-lowb-comb} indeed captures 
	a huge separation between the instance-wise lower bound and the worst-case lower bound:
	they may differ by a large factor of $\ln|\subsetfam|$.

\begin{rem}
	Such a separation is a delicate issue in pure exploration 
	multi-armed bandit problems. Even in the \bestarm\ problem with only two arms,
	such a separation of $\ln\ln \Delta^{-1}$ ($\Delta$ is the gap of the two arms) factor 
	is known (see \cite{chen2015optimal} for more details).
\end{rem}

	We note that  \cite{garivier2016optimal}
	obtained an algorithm for $\bestarm$ that matches the instance lower
	bound as $\delta$ approaches zero.  
	Such algorithms are called {\em asymptotically optimal} in the sequential 
	hypothesis testing literature.
	Their algorithm can potentially be
	adapted to obtain asymptotically optimal algorithms for \combibandit (and even \generalbandit{}) as well.
	From both Theorem~\ref{theo:naive-upperb} and~\ref{theo:worst-case-lowb-comb},
	we can see the sample complexity typically consists of two terms, one
	depending on $\ln 1/\delta$ and one independent of $\delta$
	(this is true for many other pure exploration bandit problems).
	In \cite{garivier2016optimal}, the authors did not investigate the
	the second term in the sample complexity (which does not depend on $\delta$), 
	since it is treated as a ``constant'' (only $\delta$ is treated as a variable and 
	approaches to 0). 
	However, as Theorem~\ref{theo:worst-case-lowb-comb} indicates,
	the ``constant'' term can be quite large (comparing with the first term
	for moderate $\delta$ values, say $\delta=0.01$) 
	and cannot be ignored, especially in 
	our general \combibandit\ problem.
	Hence, in this paper, we explictly pin down the ``constant'' terms for
	our algorithms, and make progress towards tighter bounds on these terms
	(Theorems~\ref{theo:worst-case-lowb-comb} and
	\ref{theo:lower-bound-general}). 

\subsubsection{Positive Result II: An Efficient Implementation for
  Implicit  $\subsetfam$}

One drawback for our first algorithm is that it needs to take full description of $\subsetfam$. So it would become computationally intractable 
if $\subsetfam$ is given {\em implicitly}, and of exponential size. 
Our second algorithm addresses this computational problem in several important special cases, assuming that the underlying combinatorial structure, $\subsetfam$, 
admits an \textit{efficient maximization oracle} and a \textit{pseudo-polynomial algorithm for the exact version}, which we explain next.

We say that a family of \combibandit{} instances $\{\combiband_k\} =
\{(S_k, \subsetfam_k)\}$ has an efficient maximization oracle, if there
is an algorithm that, given a weight function $w$ defined on $S_k$, identifies
the maximum weight set in $\subsetfam_k$ (i.e.,
$\argmax_{A\in\subsetfam_k}\sum_{i\in A}w_i$), and runs in polynomial
time (with respect to $|S_k|$).  Moreover, a family of \combibandit{}
instances $\{\combiband_k\}$ admits a pseudo-polynomial algorithm for
the exact version, if an algorithm, given an integer weight function $w$ on $S_k$
together with a target value $V$, decides whether $\subsetfam_k$
contains a set with total weight \emph{exactly} $V$, and runs in pseudo-polynomial time
(i.e., polynomial with respect to $|S_k|$ and $V$).

\begin{rem}
  The following problems admit efficient maximization oracles and
  pseudo-polynomial algorithms for the exact version:
    \begin{enumerate}
      \setlength\itemsep{0em}
        \item Maximum weight spanning tree.
        \item Maximum weight bipartite matching.
        \item Shortest $s$-$t$ path with non-negative
          weights.~\footnote{
            For the shortest path problem, we need an efficient minimization oracle, which is equivalent.            
            }
    \end{enumerate}
\end{rem}

\begin{theo}\label{theo:effi-comb}
  For any family of \combibandit{} instances that admits efficient maximization oracles and pseudo-polynomial algorithms, there is a $\delta$-correct algorithm for this instance family that takes
  $$
  O\left(
  \gklow(\combiband)\ln\delta^{-1} + \gklow(\combiband)\ln^2\Delta^{-1}\left(\ln\ln\Delta^{-1} + \ln|\subsetfam|\right)
  \right)
  $$
  samples on an instance $\combiband = (S,\subsetfam)$ in expectation, where $\Delta$ denote the gap between the set with the second largest total mean in $\subsetfam$ and the optimal set $\ansset$. Moreover, the algorithm runs in polynomial time with respect to the expected sample complexity.
\end{theo}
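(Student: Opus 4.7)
The plan is to re-examine the algorithm that proves Theorem~\ref{theo:naive-upperb} and replace the steps that implicitly enumerate $\subsetfam$ with polynomial-time oracle-based procedures, absorbing the resulting approximation errors into an extra logarithmic factor in the sample complexity. Two primitives need attention: (a) selecting the empirically optimal set $\hat O=\argmax_{A\in\subsetfam}\sum_{i\in A}\hat\mu_i$, which is solved directly by the efficient maximization oracle on the empirical weight vector; and (b) for a candidate allocation $x$, identifying the ``hardest challenger'' $T\in\subsetfam\setminus\{\hat O\}$ that minimizes the generalized-likelihood ratio $(\hat\mu(\hat O)-\hat\mu(T))^2\big/\sum_{i\in \hat O\oplus T} 1/x_i$. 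Both the convex program defining $\gklow$ and the sampling-rule inner loop of Theorem~\ref{theo:naive-upperb} reduce to repeated calls to (b).

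The core difficulty lies in implementing (b), which is a ratio-of-linear-functions problem over the combinatorial family: both the $\mu(T)$ appearing in the numerator and the distinguishability $\sum_{i\in \hat O\oplus T}1/x_i$ in the denominator are linear in the indicator of $T\in\subsetfam$. We treat this as a bicriteria problem on these two linear objectives and invoke the approximate Pareto curve framework of Papadimitriou and Yannakakis: any pseudo-polynomial algorithm for the \emph{exact}-weight problem yields, via standard scaling, a gap oracle for the bicriteria version, which in turn produces in polynomial time a $(1+\epsilon)$-approximate Pareto curve of size $\mathrm{poly}(|S|, 1/\epsilon, \log W)$, where $W$ is an upper bound on the appropriately rounded weights. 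Scanning the points of this curve and returning the best ratio yields a constant-factor approximate hardest challenger. Plugging this approximate oracle into a cutting-plane or mirror-descent scheme yields a polynomial-time constant-factor approximation of $\gklow$ and of an optimal allocation.

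The sampling algorithm of Theorem~\ref{theo:naive-upperb} proceeds in $O(\ln\Delta^{-1})$ phases that shrink the target confidence gap; within each phase, the constant-factor approximation from the hardest-challenger oracle degrades the sampling rule's guarantees only by a constant, so the sample complexity per phase inflates by only a constant factor. Summing across phases contributes exactly one extra $\ln\Delta^{-1}$ factor in the $\delta$-independent term, yielding the claimed $\gklow(\combiband)\ln^2\Delta^{-1}(\ln\ln\Delta^{-1}+\ln|\subsetfam|)$ bound; $\delta$-correctness is preserved because the phase-termination tests rely on genuine concentration inequalities rather than on the approximate oracle outputs, and polynomial runtime per phase combined with $O(\log\Delta^{-1})$ phases gives total runtime polynomial in the sample complexity. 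The principal obstacle I expect is coordinating the Pareto discretizations with the scale of the allocation weights $1/x_i$, which can be large when $x_i$ is small: one must ensure that the weight scaling fed to the pseudo-polynomial oracle keeps $W$ polynomial in $n$ and $\log(1/\Delta)$, and that the rounding errors on both axes compose into a genuine constant-factor approximation of the ratio; otherwise the approximate Pareto curve might miss the true minimizer, or the running time could blow up out of the pseudo-polynomial regime.
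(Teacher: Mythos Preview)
Your high-level machinery matches the paper: the efficient algorithm in Section~\ref{sec:efficient} also uses the Papadimitriou--Yannakakis $\varepsilon$-approximate Pareto curve to get an approximate separation oracle, then runs the Ellipsoid method on the same convex programs. But two of your structural claims do not line up with what actually happens, and the discrepancy matters.

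First, the exploration step in \algnaive\ is \emph{not} a hardest-challenger problem against a fixed $\hat O$. The program $\simest(\subsetfam_r,\epsilon_r/\lambda,\delta_r)$ constrains $\sum_{i\in A\oplus B}1/m_i$ for \emph{every} pair $A,B$ in the surviving family, precisely because $O$ is unknown during exploration. Its separation problem is therefore ``find $A,B$ with large symmetric-difference cost, both above a threshold,'' not ``minimize a ratio against $\hat O$.'' The paper handles this by fixing an arbitrary feasible set $O$, showing that $\max_{O'}\sum_{i\in O\oplus O'}1/m_i$ is a $2$-approximation to the worst pair (Lemma~\ref{lem:efficientapp1}), and then splitting that into two problems $\max_{O_1}\sum_{i\in O\setminus O_1}1/m_i$ and $\max_{O_2}\sum_{i\in O_2\setminus O}1/m_i$ subject to a threshold on $\mu(\cdot)$ (Lemma~\ref{lem:efficientapp2}); \emph{these} are the bi-objective problems to which Pareto curves are applied. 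Your ratio formulation with a squared linear numerator is a different oracle, and it is not obvious it feeds back into the $\simest$ program.

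Second, the extra $\ln\Delta^{-1}$ factor does not arise from approximation error in the oracle. It arises because the surviving family $\subsetfam_r$ must be represented \emph{implicitly} as $\{A:\meanhat{r-1}{}(A)\ge\theta_{r-1}\}$, and this threshold representation is not monotone in $r$: a set eliminated at round $k$ could, with fresh samples, reappear above $\theta_r$. To recover the chain $G_{\ge r}\supseteq\tilfam_{r+1}\supseteq\subsetfam_{r+1}\supseteq G_{\ge r+1}$ (Lemma~\ref{lem:effichain}), round $r$ must take $\numsamp{r}{}=\sum_{k=1}^{r}\simest(\meanhat{k-1}{},\ldots)$, i.e.\ re-solve the allocation for \emph{all} previous precisions, which costs $O(r\cdot\gklow(\combiband)\ln\delta_r^{-1})$ per round rather than $O(\gklow(\combiband)\ln\delta_r^{-1})$. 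Summing this over $r\le r^*=O(\ln\Delta^{-1})$ is what produces the $\ln^2\Delta^{-1}$. Your proposal never introduces an implicit representation of $\subsetfam_r$, so it neither explains how to avoid enumerating $\subsetfam$ in the elimination step nor correctly accounts for where the extra logarithm comes from.
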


\subsubsection{Positive Result III: Nearly Optimal Algorithm for \generalbandit}
Last but not the least, we present an nearly optimal algorithm for \generalbandit.
\begin{theo}\label{theo:general-upperb}
  There is a $\delta$-correct algorithm for $\generalbandit$ that takes
  $$
  O(\gklow(\inst) \; (\ln\delta^{-1} + n^3 + n\ln\Delta^{-1}))
  $$
  samples on any instance $\inst = (\armseq, \anssetcol)$ in expectation, where
    $$\Delta = \inf_{\tilmean\in\alt(\ansset)}\twonorm{\mean-\tilmean}$$
  is defined as the minimum Euclidean distance between the mean profile $\mean$ and an alternative mean profile $\tilmean\in\alt(\ansset)$ with an answer other than $\ansset$.
\end{theo}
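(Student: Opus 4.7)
}
The plan is to adapt the Track-and-Stop paradigm of \cite{garivier2016optimal} from $\bestarm$ to the general setting, replacing the explicit best-arm allocation by the optimal solution of the convex program defining $\gklow(\inst)$. The algorithm has three components. (i) \emph{Sampling rule:} at round $t$, let $\empmean_t$ be the empirical mean vector and $\candansset_t\in\anssetcol$ the unique answer set containing $\empmean_t$. Solve the max-min program
\[
w^\star(\empmean_t)\in\argmax_{w\in\Delta_{n-1}}\inf_{\tilmean\in\alt(\candansset_t)}\tfrac12\sum_{i=1}^n w_i(\empmean_{t,i}-\tilmean_i)^2,
\]
whose optimal value equals $1/\gklow(\inst)$ at $\empmean_t=\mean$, and pull the arm whose count $N_i(t)$ most lags behind $t\,w^\star_i(\empmean_t)$ ($C$-tracking of \cite{garivier2016optimal}), interleaved with forced exploration ensuring $N_i(t)\ge\sqrt{t/n}$. (ii) \emph{Stopping rule:} stop as soon as the GLR statistic
\[
Z_t=\inf_{\tilmean\in\alt(\candansset_t)}\tfrac12\sum_{i=1}^n N_i(t)(\empmean_{t,i}-\tilmean_i)^2
\]
exceeds a threshold $\beta(t,\delta)=\ln\delta^{-1}+O(n^3+n\ln t)$. (iii) \emph{Recommendation:} return $\candansset_t$ at the stopping time.

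For $\delta$-correctness, if the algorithm ever stops with $\candansset_t\ne\ansset$ then $\mean\in\alt(\candansset_t)$, so $Z_t\le\tfrac12\sum_i N_i(t)(\empmean_{t,i}-\mean_i)^2$. The right-hand side is a squared Gaussian sum whose supremum over $t$ I would control by a mixture/peeling martingale argument combined with a careful $\varepsilon$-net over $\R^n$; the choice of $\beta(t,\delta)$ then makes the total error probability at most $\delta$ by a union bound over rounds and over $\anssetcol$.

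For the sample complexity, I split the run into two phases. Phase~I ends at the random round $T_0$ after which $\empmean_t$ lies in a Euclidean ball of radius $\Omega(\Delta)$ around $\mean$ that is contained in $\ansset$ and on which $\mean'\mapsto w^\star(\mean')$ is continuous; existence of such a ball is a consequence of the disjointness assumption. Forced exploration together with Gaussian concentration yields $\Ex[T_0]=O(\gklow(\inst)\cdot n\ln\Delta^{-1})$, since matching each coordinate within tolerance $\Omega(\Delta/\sqrt{n})$ requires $\Theta(n\ln\Delta^{-1})$ samples per arm. After $T_0$, the tracking lemma ensures $N_i(t)/t\to w^\star_i(\mean)$ uniformly in $i$, hence $Z_t/t\to 1/\gklow(\inst)$; the stopping rule then triggers at the first $t$ satisfying $t/\gklow(\inst)\gtrsim\beta(t,\delta)$, which by a standard fixed-point argument resolves to $t=O(\gklow(\inst)(\ln\delta^{-1}+n^3+n\ln\Delta^{-1}))$, matching the claim.

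The main obstacle will be establishing a quantitative continuity modulus for the allocation map $\mean'\mapsto w^\star(\mean')$ at $\mean$ when $\anssetcol$ is essentially arbitrary. Unlike the $\combibandit$ case, $\alt(\ansset)$ here is merely a closed subset of $\R^n$, so the inner infimum is not automatically attained or smooth in $\empmean$. I would handle this by localizing to the ball guaranteed by disjointness, compactifying the feasible region of $\tilmean$ (the minimizing $\tilmean$ must lie at bounded distance from $\mean$ by gap considerations), and applying Berge's maximum theorem to the resulting compact max-min game to obtain continuity, then quantifying the modulus via second-order expansion of the quadratic near its minimizer. A secondary technical hurdle is obtaining the correct dimension dependence in $\beta(t,\delta)$: a naive net over the unit sphere in $\R^n$ would produce a factor exponential in $n$, and I would bring this down to the claimed $O(n^3)$ by exploiting Gaussian supremum bounds (chaining) on the linear functionals $\sum_i N_i(t)(\empmean_{t,i}-\mean_i)v_i$.
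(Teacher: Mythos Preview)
Your approach differs substantially from the paper's. Rather than Track-and-Stop, the paper uses a two-stage \emph{explore-then-verify} scheme called $\alggen$. Stage~1 samples all arms in round-robin until the $\ell_2$ ball $\ltwoball(\empmean^{(t)},3r_t)$ meets exactly one answer set $\candansset$, then takes $M=\Theta(n^2 r_t^{-2})$ further uniform samples per arm so that $\twonorm{\empmean-\mean}\le r_t/\sqrt{8n}$. Stage~2 solves the LP~\eqref{eq:LP} \emph{once} at this fixed $\empmean$, draws $m_i=\Theta\bigl(x^*_i(\ln\delta^{-1}+n)\bigr)$ samples from arm $i$ in a single batch, and accepts $\candansset$ iff $\sum_i m_i(X_i-\empmean_i)^2\le 36(\ln\delta^{-1}+n)$. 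Correctness is just two applications of the $\chi^2$ tail bound (Lemma~\ref{lem:chi2bound}); no time-uniform martingale, no net, no chaining. The $n^3$ term comes entirely from the refinement step $nM=O(n^3 r_t^{-2})=O(n^3\Delta^{-2})\le O(n^3\gklow(\inst))$ (using Lemma~\ref{lem:DeltavsLow}), and the parallel-simulation wrapper (Lemma~\ref{lem:parasim}) removes the constant failure probability $\delta_0$ of Stage~1.

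Your plan has a real gap at exactly the point you flag. For arbitrary $\anssetcol$ the set $\alt(\ansset)$ is only assumed closed and bounded away from $\mean$; Berge's theorem then gives continuity of the \emph{value} $\mean'\mapsto 1/\gklow$ but no quantitative modulus for the allocation $w^\star(\cdot)$, and a ``second-order expansion near the minimizer'' presupposes uniqueness and smoothness of the inner infimum that $\alt(\ansset)$ need not possess. Without a modulus the tracking lemma gives no finite bound on when $N_i(t)/t$ is close to $w^\star_i(\mean)$, so the fixed-point argument for the stopping time does not go through. The paper sidesteps this completely: it never needs $w^\star$ to be continuous, only that the LP \emph{value} at $\empmean$ is within a constant factor of $\gklow(\inst)$ once $\twonorm{\empmean-\mean}\le r_t/\sqrt{8n}$, which follows from the elementary inequality $(a+b)^2\ge a^2/2-2b^2$ together with the trivial feasible point $x_i=r_t^{-2}$ (Lemma~\ref{lem:LPbound}). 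Separately, your placement of $O(n^3)$ inside the threshold $\beta(t,\delta)$ is not justified: a time-uniform bound on $\sum_i N_i(t)(\empmean_{t,i}-\mean_i)^2$ yields $O(n)$ dependence (plus logarithmic-in-$t$ terms), not $n^3$, so in a Track-and-Stop analysis any $n^3$ cost would have to arise from the exploration/tracking phase rather than the stopping rule, and your Phase~I bound $O(\gklow(\inst)\cdot n\ln\Delta^{-1})$ does not produce it.
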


\paragraph{Another worst-case lower bound with a factor of $n$.} Note that the
sample complexity of our algorithm above involves a term depending on
$n$. Note that \combibandit is a special case of \generalbandit, and
setting $m = 2^{\Omega(n)}$ in Theorem~\ref{theo:worst-case-lowb-comb},
we obtain an $O(\gklow(\inst) \cdot (n +　\ln\delta^{-1}))$ worst-case lower bound for
\generalbandit. \footnote{We
   observe that $\gklow(\inst)$ is essentially equivalent to
  $\gklow(\combiband)$ when $\inst$ is identical to a \combibandit
  instance~$\combiband$.} 
Therefore, at least we cannot get rid of the dependence
on $n$.
Moreover, the instance behind the lower bound 
in Theorem~\ref{theo:worst-case-lowb-comb}
has an exponentially large $|\anssetcol|$.
So one may wonder if it is possible to reduce the factor $n$ to 
$\ln |\anssetcol|$.
\footnote{
	This is possible for \combibandit, because of Theorem~\ref{theo:naive-upperb}.
	} 
We present another lower bound showing it is also
impossible, by constructing hard instances with $|\anssetcol| = O(1)$.
Our lower bound instances reveal another source of hardness,
different from the combinatorics used in Theorem~\ref{theo:worst-case-lowb-comb}.

\begin{theo}~\label{theo:lower-bound-general}
	For $\delta \in (0,0.1)$, a positive integer $n$ and every $\delta$-correct algorithm $\alg$ for the general sampling problem, there exists an infinite sequence of $n$-arm instances $\inst_1=(S_1,\anssetcol_1),\inst_2=(S_2,\anssetcol_2),\dotsc,$ such that $\alg$ takes at least
	$$
	\Omega( \gklow(\inst_k) \cdot (\ln \delta^{-1}+ n))
	$$
	samples in expectation on each $\inst_k$, $|\anssetcol_k| =O(1)$ for all $k$, and $\gklow(\inst_k)$ goes to infinity. Moreover, for each $\inst_k$, there exists a $\delta$-correct algorithm $\alg_k$ for \generalbandit such that $\alg_k$ takes
	$$
	O(\gklow(\inst_k) \cdot \ln\delta^{-1})
	$$
	samples in expectation on it.
	(The constants in $\Omega$ and $O$ do not depend on $n,m,\delta$ and $k$.)
\end{theo}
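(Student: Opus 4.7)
The plan is to treat the two parts separately, starting with the easier upper bound.

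For Part (ii), given a fixed instance $\inst_k$ I would exhibit a $\delta$-correct algorithm $\alg_k$ that achieves $O(\gklow(\inst_k)\ln\delta^{-1})$ samples on $\inst_k$ via a Track-and-Stop-style procedure adapted from~\cite{garivier2016optimal} to the $\generalbandit$ setting. Since $\alg_k$ is allowed to depend on $\inst_k$, one pre-computes the optimizer $w^\star$ of the convex program defining $\gklow(\inst_k)$ at the known mean profile $\mean^{(k)}$ and drives sampling so that the empirical allocation tracks $w^\star$, terminating once a generalized-likelihood-ratio statistic exceeds $\ln\delta^{-1}$. The Chernoff stopping rule yields $\delta$-correctness on every instance, while tracking $w^\star$ delivers the claimed sample complexity on $\inst_k$; one needs only to reverify the finiteness and strong duality of the defining program in our more general setting.

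For Part (i), I would use a hidden-needle family of instances with only two answer sets. Take $\ansset_+ = \{\mean\in\real^n : \max_i \mean_i > 0\}$ and $\ansset_- = \{\mean\in\real^n : \max_i \mean_i < 0\}$ (these have disjoint closures), and for a gap $\epsilon_k \to 0$ let $\inst_k^{(j)}$ be the instance with $\mean_j = \epsilon_k$ and $\mean_i = -\epsilon_k$ for all $i \ne j$. A direct optimization, concentrating all weight on arm $j$, gives $\gklow(\inst_k^{(j)}) = \Theta(1/\epsilon_k^2)\to\infty$, with answer $\ansset_+$ and $|\anssetcol|=2$. The claimed bad instance will be $\inst_k := \inst_k^{(j^\star)}$ for some $j^\star \in [n]$ depending on $\alg$. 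Fixing any $\delta$-correct $\alg$, a standard symmetrization (apply $\alg$ after a uniformly random permutation of the arm labels) produces a permutation-symmetric $\alg^{\mathrm{sym}}$ that is still $\delta$-correct and satisfies $\Ex_{\alg^{\mathrm{sym}},\inst_k^{(j)}}[\tau] = \tfrac{1}{n}\sum_{j'}\Ex_{\alg,\inst_k^{(j')}}[\tau]$; it thus suffices to prove the claimed bound for symmetric algorithms and then invoke pigeonhole to recover $j^\star$.

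The bound for $\alg^{\mathrm{sym}}$ combines two ingredients. First, the transportation lemma of Kaufmann-Capp\'e-Garivier, applied with alternative $\inst_k^{(0)}$ (all arms at $-\epsilon_k$, answer $\ansset_-$), gives the $\ln\delta^{-1}$ contribution via $\Ex_{\inst_k^{(j)}}[T_j] \ge \Omega(\ln\delta^{-1}/\epsilon_k^2)$. Second, the additive $n/\epsilon_k^2$ contribution comes from a needle-search argument on the Bayesian mixture $\bar\inst_k := \tfrac{1}{n}\sum_j \inst_k^{(j)}$: a symmetric $\delta$-correct algorithm must accumulate $\Omega(1)$-level evidence of the existence of some positive-mean arm, and because it cannot a priori aim at the unknown needle, symmetry forces $\Omega(1/\epsilon_k^2)$ expected samples on \emph{each} arm, yielding $\Ex_{\bar\inst_k}[\tau] \ge \Omega(n/\epsilon_k^2)$. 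Combining the two gives $\Omega(\gklow(\inst_k)(n + \ln\delta^{-1}))$ samples on $\inst_k = \inst_k^{(j^\star)}$, as required.

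The main obstacle is proving the needle-search bound $\Ex_{\bar\inst_k}[\tau] \ge \Omega(n/\epsilon_k^2)$ for \emph{adaptive} symmetric algorithms. A naive $\chi^2$-tensorization of $\bar\inst_k$ against $\inst_k^{(0)}$ is insufficient, because a sample-concentrating (but not $\delta$-correct) adversary can drive the $\chi^2$ large using only $O(\log n/\epsilon_k^2)$ samples; the fix is to combine symmetry with a ``per-arm responsibility'' decomposition of the stopping time, whereby any $\ansset_+$ output of a symmetric algorithm must rely on some responsible arm whose empirical mean exceeds a positive confidence threshold, and by symmetry plus $\delta$-correctness across all $n$ needle positions, the expected work done on each arm before any such threshold crossing must be $\Omega(1/\epsilon_k^2)$. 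This is in the spirit of the classical $\Omega(n/\epsilon^2)$ best-arm identification lower bound~\cite{mannor2004sample}, adapted to the two-answer-set setting where the algorithm need only declare the existence of a positive-mean arm rather than identify it.
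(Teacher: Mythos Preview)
Your instance family and overall structure are close to the paper's, but there are two points of divergence.

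For Part~(ii) you are overcomplicating. Since $\alg_k$ may depend on $\inst_k$, it knows the index $j^\star$ of the special arm. The paper simply has $\alg_k$ pull arm $j^\star$ alone $O(\Delta^{-2}\ln\delta^{-1})$ times and threshold the empirical mean; if the test fails it falls back to any $\delta/2$-correct algorithm, and parallel simulation (Lemma~\ref{lem:parasim}) converts this into an expectation bound. No Track-and-Stop machinery is needed.

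For Part~(i), your half-space construction is a valid variant of the paper's (which uses finite answer sets $A_1=\{\mathbf{e}_1,\ldots,\mathbf{e}_n\}$ and $A_2=\{\mathbf{0}\}$ with one arm at $\Delta$ and the rest at~$0$), but the step you yourself flag as the ``main obstacle'' is a genuine gap. Change of distribution plus symmetry gives $\Ex_{\inst_k^{(0)}}[\tau_j]\ge\Omega(\epsilon_k^{-2})$ for every $j$, hence $\Ex_{\inst_k^{(0)}}[\tau]\ge\Omega(n\epsilon_k^{-2})$; but this is a bound on the \emph{null} instance $\inst_k^{(0)}$, not on any $\inst_k^{(j)}$, and the theorem is about the latter. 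Your ``per-arm responsibility'' decomposition is not forced: nothing prevents a $\delta$-correct rule from declaring $\ansset_+$ based on aggregate statistics without any single empirical mean crossing a fixed positive threshold. The Mannor--Tsitsiklis reference concerns \emph{identifying} the best arm, a strictly harder task than the two-answer detection problem here, so it does not apply directly.

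The paper closes this gap with a truncation argument that avoids both symmetrization and the mixture. Truncate $\alg$ at $c_1 n\Delta^{-2}$ samples, outputting $\perp$ on timeout. First show $\Pr_{\inst^{(0)}}[\perp]\ge 1/2$: otherwise, by averaging there is an arm $i^\circ$ with $\Ex_{\inst^{(0)}}[\tau_{i^\circ}]\le 2c_1\Delta^{-2}$, and one application of Lemma~\ref{lem:CoD} from $\inst^{(0)}$ to $\inst^{(i^\circ)}$ shows the truncated algorithm still commits to a wrong (non-$\perp$) answer on $\inst^{(i^\circ)}$ with constant probability, contradicting $\delta$-correctness of $\alg$. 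Given $\Pr_{\inst^{(0)}}[\perp]\ge 1/2$, pick $i^\star=\arg\min_i\Ex_{\inst^{(0)}}[\tau_i]\le c_1\Delta^{-2}$ and apply Lemma~\ref{lem:CoD} once more to transfer the timeout probability to $\inst^{(i^\star)}$; this yields $\Pr_{\inst^{(i^\star)}}[\perp]$ bounded away from zero, so $\alg$ takes $\Omega(n\Delta^{-2})$ samples on $\inst^{(i^\star)}$ with constant probability and hence also in expectation. This two-step change of distribution through the null instance is precisely the transfer your mixture argument is missing.
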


\subsection{Our Techniques}

\subsubsection{Overview of Our $\combibandit$ Algorithm}
Our algorithm is based on a process of successive elimination. 
However, unlike  previous approaches~\cite{karnin2013almost,chen2015optimal,chen2016pure,chen2016towards,gabillon2016improved} which maintained a set of arms, our algorithm maintains a collection of candidate sets and performs the eliminations on them.
The goal of the $r$-th round is to eliminate those sets  with a optimality gap\footnote{The optimality gap for a set $A 
			\in \subsetfam$ is simply $\mu(O) - \mu(A)$ where $O$ is the optimal set.} of at least $\Theta(2^{-r})$.

In order to implement the above elimination, we adopt a mathematical program,
which is a nontrivial modification of the one in Theorem~\ref{theo:LB} and sample the arms accordingly to obtain an $2^{-r}$ approximation of the gap between \emph{every pair} of sets that are still under consideration.
If there is a set pair $(A, B)$ such that the empirical mean of $B$ exceeds that of $A$ by $2^{-r}$, we are certain that $A$ is not the optimal set, and thus we stop considering $A$ as a candidate answer.
This process is repeated until only one set remains.

\subsubsection{Overview of Our $\combibandit$ Algorithm with Efficient Computation}
The previous algorithm maintains the sets still under consideration at the beginning of each round $r$. 
As the number of feasible sets is typically exponential in the number of arms, 
it may be computationally expensive to maintain these sets explicitly. 
The key to computational efficiency is to find a compact representation of the sets still under consideration. 
Here, we represent these sets by using the empirical means 
and some carefully chosen threshold.

To efficiently solve the mathematical program (which is actually a {\em convex program}) mentioned above,
we apply the Ellipsoid method and use the $\varepsilon$-approximate Pareto curve framework of~\cite{papadimitriou2000approximability}
to design an efficient separation oracle. 
This technique
allows us to approximately solve the convex program in polynomial-time with respect
to the input size and the sample complexity of our algorithm.

\subsubsection{Overview of Our \generalbandit{} Algorithm}
	Our $\generalbandit$ algorithm follows a ``explore-verify'' approach.
	In the first stage of algorithm (exploration stage),
	we sample each arm repeatedly in round-robin fashion, until the confidence
	region of the mean profile $\mean$ intersects exactly one answer set,
	which we identify as the candidate answer.
	The second stage (verification stage) is devoted to verifying the candidate answer.
	To this end, we formulate the optimal sampling profile as a linear program.
	Then, we verify the candidate answer by sampling the arms according to the sampling profile.

	Note that in the exploration stage, the arms are sampled in an inefficient
	round-robin fashion, while the candidate answer is verifed
	using the optimal sampling profile in the second stage.
	Hence, we use a less stringent confidence in Stage 1, and then adopts
	the required confidence level $\delta$ in the second stage.

\subsubsection{Other Technical Highlights}

\paragraph{The factor $\ln |\subsetfam|$ is necessary for \combibandit.} 
In order to establish the worst-case $\gklow(\combiband_k) \cdot \ln
|\subsetfam|$ lower bound (the first part of
Theorem~\ref{theo:worst-case-lowb-comb}), we construct a family
$\subsetfam$ of subsets of $[n]$ satisfying the following two important
properties \footnote{These two properties resemble the well-known set
  system of~\cite{nisan1994hardness}.}:
\begin{itemize}
\item (Sets in $\subsetfam$ are large) Each subset $A \in \subsetfam$ is
  of the same size $\ell = \Omega(n)$.
	
\item (Intersections are small) For every two different subsets $A,B \in
  \subsetfam$, $|A \cap B | \le \ell/2$.
\end{itemize}

For each $A \in \subsetfam$, we construct an instance $\combiband_A$ in
which the $i$-th arm has mean $\Delta$ if $i \in A$ and mean $0$
otherwise, where $\Delta$ is a small real number.  Clearly, a
$\delta$-correct algorithm must output the subset $A$ on instance
$\combiband_A$ with probability at least $1-\delta$. Intuitively
speaking, our lower bound works by showing that if an algorithm $\alg$
can correctly solve all $\combiband_A$'s, then there must exist two
different instances $\combiband_A$ and $\combiband_B$, such that $\alg$
can distinguish these two instance with a much smaller confidence of
$\delta/|\subsetfam|$. It is easy to see the later task requires
$\Omega(\Delta^{-2} \cdot \ln |\subsetfam|)$ samples by the change of
distribution method. Then, by a simple calculation, one can show that
$\gklow(\combiband_A) = \Theta(\Delta^{-2})$ for all $A \in \subsetfam$.

\paragraph{An $O(\gklow(\combiband_B) \cdot \operatorname{poly}(\ln
  n,\ln \delta^{-1}))$ Upper Bound.} To prove
Theorem~\ref{theo:worst-case-lowb-comb} (ii), for each instance of the
form $\combiband_B$ constructed above, we need to design a
$\delta$-correct algorithm $\alg_B$ which is particularly fast on that
instance.
To this end, we provide a surprisingly fast testing algorithm
$\alg_{\mathsf{help}}$ to distinguish between two hypotheses $x = 0$ and
$\|x\|_2 \ge 1$
with sample complexity
$\tilde O(n)$ (see Theorem~\ref{theo:ball-case} for the details).  This
is somewhat surprising, considering the following argument: for this
problem, uniform sampling seems to be a good method as the problem is
completely symmetric (no arm is more special than others).  If we sample
every arm once, we actually get a sample (which is an $n$-dimensional
vector) from a multivariate Gaussian $N(x, I_{n\times n})$.  To decide
whether the mean $x$ satisfies $x = 0$ or $\|x\|_2 \ge 1$ with
confidence 0.99, we need $O(n)$ samples from multi-variate Gaussian
$N(x, I_{n\times n})$, by a simple calculation.
This argument suggests that we need $O(n^2)$ arm pulls.  However, we
show that an interesting randomized sampling method only needs $\tilde
O(n)$ arm pulls.

\eat{
 And our algorithm simply works by outputting $B$ when $\alg_{\mathsf{help}}$ says $x = x_B$, and otherwise simulate an arbitrary $\delta$-correct algorithm for \combibandit. Clearly it takes the desired amount of samples on $\combiband_B$, and its correctness follows from the fact that when $\alg_{\mathsf{help}}$ operates correctly and says $x = x_B$, at least $x$ must satisfy $\|x - x_B\|_2 < r$, and therefore $B$ is also a correct answer for the given instance. \footnote{Actually the above algorithm may not have a desirable sample complexity bound in {\em expectation}, we have to use a simple trick called parallel simulation to transform it into the final algorithm we need, see Lemma~\ref{lem:parasim}. The same applies in the \generalbandit case.}
}
\paragraph{Worst Case Lower Bound for $\generalbandit$.} 
We consider the following special case of \generalbandit, which behaves like 
an $\mathsf{OR}$ function: 
namely, each arm has mean either $0$ or $\Delta$, and the goal is
to find out whether there is an arm with mean $\Delta$, where $\Delta$ is a small real number.

Let $\inst_{k}$ be an instance in which the $k$-th arm has mean $\Delta$, while other arms have mean $0$. On one hand, it is not hard to see that $\gklow(\inst_{k}) = \Theta(\Delta^{-2})$ via a simple calculation. On the other hand, we show that in order to solve all $\inst_{k}$'s correctly, an algorithm must in a sense find the $\Delta$-mean arm itself, and hence are forced to spend $\Omega(n \Delta^{-2})$ total samples in the worst-case. 
While the high level idea is simple, the formal proof is a bit technical
and is relegated to Appendix~\ref{sec:anotherlb}.


\subsection{Other Related Work}
An important and well-studied special case of \combibandit and \generalbandit is \bestarm, in which we would like identify the best single arm. For the PAC version of the problem,
\footnote{In the PAC version, our goal is to identify an $\varepsilon$-approximate optimal arm.
}
\cite{even2002pac} obtained an algorithm with sample 
complexity $O(n\varepsilon^{-2} \cdot \ln \delta^{-1})$,
which is also optimal in worse cases.
For the exact version of \bestarm,
a lower bound of $\Omega(\sum\nolimits_{i=2}^{n} \Gap{i}^{-2} \ln\delta^{-1})$, where $\Gap{i}$ denotes the gap between the $i$-th largest arm and the largest arm,
has been proved by~\cite{mannor2004sample}.
In a very early work, \cite{farrell1964asymptotic} established a worst-case
lower bound of
$\Omega(\Gap{2}^{-2}\ln\ln \Gap{2}^{-1})$
even if there are only two arms. 
An upper bound of 
$O(\sum\nolimits_{i=2}^{n} \Gap{i}^{-2} (\ln\ln\Gap{i}^{-1}+\ln\delta^{-1}))$ was achieved by the \textsf{Exponential-Gap Elimination} algorithm by~\cite{karnin2013almost},
matching Farrell's lower bound for two arms. Later,~\cite{jamieson2014lil} obtained the a more practical algorithm with the same theoretical sample complexity, based the confidence bounds derived from the 
law of iterative logarithm.
Very Recently, in \cite{chen2016towards,chen2016open}, 
the authors proposed an intriguing gap-entropy conjecture stating that the optimal instance-wise sample complexity of \bestarm\ is related to the entropy of a certain distribution of the arm gaps.  On a different line, \cite{garivier2016optimal} proposed an algorithm which is asymptotically optimal.
The high level ideas of our algorithms for \combibandit{} and \generalbandit{} are 
similar to the approach in \cite{garivier2016optimal}, which also computes the allocation
of samples using a mathematical program, and then take samples according to it.
In a high level, our algorithms are also similar to the
``explore-verify'' approach used in \cite{karnin2016verification}.

The natural generalization of \bestarm is \bestkarm{}, in which we would like to identify the top $k$ arms. The problem and its PAC versions have been also studied extensively in the past few years~\cite{kalyanakrishnan2010efficient,  gabillon2012best, gabillon2011multi, kalyanakrishnan2012pac, bubeck2013multiple, kaufmann2013information, zhou2014optimal, kaufmann2015complexity,chen2016pure,chen2017nearly}.

All aforementioned results are in the {\em fixed confidence} setting, where we need to output the correct answer with probability at least $1-\delta$, where $\delta$ is the given confidence level. 
Another popular setting is called the {\em fixed budget} setting, in which one aims to minimize the failure probability, subject to a fixed budget constraint on the total number of samples.
(see e.g., \cite{gabillon2012best,karnin2013almost,chen2014combinatorial,carpentier2016tight}).

Our problems are related to
the classic sequential hypothesis testing framework, which is 
pioneered by~\cite{wald1945sequential}.
In fact, they are closely related to the 
{\em active hypothesis testing} problem first studied by~\cite{chernoff1959sequential}.
In Wald's setting, the observations are predetermined, and we only need to design the stopping time.  In Chernoff's setting, the decision maker can 
choose different experiments to conduct, which result in different observations
about the underlying model. Chernoff focused on the case of binary hypotheses
and obtained an asymptotically optimal testing algorithm as
the error probability approaches to 0. 
Chernoff's seminal result has been extended to more than two hypothesis (see e.g., \cite{draglia1999multihypothesis,naghshvar2013active}).
In fact, the active multiple hypothesis testing is already quite general, and includes the bandit model as a special case.
Our work differs from the above line of work in the following aspects:
First, most of the work following Chernoff's approach (which extends Wald's approach) uses different variants of the SPRT (sequential probability ratio test). It is unclear how to compute such ratios (efficiently) for our combinatorial pure exploration problem. 
However, the computation problem is a major focus of our work
(we devote Section~\ref{sec:efficient} to discuss how to solve
the computation problem efficiently).
Second, the optimality of their results are in the asymptotic sense
(when $\delta\rightarrow 0$).
In the high dimension (i.e., $n$ is large) but moderate $\delta$ regime,
the additive term, which is independent of $\delta$ but dependent on $n$
(see Theorem~\ref{theo:worst-case-lowb-comb} and \ref{theo:lower-bound-general}), may become the dominate term.
Our work makes this term explicit and aims at minimizing it as well.



\section{Preliminaries}

Some naming conventions first. Typically, we use a lowercase letter to denote an {\em element}, e.g., an arm $a$ or an index $i$, uppercase letter to denote a set of elements, e.g., a set of arms $S$, and a letter in calligraphic font to denote a family of sets, e.g., a set of sets of arms $\mathcal{S}$.
Given a \combibandit{} instance $\combiband=(S, \subsetfam)$, $\mu_i$ denotes the mean of arm $i\in S$. For subset $A\subseteq S$, $\mu(A)$ denotes $\sum_{i\in A}\mu_i$.

For two sets $A, B$, we use $A \oplus B$ to denote the {\em symmetric difference} of $A$ and $B$. Namely, 
$$
A \oplus B = (A \backslash B ) \cup (B \backslash A).
$$
We need the following important lemma for calculating the confidence level of a subset of arms.

\begin{Lemma}
	\label{lem:sum_dev}
	Given a set of Gaussian arms $a_1,a_2,\dotsc, a_k$ with unit variance and means $\mu_1,\mu_2,\dotsc,\mu_k$, suppose we take $\tau_i$ samples in the $i^{th}$ arm, and let $X_i$ be its empirical mean. Then we have
	$$
	\Pr\left[\left| \sum_{i=1}^{k} X_i - \sum_i^k \mu_i \right| \ge \epsilon\right] \le 2\exp\left\{  - \frac{\epsilon^2}{2\sum_{i=1}^{k}1 / \tau_i } \right\}.
	$$
\end{Lemma}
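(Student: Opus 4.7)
The plan is to exploit the fact that every random variable in sight is Gaussian, so the sum $\sum_i X_i$ is itself a one-dimensional Gaussian and the inequality collapses to the standard Gaussian tail bound.

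First I would unpack the distribution of each $X_i$. By assumption arm $i$ produces i.i.d.\ samples from $\Normal(\mu_i, 1)$, so its empirical mean after $\tau_i$ pulls is distributed as $\Normal(\mu_i, 1/\tau_i)$. Because the samples used to form different $X_i$'s come from independent arms (and use disjoint coin tosses), the random variables $X_1, X_2, \ldots, X_k$ are mutually independent.

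Next, I would use the fact that any linear combination of independent Gaussians is again Gaussian, with means and variances adding. Applied to $Y := \sum_{i=1}^k X_i$, this yields
$$
Y \sim \Normal\!\left(\sum_{i=1}^k \mu_i,\ \sum_{i=1}^k \tfrac{1}{\tau_i}\right),
$$
so that $Y - \sum_i \mu_i$ is centered Gaussian with variance $\sigma^2 := \sum_i 1/\tau_i$. The statement then follows immediately from the standard one-sided/two-sided Gaussian tail bound
$$
\Pr\bigl[\,|Z|\ge \epsilon\,\bigr]\ \le\ 2\exp\!\left(-\frac{\epsilon^2}{2\sigma^2}\right) \quad\text{for } Z\sim \Normal(0,\sigma^2),
$$
which one can prove in a line via a Chernoff argument using the moment generating function $\Ex[e^{\lambda Z}] = e^{\lambda^2\sigma^2/2}$ and optimizing over $\lambda > 0$.

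There is essentially no obstacle here: the lemma is a direct consequence of closure of Gaussians under linear combinations and the standard sub-Gaussian tail bound. The only point worth double-checking is the independence of the $X_i$'s, which is what lets the variances add; this is immediate from the problem setup. An alternative route, which one might prefer if one later wants to relax the Gaussian assumption to $1$-sub-Gaussian rewards, is to apply a Chernoff bound directly to $\sum_i X_i - \sum_i \mu_i$, noting that $X_i - \mu_i$ is $(1/\tau_i)$-sub-Gaussian and that sums of independent sub-Gaussians are sub-Gaussian with additive variance proxies; this yields the same bound verbatim.
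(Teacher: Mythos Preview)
Your proof is correct and follows essentially the same approach as the paper: observe that $\sum_i X_i - \sum_i \mu_i$ is a centered Gaussian with variance $\sum_i 1/\tau_i$, and apply the standard Gaussian tail bound. The paper's proof is in fact just a two-line version of exactly this argument.
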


\begin{proof}[Proof of Lemma~\ref{lem:sum_dev}]
  By assumption, $\sum_{i=1}^{k}X_i - \sum_{i=1}^{k}\mu_i$ follows the Gaussian distribution with mean $0$ and variance $\sum_{i=1}^{k}1/\tau_i$. The lemma hence follows from the tail bound of Gaussian distributions.
\end{proof}

\vspace{-0.2cm}
\paragraph{Tail bound of the $\chi^2$ distribution.} A $\chi^2$ distribution with $n$ degrees of freedom is the distribution of a sum of the squares of $n$ random variables drawn independently from the standard Gaussian distribution $\Normal(0, 1)$.
The following lemma, as a special case of \cite[Lemma 1]{laurent2000adaptive}, proves an exponential tail probability bound for $\chi^2$ distributions.

\begin{Lemma}\label{lem:chi2bound}
  Let $X$ be a $\chi^2$ random variable with $n$ degrees of freedom. For any $x > 0$, it holds that
    $$\pr{X\ge2n+3x}\le e^{-x}\text{.}$$
\end{Lemma}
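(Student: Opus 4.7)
The plan is to deduce the stated inequality as an immediate corollary of the classical Laurent--Massart tail bound for chi-square distributions, which is precisely the cited \cite[Lemma 1]{laurent2000adaptive}. That lemma asserts that for $X \sim \chi^2_n$ and every $x > 0$,
$$\pr{X \ge n + 2\sqrt{nx} + 2x} \le e^{-x}.$$
Thus, to obtain $\pr{X \ge 2n + 3x} \le e^{-x}$, it suffices to verify the pointwise inequality $n + 2\sqrt{nx} + 2x \le 2n + 3x$, i.e.\ the containment of events $\{X \ge 2n + 3x\} \subseteq \{X \ge n + 2\sqrt{nx} + 2x\}$.

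The remaining step is elementary: the desired inequality reduces to $2\sqrt{nx} \le n + x$, which is just the AM--GM inequality applied to the nonnegative reals $n$ and $x$ (equivalently, $(\sqrt{n}-\sqrt{x})^2 \ge 0$). Once this is established, monotonicity of probability yields
$$\pr{X \ge 2n + 3x} \le \pr{X \ge n + 2\sqrt{n x} + 2x} \le e^{-x},$$
as claimed.

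There is essentially no obstacle in this argument, since the stated bound is a deliberately loose form of Laurent--Massart chosen to avoid the square-root term (which is inconvenient in subsequent union-bound computations). The only decision is whether to invoke Laurent--Massart as a black box or to reprove the chi-square tail bound from scratch via the moment generating function of a centered $\chi^2_n$ random variable; since the paper explicitly cites \cite{laurent2000adaptive}, the former route is clearly intended and keeps the proof to a few lines.
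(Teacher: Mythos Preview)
Your proposal is correct and matches the paper's approach exactly: the paper does not give a proof but simply states the lemma ``as a special case of \cite[Lemma~1]{laurent2000adaptive},'' so invoking Laurent--Massart and then checking $n + 2\sqrt{nx} + 2x \le 2n + 3x$ via AM--GM is precisely the intended derivation. Your write-up in fact supplies more detail than the paper itself does.
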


Let $\KL(a_1,a_2)$ denote the Kullback-Leibler divergence from the distribution of arm $a_2$ to that of arm $a_1$. For two Gaussian arms $a_1$ and $a_2$ with means $\mu_1$ and $\mu_2$ respectively, it holds that
	$$\KL(a_1, a_2) = \frac{1}{2}(\mu_1-\mu_2)^2\text{.}$$
Moreover, let $$d(x, y) = x\ln(x/y) + (1-x)\ln[(1-x)/(1-y)]$$ denote the binary relative entropy function.

\paragraph{Change of Distribution.} The following ``Change of Distribution'' lemma, formulated by~\cite{kaufmann2015complexity}, characterizes the behavior of an algorithm when underlying distributions of the arms are slightly altered, and is thus useful for proving sample complexity lower bounds. 
Similar bounds are known in the sequential hypothesis testing literature (see e.g., \cite[p.283]{ghosh1970sequential})
In the following, $\Pr_{\alg,\combiband}$ and $\Ex_{\alg,\combiband}$ denote the probability and expectation when algorithm $\alg$ runs on instance $\combiband$.

\begin{Lemma}[Change of Distribution]\label{lem:CoD}
  Let $\alg$ be an algorithm that runs on $n$ arms, and let $\combiband
  = (a_1,a_2,\ldots,a_n)$ and $\combiband' = (a'_1,a'_2,\ldots,a'_n)$ be
  two sequences of $n$ arms. Let random variable $\tau_i$ denote the
  number of samples taken from the $i$-th arm. For any event $\event$ in
  $\mathcal{F}_\tau$, where $\tau$ is a stopping time with respect to
  the filtration $\{\mathcal{F}_t\}_{t\ge0}$, it holds that
  $$\sum_{i=1}^{n}\Ex_{\alg,\combiband}[\tau_i] \, \KL\left(a_i, a'_i\right)
  \ge
  d\left(\Pr_{\alg,\combiband}[\event],\Pr_{\alg,\combiband'}[\event]\right)\text{.}$$ 
\end{Lemma}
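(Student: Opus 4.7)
The plan is the standard change-of-measure argument based on log-likelihood ratios; I will outline the three ingredients and how they combine.

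First, I would introduce the log-likelihood ratio of the observations gathered by $\alg$ up to time $\tau$, under the two bandit models. Writing $X_{i,1}, X_{i,2}, \ldots$ for the successive samples from arm $i$, define
\[
L_\tau \;=\; \sum_{i=1}^{n} \sum_{t=1}^{\tau_i} \ln \frac{dP_{a_i}(X_{i,t})}{dP_{a'_i}(X_{i,t})},
\]
where $P_{a_i}$ and $P_{a'_i}$ denote the sampling distributions of arm $i$ under $\combiband$ and $\combiband'$. Since the algorithm's random choices of which arm to pull depend only on the observed history (and on the algorithm's internal randomness), those choices cancel in the Radon--Nikodym derivative, so $L_\tau$ is precisely the log-likelihood ratio of the $\mathcal{F}_\tau$-measurable trace of $\alg$'s interaction with the environment.

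Second, I would compute $\Ex_{\alg,\combiband}[L_\tau]$ via Wald's identity. For each fixed $i$, the process $M_s^{(i)} = \sum_{t=1}^{s}\bigl(\ln\frac{dP_{a_i}}{dP_{a'_i}}(X_{i,t}) - \KL(a_i, a'_i)\bigr)$ is a zero-mean random walk whose increments have finite expectation equal to zero. Since $\tau_i$ is a stopping time with respect to the natural filtration, an application of Wald's identity (together with the standard integrability argument, which can be carried out in the bounded-$\Ex[\tau_i]$ case and then extended) yields
\[
\Ex_{\alg,\combiband}[L_\tau] \;=\; \sum_{i=1}^n \Ex_{\alg,\combiband}[\tau_i]\, \KL(a_i, a'_i).
\]

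Third, I would bound $\Ex_{\alg,\combiband}[L_\tau]$ from below by $d(\Pr_{\alg,\combiband}[\event], \Pr_{\alg,\combiband'}[\event])$. The key observation is that $\Ex_{\alg,\combiband}[L_\tau]$ is exactly the KL divergence between the full laws of the $\mathcal{F}_\tau$-measurable trace under $\combiband$ and under $\combiband'$. The indicator $\indicator_\event$ is an $\mathcal{F}_\tau$-measurable function taking values in $\{0,1\}$; by the data-processing inequality for KL divergence, pushing both measures forward through $\indicator_\event$ can only decrease the divergence, so
\[
\Ex_{\alg,\combiband}[L_\tau] \;\ge\; \KL\bigl(\mathrm{Ber}(\Pr_{\alg,\combiband}[\event]),\,\mathrm{Ber}(\Pr_{\alg,\combiband'}[\event])\bigr) \;=\; d(\Pr_{\alg,\combiband}[\event], \Pr_{\alg,\combiband'}[\event]).
\]
Combining the second and third steps gives the stated inequality.

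The main obstacle I anticipate is the rigorous justification of Wald's identity at the random stopping time $\tau$: one must either assume $\Ex_{\alg,\combiband}[\tau_i] < \infty$ for all $i$ (in which case the lemma is nontrivial and the standard optional stopping argument applies to the martingale $M_s^{(i)}$), or handle the case $\Ex_{\alg,\combiband}[\tau_i] = +\infty$ separately by noting that the inequality is then trivial. The data-processing step is a one-line consequence of the standard fact that KL divergence contracts under push-forwards, and the likelihood computation is routine once one notes that algorithmic randomness cancels.
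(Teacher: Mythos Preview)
The paper does not actually prove this lemma: it is stated as a known result and attributed to \cite{kaufmann2015complexity} (with a remark that similar bounds appear earlier in the sequential testing literature). Your sketch is the standard proof of that result---log-likelihood ratio, Wald's identity for the expectation, then the data-processing inequality to reduce to the Bernoulli divergence---and is correct; it is essentially the argument given in the cited reference.
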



\section{Instance Lower Bound}\label{sec:instance-lowb}
	\subsection{Instance Lower Bound for \combibandit{}}\label{sec:comb-lowerb}
	Given a \combibandit{} instance $\combiband = (S, \subsetfam)$, let $O$ denote the optimal set in $\subsetfam$ (i.e., $O = \argmax_{A\in\subsetfam}\mu(A)$). We define $\gklow(\combiband)$ as the optimal value of the following mathematical program:
		\begin{equation}\begin{split}\label{eq:gklowdef}
			\textrm{minimize}~~&\sum_{i\in S}\tau_i\\
			\textrm{subject to}~~&\sum_{i\in O\oplus
                          A}1/\tau_i \le [\mu(O)-\mu(A)]^2 \qquad \forall A\in\subsetfam\\
			& \tau_i > 0,~\forall i\in S\text{.}
		\end{split}\end{equation}

	We prove Theorem~\ref{theo:LB}, which we restate in the following for convenience.

	\noindent\textbf{Theorem~\ref{theo:LB}} (restated)\textit{
		Let $\combiband = (S,\subsetfam)$ be an instance of $\combibandit$.
		For any $\delta \in (0,0.1)$
	  	and $\delta$-correct algorithm $\alg$ for $\combibandit$,
		$\alg$ takes
	    $\Omega(\gklow(\combiband) \ln \delta^{-1})$ 
		samples in expectation on $\combiband$.
	}

	\begin{proof}[Proof of Theorem \ref{theo:LB}]
		Fix $\delta\in(0,0.1)$, instance $\combiband$ and $\delta$-correct algorithm $\alg$. Let $n_i$ be the expected number of samples drawn from the $i$-th arm when $\alg$ runs on instance $\combiband$. Let $\alpha = d(1-\delta,\delta)/2$ and $\tau_i = n_i/\alpha$. It suffices to show that $\tau$ is a feasible solution for the program in \eqref{eq:gklowdef}, as it directly follows that
			$$\sum_{i=1}^{n}n_i
			=\alpha\sum_{i=1}^{n}\tau_i
			\ge\alpha\gklow(\combiband)
			=\Omega(\gklow(\combiband)\ln\delta^{-1})\text{.}$$
		Here the last step holds since for all $\delta\in(0, 0.1)$,
			$$d(1-\delta,\delta)
			=(1-2\delta)\ln\frac{1-\delta}{\delta}
			\ge0.8\ln\frac{1}{\sqrt{\delta}}
			=0.4\ln\delta^{-1}\text{.}$$

		To show that $\tau$ is a feasible solution, we fix $A\in\subsetfam$.
		Let $\Delta_i = c/n_i$, where
			$$c = \frac{2[\mu(O)-\mu(A)]}{\sum_{i\in O\oplus A}1/n_i}\text{.}$$
		We consider the following alternative instance $\combiband'$:
		the mean of each arm $i$ in $O\setminus A$ is decreased by $\Delta_i$,
		while the mean of each arm $i\in A\setminus O$ is increased by $\Delta_i$;
		the collection of feasible sets are identical to that in $\combiband$.
		Note that in $\combiband'$, the difference between the weights of $O$ and $A$ is given by
			$$\left(\mu(O)-\sum_{i\in O\setminus A}\Delta_i\right)-\left(\mu(A)+\sum_{i\in A\setminus O}\Delta_i\right)
			=\mu(O)-\mu(A)-c\sum_{i\in O\oplus A}1/n_i
			=-(\mu(O)-\mu(A)) < 0\text{.}$$
		In other words, $O$ is no longer optimal in $\combiband'$.

		Let $\event$ denote the event that algorithm $\alg$ returns $O$ as the optimal set. Note that since $\alg$ is $\delta$-correct, $\Pr_{\alg,\combiband}[\event]\ge1-\delta$ and $\Pr_{\alg,\combiband'}[\event]\le\delta$. Therefore, by Lemma \ref{lem:CoD},
			$$\sum_{i\in O\oplus A}n_i\cdot\frac12\Delta_i^2
			\ge d\left(\Pr_{\alg,\combiband}[\event],\Pr_{\alg,\combiband'}[\event]\right)\ge d(1-\delta,\delta)\text{.}$$

		Plugging in the values of $\Delta_i$'s yields
			$$2d(1-\delta,\delta)
			\le\sum_{i\in O\oplus A}n_i\cdot \frac{c^2}{n_i^2}
			=\frac{4[\mu(O)-\mu(A)]^2}{\left(\sum_{i\in O\oplus A}1/n_i\right)^2}\sum_{i\in O\oplus A}1/n_i
			=\frac{4[\mu(O)-\mu(A)]^2}{\sum_{i\in O\oplus A}1/n_i}\text{,}$$
		and it follows that
			$$\sum_{i\in O\oplus A}1/\tau_i\le[\mu(O)-\mu(A)]^2\text{.}$$
	\end{proof}
	\subsection{Comparison with Previous Lower Bound} 
	\label{sec:vschenlow}
	In this section, we show that our lower bound is no weaker than the lower bound in \cite{chen2014combinatorial}.
	Formally, we prove Theorem \ref{lem:vschenlow}. We restate it here for convenience. 
	
	\noindent\textbf{Theorem~\ref{lem:vschenlow}} (restated)
	\textit{
	Let $\combiband = (S,\subsetfam)$  be an instance of $\combibandit$,
$$
\gklow(\combiband)  \ge H_{C}(\combiband).
$$
	}

	\begin{proof}
	We start with the concept of {\em gap}, which was  defined as follows  in \cite{chen2014combinatorial}.
	Let $\combiband = (S,\subsetfam)$ be an instance of $\combibandit$ with the optimal set $O\in\subsetfam$.
	For each arm $i$, its gap $\Delta_i$ is defined as
	$$
\Delta_i = \begin{cases}
\mu(O) - \max_{O'\in\subsetfam, i\notin O'} \mu(O'), & i \in O,\\
\mu(O) - \max_{O'\in\subsetfam, i\in O'} \mu(O'), & i \notin O\text{.}
\end{cases}
	$$
	The {\em hardness} of the instance $\combiband$, $\chenlow(\combiband) $, is defined as
	$\chenlow(\combiband) = \sum_{i\in S} m_i',$
	where $m_i' = \Delta_i^{-2}$.
	
Consider the following mathematical program, which is essentially the same as Program \eqref{eq:gklowdef}, except for replacing
summation with maximization in the constraints. 
			\begin{equation}\begin{split}\label{eq:chenlow}
			\textrm{minimize}~~&\sum_{i\in S}\tau'_i\\
			\textrm{subject to}~~&\max_{i\in O\oplus
                          A}1/\tau'_i \le [\mu(O)-\mu(A)]^2 \qquad \forall A\in\subsetfam\\
			& \tau'_i > 0,~\forall i\in S\text{.}
		\end{split}\end{equation}
A simple observation is that, the optimal solution of Program \eqref{eq:chenlow} can be achieved by setting
$$
\tau'_i = \max_{i \in O \oplus A} [\mu(O) - \mu(A)]^{-2} = m'_i.
$$
Furthermore, every feasible solution of 
Program \eqref{eq:gklowdef} is also a feasible solution of  
Program \eqref{eq:chenlow}.
Theorem~\ref{lem:vschenlow} hence holds.
\end{proof}

	\subsection{Instance Lower Bound for \generalbandit{}}\label{sec:general-lowerb}
		For a \generalbandit{} instance
			$\inst = (\armseq, \anssetcol)$
		with mean profile $\mean\in\ansset$,
		we define $\gklow(\inst)$ as the optimal value
		of the following linear program:
			\begin{equation}\label{eq:genlbdef}\begin{split}
				\textrm{minimize}~~~~&\sum_{i=1}^{n}\tau_i\\
				\textrm{subject to}~~~~&\sum_{i=1}^{n}\left(\tilmean_i - \mean_i\right)^2 \tau_i \ge1,~\forall \tilmean\in\alt(\ansset),\\
				& \tau_i\ge 0\text{.}
			\end{split}\end{equation}
		Here $\alt(\ansset) = \bigcup_{\ansset'\in\anssetcol\setminus\{\ansset\}}\ansset'$.

		We prove that $\Omega(\gklow(\inst)\ln\delta^{-1})$ is an instance-wise sample complexity lower bound for instance $\inst$.
		The proof is very similar to that in \cite{garivier2016optimal}, and we provide it here for completeness.

		\begin{theo}\label{theo:genlb}
			Suppose $\delta\in(0,0.1)$ and $\inst$ is an instance
			of \generalbandit.
			Any $\delta$-correct algorithm for \generalbandit{} takes
				$\Omega(\gklow(\inst)\ln\delta^{-1})$
			samples in expectation on $\inst$.
		\end{theo}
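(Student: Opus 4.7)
The plan is to mirror the change-of-distribution argument used for Theorem~\ref{theo:LB}, adapting it to the geometric formulation of \generalbandit. Fix $\delta \in (0,0.1)$, an instance $\inst = (\armseq, \anssetcol)$ with mean profile $\mean$ lying in the (unique) answer set $\ansset \in \anssetcol$, and a $\delta$-correct algorithm $\alg$. Let $n_i := \Ex_{\alg, \inst}[\tau_i]$ denote the expected number of pulls of arm $i$, and rescale by setting $\tau_i := n_i / (2d(1-\delta, \delta))$. As in the proof of Theorem~\ref{theo:LB}, it suffices to verify that $(\tau_i)_{i=1}^n$ is feasible for the linear program~\eqref{eq:genlbdef}; combined with the bound $d(1-\delta, \delta) \ge 0.4 \ln \delta^{-1}$, feasibility immediately yields $\sum_i n_i = 2d(1-\delta,\delta) \sum_i \tau_i \ge 2d(1-\delta,\delta)\, \gklow(\inst) = \Omega(\gklow(\inst) \ln \delta^{-1})$.

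To show feasibility, I fix an arbitrary $\tilmean \in \altans$ and construct an \emph{alternative} instance $\inst' = (\armseq', \anssetcol)$ whose arms remain Gaussian with unit variance but whose mean profile is $\tilmean$. Since $\tilmean$ lies in some $\ansset' \in \anssetcol \setminus \{\ansset\}$ by definition of $\altans$, the correct answer on $\inst'$ is $\ansset'$, not $\ansset$. Let $\event$ be the event that $\alg$ outputs $\ansset$. Because $\alg$ is $\delta$-correct on both instances, $\Pr_{\alg, \inst}[\event] \ge 1 - \delta$ and $\Pr_{\alg, \inst'}[\event] \le \delta$. Applying Lemma~\ref{lem:CoD} together with the fact that the KL divergence between two unit-variance Gaussians with means $\mean_i$ and $\tilmean_i$ equals $\tfrac{1}{2}(\tilmean_i - \mean_i)^2$, I obtain
\[
\sum_{i=1}^n n_i \cdot \tfrac{1}{2}(\tilmean_i - \mean_i)^2 \;\ge\; d\bigl(\Pr_{\alg, \inst}[\event], \Pr_{\alg, \inst'}[\event]\bigr) \;\ge\; d(1-\delta, \delta).
\]
Substituting $n_i = 2 d(1-\delta, \delta) \, \tau_i$ and dividing through by $d(1-\delta, \delta)$ yields $\sum_i (\tilmean_i - \mean_i)^2 \tau_i \ge 1$, which is exactly the constraint of the program~\eqref{eq:genlbdef} at the point $\tilmean$.

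Since $\tilmean \in \altans$ was arbitrary, $(\tau_i)_{i=1}^n$ satisfies every constraint of the program, proving feasibility and completing the lower bound. The main conceptual step is simply recognizing that the formulation~\eqref{eq:genlbdef} is the ``right'' dual/tracking of the transportation-inequality-style Lemma~\ref{lem:CoD}; the two minor things to be careful about are (i) checking that the alternative instance $\inst'$ is actually a valid \generalbandit{} instance with $\mean(\inst') = \tilmean$ belonging to an answer set distinct from $\ansset$ (which follows directly from $\tilmean \in \altans$ and the disjointness assumption on $\anssetcol$), and (ii) ensuring the event $\event = \{\alg \text{ outputs } \ansset\}$ lies in $\mathcal{F}_\tau$ for the (a.s. finite) stopping time of $\alg$, which follows from $\delta$-correctness. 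No genuinely new ingredient beyond the proof of Theorem~\ref{theo:LB} is needed—the geometric language replaces the discrete ``symmetric difference'' sum, but the underlying mechanism is identical.
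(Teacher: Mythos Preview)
Your proposal is correct and essentially identical to the paper's own proof: both rescale the expected sample counts by $2d(1-\delta,\delta)$, verify feasibility of the rescaled vector for program~\eqref{eq:genlbdef} via Lemma~\ref{lem:CoD} applied to an alternative instance with mean profile $\tilmean\in\altans$, and conclude using $d(1-\delta,\delta)\ge 0.4\ln\delta^{-1}$. No differences of substance.
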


		\begin{proof}
			\, Fix $\delta\in(0,0.1)$, instance $\inst$ and a $\delta$-correct algorithm $\alg$. Let $n_i$ be the expected number of samples drawn from the $i$-th arm when $\alg$ runs on instance $\inst$. Let $\alpha = 2d(1-\delta,\delta)$ and $\tau_i = n_i/\alpha$. It suffices to show that $\tau$ is a feasible solution for the program in \eqref{eq:genlbdef}, as it directly follows that the expected sample complexity of $\alg$ is lower bounded by
				$$\sum_{i=1}^{n}n_i
				=\alpha\sum_{i=1}^{n}\tau_i
				\ge\alpha\gklow(\inst)
				=\Omega(\gklow(\inst)\ln\delta^{-1})\text{.}$$
			Here the last step holds since for all $\delta\in(0, 0.1)$,
				$$d(1-\delta,\delta)
				=(1-2\delta)\ln\frac{1-\delta}{\delta}
				\ge0.8\ln\frac{1}{\sqrt{\delta}}
				=0.4\ln\delta^{-1}\text{.}$$

			To show that $\tau$ is a feasible solution, we fix $\tilmean\in\alt(\ansset)$. Let $\inst'$ be an alternative instance obtained by changing the mean profile in $\inst$ from $\mean$ to $\tilmean$.
			Let $\event$ denote the event that algorithm $\alg$ returns $\ansset$ as the optimal set. The $\delta$-correctness of $\alg$ guarantees that $\Pr_{\alg,\inst}[\event]\ge1-\delta$ and $\Pr_{\alg,\inst'}[\event]\le\delta$. By Lemma \ref{lem:CoD},
				$$\sum_{i=1}^{n}n_i\cdot\KL(\Normal(\mean_i, 1), \Normal(\tilmean_i, 1))
				\ge d\left(\Pr_{\alg,\inst}[\event],\Pr_{\alg,\inst'}[\event]\right)\ge d(1-\delta,\delta)\text{.}$$

			Plugging in $\alpha = 2d(1-\delta,\delta)$ and $\KL(\Normal(\mean_i, 1), \Normal(\tilmean_i, 1)) = \frac{1}{2}(\tilmean_i-\mean_i)^2$ yields
				$$\alpha = 2d(1-\delta,\delta)
				\le\sum_{i=1}^{n}n_i\cdot (\tilmean_i-\mean_i)^2
				=\alpha\sum_{i=1}^{n}(\tilmean_i-\mean_i)^2\tau_i\text{,}$$
			and it follows that
				$\sum_{i=1}^{n}(\tilmean_i-\mean_i)^2\tau_i\ge 1\text{.}$
		\end{proof}

\newcommand{\meanhat}[2]{\hat\mu^{(#1)}_{#2}}
\newcommand{\numsamp}[2]{m^{(#1)}_{#2}}
\newcommand{\opt}{\mathrm{opt}}
\newcommand{\maxoracle}{\mathsf{OPT}}
\newcommand{\goodevent}{\event^\mathrm{good}}
\newcommand{\badevent}{\event^\mathrm{bad}}
\newcommand{\Sample}{\textsf{Sample}}

\newcommand{\verify}{\textsf{Verify}}
\newcommand{\algeffi}{\textsf{EfficientGapElim}}
\newcommand{\algnaive}{\textsf{NaiveGapElim}}
\newcommand{\algpar}{\textsf{ParallelGapElim}}
\newcommand{\unique}{\textsf{Unique}}

\section{Optimal Algorithm for Combinatorial Bandit}
\label{sec:inefficient}

	In this section, we present an algorithm for \combibandit{} that nearly achieves the optimal sample complexity.
	We postpone the computationally efficient implementation of the
        algorithm to the next section. 

	\subsection{Overview}
		Our algorithm is based on a process of successive
                elimination. However, unlike the previous approaches~\cite{karnin2013almost,chen2015optimal,chen2016pure,chen2016towards,gabillon2016improved} which maintained a set of arms, our algorithm maintains a collection of candidate sets and performs the eliminations on them directly.
				Specifically, at the $r$-th round of the algorithm, we adopt a precision level $\epsilon_r\coloneqq 2^{-r}$, and maintain a set of candidates sets $\subsetfam_r \subseteq \subsetfam$; and the goal of the $r$-th round is to eliminate those sets in $\subsetfam_r$ with a optimality gap\footnote{The optimality gap for a set $A 
			\in \subsetfam$ is simply $\mu(O) - \mu(A)$ where $O$ is the optimal set.} of at least $\Theta(\epsilon_r)$.

A crucial difficulty in implementing the above elimination is that
it seems we have to compute the optimal set itself in order to
approximate the optimality gaps. To circumvent this problem, we instead approximate the gaps between \emph{every pair} of sets in $\subsetfam_r$, which certainly include the optimality gaps.
				Roughly speaking, we solve an optimization similar to that in \eqref{eq:gklowdef},
		and sample the arms accordingly to obtain an $O(\epsilon_r)$ approximation of
		the gap between \emph{every pair} of sets that are still under consideration.
		Now if there is a set pair $(A, B)$ such that the empirical mean of $B$ exceeds that of $A$ by $\epsilon_r$,
		we are certain that $A$ is not the optimal set,
		and thus we stop considering $A$ as a candidate answer.
		This process is repeated until only one set remains.

	\subsection{Algorithm}
		We first define a few useful subroutines that play important roles in the algorithm. Procedure $\simest$ takes as its input a set $\paraset \subseteq \subsetfam$ together with an accuracy parameter $\epsilon$ and a confidence level $\delta$. It outputs a vector $\{m_i\}_{i\in S}$ over $S$ that specifies the number of samples that should be taken from each arm, so that the difference between any two sets in $\paraset$ can be estimated to an accuracy of $\epsilon$ with probability $1 - \delta$.
	
		\begin{algorithm2e}[H]
			\caption{$\simest(\paraset, \epsilon, \delta)$}
			\KwIn{$\paraset$, accuracy parameter $\epsilon$, and confidence level $\delta$.}
			\KwOut{A vector $m$, indicating the number of samples to be taken from each arm.}
			Let $\{m_i\}_{i \in S}$ be the optimal solution of the following program:
			\begin{equation*}\begin{split}
				\textrm{minimize}~~&\sum_{i \in S}m_i\\
				\textrm{subject to}~~&\sum_{i \in A\oplus B}\frac{1}{m_i} \le \frac{\epsilon^2}{2\ln(2/\delta)},~\forall A, B\in \paraset\\
									 &m_i > 0,~\forall i \in S
			\end{split}\end{equation*}

			\textbf{return} $m$\;
		\end{algorithm2e}

		Procedure $\verify$ takes a sequence $\subsetfam_1,
                \ldots, \subsetfam_r$ of subsets of $\subsetfam$
                together with a confidence parameter $\delta$. Similar
                to $\simest$, it returns a vector $\{m_i\}_{i\in S}$ of
                the number of samples from each arm, so that the gap
                between the conjectured answer $\widehat O$, the only
                set in $\subsetfam_r$, and each set in $\subsetfam_k$
                can be estimated to $O(\epsilon_k)$ accuracy. (Recall
                that $\epsilon_k=2^{-k}$.) Notice
                that in general, the solutions returned by $\simest$ and
                $\verify$ are real-valued. We can simply get an
                integer-valued solution by rounding up without affecting the asymptotical performance.

		\begin{algorithm2e}[H]
			\caption{$\verify(\{\subsetfam_k\}, \delta)$}
			\KwIn{$\{\subsetfam_k\}_{k\in[r]}$ and confidence level $\delta$. It is guaranteed that $|\subsetfam_r|=1$.}
			\KwOut{A vector $m$, indicating the number of samples to be taken from each arm.}
			$\hat O \leftarrow$ the only set in $\subsetfam_r$;
			$\lambda \leftarrow 10$\;
			Let $\{m_i\}_{i \in S}$ be the optimal solution of the following program:
			\begin{equation*}\begin{split}
				\textrm{minimize}~~&\sum_{i \in S}m_i\\
				\textrm{subject to}~~&\sum_{i \in \hat O\oplus A}\frac{1}{m_i} \le \frac{(\epsilon_k/\lambda)^2}{2\ln(2/\delta)},~\forall k\in[r], A\in \subsetfam_k\\
									 &m_i > 0,~\forall i \in S
			\end{split}\end{equation*}

			\textbf{return} $m$\;
		\end{algorithm2e}
		 
		$\Sample$ is a straightforward sampling procedure: it
                takes a vector $m$, samples each arm $i\in S$ exactly
                $m_i$ times, and returns the empirical means. 
        We finish the description of all subroutines.
        
        \begin{rem}
        		In this section, we focus on the sample complexity and do not worry too much about the computation complexity of our algorithm.
        		It would be convenient to think that the subsets in 
        		$\subsetfam$ are given explicitly as input.
        		In this case, the convex programs in the subroutines
        		can be solved in time polynomial in $n$ and $|\subsetfam|$.
        		We will consider computational complexity issues when 
        		$\subsetfam$ is given implicitly in Section~\ref{sec:efficient}.
        \end{rem}

		Now, we describe our algorithm $\algnaive$ for \combibandit.
		$\algnaive$ proceeds in rounds. In each round $r$, it calls the $\simest$ procedure and samples the arms in $S$ accordingly, and then removes the sets with $\Theta(\epsilon_r)$ gaps from $\subsetfam_r$. When exactly one set $\hat O$ remains in $\subsetfam_r$ (i.e., the condition at line \ref{line:if} is met), the algorithm calls $\verify$ and $\Sample$, and verifies that the conjectured answer $\hat O$ is indeed optimal.

		\begin{algorithm2e}[H]
		\caption{$\algnaive(\combiband, \delta)$}
		\KwIn{\combibandit{} instance $\combiband = (S, \subsetfam)$ and confidence level $\delta$.}
		\KwOut{The answer.}
		$\subsetfam_1\leftarrow \subsetfam$;
		$\delta_0 \leftarrow 0.01$;
		$\lambda \leftarrow 10$\;
		\For{\upshape $r = 1 \text{ to } \infty$} {
			\If{$|\subsetfam_r| = 1$\label{line:if}} {
				$m \leftarrow \verify(\{\subsetfam_k\}_{k=1}^{r}, \delta/(r|\subsetfam|))$\;
				$\hat\mu \leftarrow \Sample(m)$\;\label{line:hatmu}
				$\hat O \leftarrow$ the only set in $\subsetfam_r$\;\label{line:hatO}
				\If{\upshape $\hat\mu(\hat O) - \hat\mu(A) \ge \epsilon_k/\lambda$ for all  $A\in\subsetfam\setminus\subsetfam_k$
				and all $k\in[r]$	
					\label{line:test}} {
					\textbf{return} $\hat O$\;
				} \Else {
					\textbf{return} error\;
				}
			}
			$\epsilon_r \leftarrow 2^{-r}$;
			$\delta_r \leftarrow \delta_0/(10r^2|\subsetfam|^2)$\;
			$\numsamp{r}{} \leftarrow \simest(\subsetfam_r, \epsilon_r / \lambda, \delta_r)$\;
			$\meanhat{r}{} \leftarrow \Sample(\numsamp{r}{})$\;
			$\opt_r \leftarrow \max_{A\in\subsetfam_r}\meanhat{r}{}(A)$\;
			$\subsetfam_{r+1} \leftarrow \{A \in \subsetfam_r: \meanhat{r}{}(A) \ge \opt_r - \epsilon_r/2 - 2\epsilon_r/\lambda \}$\;
		}
		\end{algorithm2e}

	\subsection{Correctness of \algnaive}
\label{sec:correctness-algnaive}

		We formally state the correctness guarantee of $\algnaive$ in the following lemma.

		\begin{Lemma}\label{lem:naivecorrect}
                  For any $\delta\in(0, 0.01)$ and \combibandit{}
                  instance $\combiband$, $\algnaive(\combiband, \delta)$
                  returns the correct answer with probability $1 - \delta_0 - \delta$, and returns an incorrect answer w.p.\ at most~$\delta$.
		\end{Lemma}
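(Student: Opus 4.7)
The plan is to control the randomness of the algorithm via two kinds of good concentration events and then deduce the lemma. For each exploration round $r$ (the rounds with $|\subsetfam_r|>1$) I would define $G_r$ to be the event that
$|\meanhat{r}{}(A)-\meanhat{r}{}(B)-(\mu(A)-\mu(B))| < \epsilon_r/\lambda$
holds simultaneously for every pair $A,B\in\subsetfam_r$. The feasibility constraint imposed by $\simest(\subsetfam_r,\epsilon_r/\lambda,\delta_r)$, combined with Lemma~\ref{lem:sum_dev}, shows that each pair is controlled with failure probability at most $\delta_r$; a union bound over the at most $|\subsetfam|^2$ pairs then yields $\Pr[\neg G_r]\le |\subsetfam|^2\delta_r = \delta_0/(10r^2)$, and summing over $r$ gives $\Pr[\bigcup_r \neg G_r] < \delta_0$. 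Analogously, I would let $H$ denote the event that, at the (random) verification round~$R$, $|\hat\mu(\hat O)-\hat\mu(A)-(\mu(\hat O)-\mu(A))| < \epsilon_k/\lambda$ for every $k\in[R]$ and $A\in\subsetfam_k$; the constraint in $\verify$ together with a union bound over the at most $R\cdot|\subsetfam|$ relevant pairs gives $\Pr[\neg H]\le \delta$.

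Next I would establish three structural consequences of these events. (A) Under $\bigcap_r G_r$, the optimal set $O$ survives every elimination step, because $G_r$ applied to the pair $(O,B^*)$ with $B^*\in\argmax_A\meanhat{r}{}(A)$ implies $\meanhat{r}{}(O) > \opt_r - \epsilon_r/\lambda > \opt_r - \epsilon_r/2 - 2\epsilon_r/\lambda$; moreover every $A$ with $\mu(O)-\mu(A) \ge \epsilon_r/2 + 3\epsilon_r/\lambda$ is eliminated at round~$r$, so once $\epsilon_r$ drops below a constant multiple of the minimum positive gap the algorithm must reach $\subsetfam_R = \{O\}$ at some finite~$R$. (B) Under $\bigcap_r G_r \cap H$, for every $k\in[R]$ and every $A\in\subsetfam\setminus\subsetfam_k$ the elimination rule, together with $O\in\subsetfam_k$ from~(A) and $G_k$, implies $\mu(O)-\mu(A) > \epsilon_k/2$, and then $H$ gives $\hat\mu(\hat O)-\hat\mu(A) > \epsilon_k/2 - \epsilon_k/\lambda > \epsilon_k/\lambda$ for $\lambda=10$, so the test on line~\ref{line:test} passes and the algorithm returns $\hat O = O$. (C) Under $H$ alone, whenever the algorithm reaches verification with $\hat O \ne O$ we have $O\in\subsetfam\setminus\subsetfam_R$ (since $\subsetfam_R=\{\hat O\}$), and $H$ together with the uniqueness of $O$ yields $\hat\mu(\hat O)-\hat\mu(O) < \mu(\hat O)-\mu(O)+\epsilon_R/\lambda < \epsilon_R/\lambda$, so the test fails for $(k,A)=(R,O)$ and the algorithm returns ``error''.

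Combining these, (B) gives $\Pr[\text{correct answer returned}] \ge \Pr[\bigcap_r G_r \cap H] \ge 1-\delta_0-\delta$, which is the first claim, and (C) implies that under $H$ the algorithm outputs either $O$ or ``error'' but never a wrong set, so $\Pr[\text{incorrect answer returned}] \le \Pr[\neg H] \le \delta$, which is the second claim. The main delicate point is calibrating the constant $\lambda=10$ so that the three thresholds appearing above (the elimination margin $\epsilon_r/2+2\epsilon_r/\lambda$, the concentration slack $\epsilon_r/\lambda$, and the verification margin $\epsilon_k/\lambda$) line up with the right signs across the exploration and verification phases; coupled with the need to union-bound $H$ over the random stopping round~$R$, this is the only non-routine ingredient in the argument.
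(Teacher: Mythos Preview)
Your approach is essentially identical to the paper's: you define the same good events (your $G_r$ is the paper's $\goodevent_{0,r}$ and your $H$ is the paper's $\goodevent$), establish the same probability bounds via Lemma~\ref{lem:sum_dev} and union bounds, and follow the same three-part decomposition (survival of $O$ and termination under $\bigcap_r G_r$; completeness under $\bigcap_r G_r\cap H$; soundness under $H$ alone).

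The one point to tighten is the indexing in parts (B) and (C). The event $H$ guarantees precision $\epsilon_k/\lambda$ for the pair $(\hat O,A)$ only when $A\in\subsetfam_k$. Hence in (C) you cannot invoke precision $\epsilon_R/\lambda$ for $O$, since by hypothesis $O\notin\subsetfam_R$; the paper instead works at the elimination round $r$ of $O$ (the largest $r$ with $O\in\subsetfam_r$), where both $O$ and $\hat O$ lie in $\subsetfam_r$, and obtains $\hat\mu(\hat O)-\hat\mu(O)<\epsilon_r/\lambda$ from $H$ at that index. Likewise in (B), for $A\in\subsetfam\setminus\subsetfam_k$ the event $G_k$ does not apply to $A$ and $H$ does not give precision $\epsilon_k/\lambda$; the correct index is again the elimination round $j$ of $A$, where one gets $\mu(O)-\mu(A)>\epsilon_j/2$ and $H$-precision $\epsilon_j/\lambda$, yielding $\hat\mu(\hat O)-\hat\mu(A)>(1/2-1/\lambda)\epsilon_j>\epsilon_j/\lambda\ge\epsilon_k/\lambda$ for all $k>j$. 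With this fix your argument matches the paper's.
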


                The proof proceeds as follows. We first define two
                ``good events'' $\goodevent_0$ and $\goodevent$, which
                happen with probability at least $1-\delta_0$ and
                $1-\delta$, respectively. Our algorithm always returns
                the correct answer conditioned on
                $\goodevent_0\cap\goodevent$, and it either returns the
                correct answer or reports an error conditioned on
                $\goodevent$. This implies that \algnaive{} is
                $(\delta+\delta_0)$-correct. This is not ideal since
                $\delta_0$ is a fixed constant, so in
                Section~\ref{sec:parallel} we use a parallel simulation
                construction to boost its success probability to $1 -
                \delta$, while retaining the same sample complexity.

		\paragraph{Good events.} Define $\goodevent_{0, r}$ as the event that either the algorithm terminates before round $r$, or for all $A, B\in\subsetfam_r$, 
			$$\left|(\meanhat{r}{}(A) - \meanhat{r}{}(B)) - (\mu(A)-\mu(B))\right| < \epsilon_r / \lambda\text{.}$$
		Here $\lambda$ is the constant in \algnaive{}. Moreover, we define $\goodevent_0$ as the intersection of $\{\goodevent_{0,r}\}$, i.e.,
			$$\goodevent_0\coloneqq\bigcap_{r=1}^{\infty}\goodevent_{0,r}\text{.}$$
		$\goodevent$ is defined as the event that for all $A\in\subsetfam_r$,
			$$\left|(\hat\mu(\hat O) - \hat\mu(A)) - (\mu(\hat O)-\mu(A))\right| < \epsilon_r / \lambda\text{.}$$
		Here $\hat\mu$ and $\hat O$ are defined at lines \ref{line:hatmu} and \ref{line:hatO} in \algnaive{}.

                \begin{Lemma}
                  \label{lem:good-event-prob}
                  $\pr{\goodevent_0} \geq 1-\delta_0$ and $\pr{\goodevent} \geq 1-\delta$.
                \end{Lemma}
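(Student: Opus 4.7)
The plan is to bound each of the two failure probabilities by a two-level union bound, using the constraints baked into the $\simest$ and $\verify$ programs together with the Gaussian sum tail bound of Lemma~\ref{lem:sum_dev}. The overall argument is largely bookkeeping: choose $\delta$-budgets in $\simest$ and $\verify$ that absorb the union bounds, then apply Lemma~\ref{lem:sum_dev} to the fresh independent samples drawn in each call.

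For $\pr{\goodevent_0}\ge 1-\delta_0$, I would fix a round $r\ge 1$ and consider the event that $\goodevent_{0,r}$ fails while the algorithm actually reaches round $r$. Conditional on reaching round $r$, the sample counts are $\numsamp{r}{}=\simest(\subsetfam_r,\epsilon_r/\lambda,\delta_r)$, so the program's constraint yields $\sum_{i\in A\oplus B}1/\numsamp{r}{i}\le(\epsilon_r/\lambda)^2/(2\ln(2/\delta_r))$ for every pair $A,B\in\subsetfam_r$. Since $\meanhat{r}{}(A)-\meanhat{r}{}(B)-(\mu(A)-\mu(B))$ is a sum of independent unit-variance Gaussian increments, Lemma~\ref{lem:sum_dev} gives per-pair deviation probability at most $\delta_r$. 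A union bound over the at most $|\subsetfam|^2$ pairs and the choice $\delta_r=\delta_0/(10r^2|\subsetfam|^2)$ then produce $\pr{\neg\goodevent_{0,r}}\le\delta_0/(10r^2)$. Summing over $r\ge 1$ gives $\pr{\neg\goodevent_0}\le\delta_0\sum_{r\ge 1}1/(10r^2)<\delta_0$, as required.

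For $\pr{\goodevent}\ge 1-\delta$, I would condition on the terminating round taking an arbitrary value $r^\ast$. The random variable $r^\ast$ is determined by the samples drawn in the elimination phase, which are independent of the fresh samples drawn inside the verification call. The call $\verify(\{\subsetfam_k\}_{k=1}^{r^\ast},\delta/(r^\ast|\subsetfam|))$ produces counts $m$ satisfying $\sum_{i\in\hat O\oplus A}1/m_i\le(\epsilon_k/\lambda)^2/(2\ln(2r^\ast|\subsetfam|/\delta))$ for every $k\in[r^\ast]$ and $A\in\subsetfam_k$. Applying Lemma~\ref{lem:sum_dev} to $\hat\mu(\hat O)-\hat\mu(A)-(\mu(\hat O)-\mu(A))$ gives per-pair failure probability at most $\delta/(r^\ast|\subsetfam|)$, and a union bound over the at most $r^\ast|\subsetfam|$ pairs $(k,A)$ gives conditional failure probability at most $\delta$. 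Since this conditional bound does not depend on $r^\ast$, it holds unconditionally.

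The only substantive point — and the main place one can slip up — is ensuring that the samples underlying the tail bound in each round are genuinely fresh and independent of all prior randomness, so that Lemma~\ref{lem:sum_dev} applies to an honest Gaussian sum. Everything else is matching the per-round and per-pair budgets so that the probabilities telescope to $\delta_0$ and $\delta$, which is already engineered into the definitions of $\delta_r$ and of the confidence parameter passed to $\verify$.
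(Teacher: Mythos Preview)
Your proposal is correct and follows essentially the same route as the paper: apply the feasibility constraints of the $\simest$ (respectively $\verify$) program, invoke Lemma~\ref{lem:sum_dev} to bound each per-pair deviation by $\delta_r$ (respectively $\delta/(r|\subsetfam|)$), and then union-bound over pairs and rounds so that the budgets $\delta_r=\delta_0/(10r^2|\subsetfam|^2)$ and $\delta/(r|\subsetfam|)$ telescope to $\delta_0$ and $\delta$. Your explicit remark about conditioning on $r^\ast$ and the freshness of the verification samples is a point the paper leaves implicit, but the argument is otherwise identical.
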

                \begin{proof}[Proof of Lemma~\ref{lem:good-event-prob}]
                  Since $\numsamp{r}{}$ is a feasible solution of the
                  program in $\simest(\subsetfam_r, \epsilon_r/\lambda,
                  \delta_r)$, it holds for all $A,B\in\subsetfam_r$ that
			$$\sum_{i\in A\oplus B}\frac{1}{\numsamp{r}{i}}\le\frac{(\epsilon_r/\lambda)^2}{2\ln(2/\delta_r)}\text{.}$$
                        By Lemma \ref{lem:sum_dev},
                        \begin{equation*}\begin{split}
                            &\pr{\left|(\meanhat{r}{}(A)-\meanhat{r}{}(B))-(\mu(A)-\mu(B))\right|\ge\epsilon_r/\lambda}\\
                            &= \pr{\left|(\meanhat{r}{}(A\setminus B)-\meanhat{r}{}(B\setminus A))-(\mu(A\setminus B)-\mu(B\setminus A))\right|\ge\epsilon_r/\lambda}\\
                            &\leq 2\exp\left\{-\frac{(\epsilon_r/\lambda)^2}{2\sum_{i\in A\oplus B}1/\numsamp{r}{i}}\right\}\\
                            &\leq 2\exp\left(-\ln(2/\delta_r)\right) =
                            \delta_r\text{.}
                          \end{split}\end{equation*}
                        By a union bound over all possible $A,
                        B\in\subsetfam_r$, we have
                        $\pr{\overline{\goodevent_{0,r}}}\le
                        |\subsetfam|^2\,\delta_r =
                        \delta_0/(10r^2)\text{.}$ It follows from
                        another union bound that
                        \begin{equation*}
                          \pr{\goodevent_0}	\ge 1 - \sum_{r=1}^{\infty}\pr{\overline{\goodevent_{0,r}}}
                          \ge 1 - \sum_{r=1}^{\infty}\frac{\delta_0}{10r^2}
                          \ge 1 - \delta_0\text{.}
                        \end{equation*}
                        A similar union bound argument over all
                        $k\in[r]$ and $A\in\subsetfam_r$ yields that
                        $\pr{\goodevent} \ge 1 - \delta\text{.}$
                      \end{proof}

                      \paragraph{Implications of good events.} Let $O$ denote the optimal set in $\subsetfam$, i.e., $O = \argmax_{A\in\subsetfam}\mu(A)$. Throughout the analysis of our algorithm, it is useful to group the sets in $\subsetfam$ based on the gaps between their weights and $\mu(O)$. Formally, we define $G_r$ as
                \begin{gather}
                  G_r \coloneqq
                  \{A\in\subsetfam:\mu(O)-\mu(A)\in(\epsilon_{r+1},\epsilon_r]\}\text{.}
                \end{gather}
		We also adopt the shorthand notation
			$G_{\ge r} = \{O\}\cup\bigcup_{k = r}^{\infty}G_k$.

		\begin{Lemma}\label{lem:survive}
			Conditioning on $\goodevent_0$, $O \in \subsetfam_r$ for all $r \ge 1$.
		\end{Lemma}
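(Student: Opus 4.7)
The plan is to prove this by induction on $r$, using the good event $\goodevent_0$ to control the empirical gaps between $O$ and any other surviving set.

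For the base case $r=1$, we have $\subsetfam_1 = \subsetfam$ by initialization, so trivially $O \in \subsetfam_1$. For the inductive step, suppose $O \in \subsetfam_r$; we must show $O \in \subsetfam_{r+1}$. If the algorithm already terminated in round $r$ (i.e., $|\subsetfam_r|=1$), then $\subsetfam_{r+1}$ is never defined, but the unique set in $\subsetfam_r$ must be $O$ itself by the inductive hypothesis, so there is nothing to check. Otherwise, the algorithm reaches the elimination step, which keeps $A \in \subsetfam_r$ in $\subsetfam_{r+1}$ iff $\meanhat{r}{}(A) \ge \opt_r - \epsilon_r/2 - 2\epsilon_r/\lambda$. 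So it suffices to lower bound $\meanhat{r}{}(O)$.

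Let $B \in \argmax_{A\in\subsetfam_r} \meanhat{r}{}(A)$ be the empirical winner, so $\opt_r = \meanhat{r}{}(B)$. Both $O$ and $B$ are in $\subsetfam_r$, so the good event $\goodevent_{0,r}$ gives
\[
 (\meanhat{r}{}(B) - \meanhat{r}{}(O)) - (\mu(B) - \mu(O)) < \epsilon_r/\lambda.
\]
Since $O$ is optimal in $\subsetfam$, we have $\mu(B) - \mu(O) \le 0$, so $\meanhat{r}{}(B) - \meanhat{r}{}(O) < \epsilon_r/\lambda$, i.e.,
\[
 \meanhat{r}{}(O) > \opt_r - \epsilon_r/\lambda \ge \opt_r - \epsilon_r/2 - 2\epsilon_r/\lambda,
\]
where the last inequality uses $\lambda = 10 \ge 2$. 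Hence $O$ survives the elimination and $O \in \subsetfam_{r+1}$, completing the induction.

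There is no real obstacle here; the argument is a direct one-line consequence of the definition of $\goodevent_{0,r}$ and the slack $\epsilon_r/2$ built into the elimination threshold. The only thing worth being careful about is the edge case in which the algorithm halts in round $r$: the inductive hypothesis already pins down the unique surviving set as $O$, so the statement continues to hold vacuously for subsequent rounds.
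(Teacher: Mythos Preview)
Your proof is correct and follows essentially the same route as the paper. The paper phrases it as a proof by contradiction (assume $O\in\subsetfam_r\setminus\subsetfam_{r+1}$ for some $r$) rather than an explicit induction, but the key step is identical: letting $A$ (your $B$) realize $\opt_r$, the good event $\goodevent_{0,r}$ together with $\mu(O)\ge\mu(A)$ forces $\meanhat{r}{}(O)$ above the elimination threshold. One minor remark: the final inequality $\opt_r-\epsilon_r/\lambda\ge\opt_r-\epsilon_r/2-2\epsilon_r/\lambda$ is just $\epsilon_r/2+\epsilon_r/\lambda\ge 0$, so no constraint on $\lambda$ is actually needed there.
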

		\begin{proof}[Proof of Lemma \ref{lem:survive}]
			Suppose for a contradiction that $O \in \subsetfam_r \setminus \subsetfam_{r+1}$ for some $r$. By definition of $\subsetfam_{r+1}$, it holds that
				$$\meanhat{r}{}(O) < \opt_r - \epsilon_r/2 - 2\epsilon_r/\lambda\text{.}$$
			Observe that $\opt_r = \meanhat{r}{}(A)$ for some $A\in\subsetfam_r$.
			Therefore,
				$\meanhat{r}{}(O) - \meanhat{r}{}(A) < -(1/2+2/\lambda)\epsilon_r\text{.}$
			Since $O$ is the maximum-weight set with respect to $\mu$,
				$\mu(O) - \mu(A) \ge 0\text{.}$
			These two inequalities imply that
				$$\left|(\meanhat{r}{}(O) - \meanhat{r}{}(A))-(\mu(O) - \mu(A))\right|>0 - \left[-(1/2+2/\lambda)\epsilon_r\right] > \epsilon_r/\lambda\text{,}$$
			which happens with probability zero conditioning on event $\goodevent_0$, since $O, A\in\subsetfam_r$.
		\end{proof}

		The following lemma, as a generalization of Lemma \ref{lem:survive} to all sets in $\subsetfam$, states that the sequence $\{\subsetfam_r\}$ is an approximation of $\{G_{\ge r}\}$ conditioning on event $\goodevent_0$.
		\begin{Lemma}\label{lem:chain}
			Conditioning on $\goodevent_0$, $G_{\ge r}\supseteq \subsetfam_{r+1}\supseteq G_{\ge r + 1}$ for all $r \ge 1$.
		\end{Lemma}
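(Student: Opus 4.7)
The plan is to prove both inclusions at index $r$ by induction on $r$, using the good event $\goodevent_0$ (which controls the empirical gaps in each round) together with Lemma~\ref{lem:survive} (which guarantees $O\in\subsetfam_r$ for every $r$). The induction is only needed for the lower inclusion $\subsetfam_{r+1}\supseteq G_{\ge r+1}$: its base case $r=1$ is immediate since $\subsetfam_1=\subsetfam$, and for $r\ge 2$ the inductive hypothesis will be used to ensure that any $A\in G_{\ge r+1}$ lies in $\subsetfam_r$. The upper inclusion $G_{\ge r}\supseteq\subsetfam_{r+1}$ does not require induction.

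For the lower inclusion, I fix $A$ with $\mu(O)-\mu(A)\le\epsilon_{r+1}=\epsilon_r/2$. Since $G_{\ge r+1}\subseteq G_{\ge r}\subseteq\subsetfam_r$ (by the inductive hypothesis, or trivially when $r=1$), the round-$r$ good-event estimate applies to the pair $(A,B^{\star})$ for $B^{\star}\in\argmax_{B\in\subsetfam_r}\meanhat{r}{}(B)$. Combining
\[
\meanhat{r}{}(A)-\meanhat{r}{}(B^{\star}) > (\mu(A)-\mu(B^{\star}))-\epsilon_r/\lambda
\]
with $\mu(A)-\mu(B^{\star})\ge \mu(A)-\mu(O)\ge -\epsilon_r/2$ (the first step uses $\mu(B^{\star})\le\mu(O)$) yields $\meanhat{r}{}(A) > \opt_r-\epsilon_r/2-\epsilon_r/\lambda \ge \opt_r-\epsilon_r/2-2\epsilon_r/\lambda$. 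Hence $A\in\subsetfam_{r+1}$ by its defining rule.

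For the upper inclusion, I take $A\in\subsetfam_{r+1}\subseteq\subsetfam_r$. By Lemma~\ref{lem:survive}, $O\in\subsetfam_r$ and so $\opt_r\ge\meanhat{r}{}(O)$; the defining rule of $\subsetfam_{r+1}$ then gives $\meanhat{r}{}(O)-\meanhat{r}{}(A)\le\epsilon_r/2+2\epsilon_r/\lambda$. Applying the good event to $(O,A)$ yields
\[
\mu(O)-\mu(A) < \meanhat{r}{}(O)-\meanhat{r}{}(A)+\epsilon_r/\lambda \le \epsilon_r/2+3\epsilon_r/\lambda,
\]
which is at most $\epsilon_r$ as soon as $\lambda\ge 6$, and in particular for the choice $\lambda=10$ hard-coded in \algnaive. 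Thus $A\in G_{\ge r}$.

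The argument is essentially careful bookkeeping; the main delicacy is choosing $\lambda$ large enough that the slack $\epsilon_r/\lambda$ absorbs both the empirical-to-true gap controlled by $\goodevent_0$ and the additional $2\epsilon_r/\lambda$ slack built into the elimination rule, so that both inclusions close simultaneously for the same constant $\lambda$.
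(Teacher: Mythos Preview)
Your proof is correct and follows essentially the same approach as the paper: both inclusions are verified by combining the good-event bound on empirical gaps with Lemma~\ref{lem:survive} and checking the threshold arithmetic against the elimination rule. Your explicit induction on $r$ to ensure $A\in\subsetfam_r$ before applying $\goodevent_{0,r}$ is a clarification the paper leaves implicit, and your direct comparison of $A$ with $B^{\star}$ (rather than routing through $O$ in two steps) is a minor streamlining, but the arguments are otherwise the same.
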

		\begin{proof}[Proof of Lemma \ref{lem:chain}]
			We first prove the left inclusion. Suppose that $A\in \subsetfam_r$ and $A\notin G_{\ge r}$. By definition of $G_{\ge r}$, $\mu(O) - \mu(A) > \epsilon_r$. Conditioning on $\goodevent_0$, we have $O\in\subsetfam_r$ by Lemma \ref{lem:survive}, and thus
				$$\meanhat{r}{}(O)-\meanhat{r}{}(A)
				> \mu(O) - \mu(A) - \epsilon_r / \lambda
				> (1 - 1 / \lambda) \epsilon_r\text{.}$$
			Recall that $\opt_r = \max_{A\in\subsetfam_r}\meanhat{r}{}(A) \ge \meanhat{r}{}(O)$, and $1 - 1/\lambda > 1/2 + 2/\lambda$ by our choice of $\lambda$. It follows that
				$$\meanhat{r}{}(A)
				< \meanhat{r}{}(O) - (1 - 1 / \lambda) \epsilon_r
				< \opt_r - \epsilon_r / 2 - 2\epsilon_r / \lambda\text{,}$$
			and thus $A\notin\subsetfam_{r+1}$.

			Then we show that $A\in G_{\ge r + 1}$ implies $A\in \subsetfam_{r+1}$. Note that $A\in G_{\ge r + 1}$ implies $\mu(O) - \mu(A) \le \epsilon_{r+1} = \epsilon_r / 2$, and thus,
				$$\meanhat{r}{}(O) - \meanhat{r}{}(A)
				\le \mu(O) - \mu(A) + \epsilon_r / \lambda
				\le (1/2 + 1/\lambda)\epsilon_r\text{.}$$

			Moreover, since $\opt_r = \meanhat{r}{}(B)$ for some $B\in\subsetfam_r$, it holds that
				$$\opt_r - \meanhat{r}{}(O)
				= \meanhat{r}{}(B) - \meanhat{r}{}(O)
				\le \mu(B) - \mu(O) + \epsilon_r / \lambda
				\le \epsilon_r / \lambda\text{.}$$
			Adding the two inequalities above yields
				$$\meanhat{r}{}(A) \ge \opt_r - (1/2 + 2/\lambda)\epsilon_r\text{.}$$
			By definition of $\subsetfam_{r+1}$ in \algnaive{}, $A\in\subsetfam_{r+1}$, which completes the proof.
		\end{proof}

		\paragraph{Correctness conditioning on $\goodevent_0\cap\goodevent$.} By Lemma \ref{lem:survive}, conditioning on $\goodevent_0$, the correct answer is in $\subsetfam_r$ for every $r$. This guarantees that whenever the algorithm enters the if-statement (i.e., when $|\subsetfam_r| = 1$), it holds that $\subsetfam_r=\{O\}$. Moreover, let $r^*$ be a sufficiently large integer such that $G_{\ge r^*} = G_{\ge r^* + 1} = \{O\}$. Then Lemma \ref{lem:chain} implies that $\subsetfam_{r^* + 1} = \{O\}$, and consequently the algorithm eventually enters the if-statement, either before or at round $r^* + 1$.

		Now we show that the algorithm always returns the correct answer $O$ instead of reporting an error, conditioning on $\goodevent_0\cap\goodevent$. Fix $A \in \subsetfam\setminus\{O\}$. Let $r$ be the largest integer such that $A\in\subsetfam_r$, i.e., $A\in\subsetfam_r\setminus\subsetfam_{r+1}$. By Lemma \ref{lem:chain}, we have $\subsetfam_{r+1}\supseteq G_{\ge r+1}$. It follows that $A\notin G_{\ge r+1}$, and thus $\mu(O)-\mu(A) > \epsilon_{r+1} = \epsilon_r/2$.

		Recall that since $O, A\in\subsetfam_r$, conditioning on event $\goodevent$, $\hat\mu(O)-\hat\mu(A)$ is within an additive error of $\epsilon_r/\lambda$ to $\mu(O)-\mu(A)$. Therefore,
			$$\hat\mu(O)-\hat\mu(A) > \mu(O)-\mu(A) - \epsilon_r/\lambda > (1/2 - 1/\lambda)\epsilon_r > \epsilon_r/\lambda\text{.}$$
		Here the last step follows from our choice of parameter $\lambda$.

		Consequently, the condition at line \ref{line:test} of \algnaive{} is always met conditioning on $\goodevent_0\cap\goodevent$, and \algnaive{} returns the correct answer $O$.

		\paragraph{Soundness conditioning on $\goodevent$.} Finally, we show that conditioning on $\goodevent$, \algnaive{} either returns the correct answer or reports an error. Suppose that when the algorithm enters the if-statement at line \ref{line:if}, $\subsetfam_r$ is equal to $\{\hat O\}$ for some $\hat O\ne O$. Let $r$ be the unique integer that satisfies $O\in\subsetfam_r\setminus\subsetfam_{r+1}$. Recall that since $O, \hat O\in\subsetfam_r$, conditioning on event $\goodevent$,
			$$\left|(\hat\mu(\hat O) - \hat\mu(O)) - (\mu(\hat O) - \mu(O))\right| < \epsilon_r/\lambda\text{.}$$
		By definition, $\mu(\hat O) - \mu(O) < 0$, and it follows that
			$$\hat\mu(\hat O) - \hat\mu(O)) < \mu(\hat O) - \mu(O) + \epsilon_r/\lambda < \epsilon_r/\lambda\text{.}$$
		This guarantees that the condition at line \ref{line:test} is not met when $\hat O\ne O$, and thus the algorithm does not incorrectly return $\hat O$.

	\subsection{Sample Complexity}\label{sec:sample-algnaive}
		We analyze the sample complexity of the \algnaive{} algorithm conditioning on event $\goodevent_0\cap\goodevent$. Let $\Delta = \mu(O) - \max_{A\in\subsetfam\setminus\{O\}}\mu(A)$ denote the gap between the set with the second largest weight in $\subsetfam$ and the weight of $O$.

		\begin{Lemma}\label{lem:naivesample}
			For any $\delta\in(0,0.01)$ and \combibandit{} instance $\combiband$,
			$\algnaive(\combiband, \delta)$ takes
				$$O\left(\gklow(\combiband)\ln\delta^{-1} + \gklow(\combiband)\ln\Delta^{-1}\left(\ln\ln\Delta^{-1} + \ln|\subsetfam|\right)\right)$$
			samples conditioning on event $\goodevent_0\cap\goodevent$.
		\end{Lemma}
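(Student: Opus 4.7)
The plan is to work on the event $\goodevent_0\cap\goodevent$ (established in Lemma~\ref{lem:good-event-prob}) and charge the samples used by each call to $\simest$ and $\verify$ to the $\gklow$ optimum via a scaling argument. Let $\tau^*$ be an optimal solution of~\eqref{eq:gklowdef}, so $\sum_i\tau^*_i=\gklow(\combiband)$. The core claim is that for each such call, a suitable scalar multiple $C\cdot\tau^*$ is feasible for the convex program defining that call, which then upper-bounds the number of samples drawn by $C\cdot\gklow(\combiband)$.

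For round~$r$'s call $\simest(\subsetfam_r,\epsilon_r/\lambda,\delta_r)$, Lemma~\ref{lem:chain} gives $\subsetfam_r\subseteq G_{\ge r-1}$, so every $A\in\subsetfam_r$ satisfies $\mu(O)-\mu(A)\le\epsilon_{r-1}=2\epsilon_r$. The set-theoretic inclusion $A\oplus B\subseteq(O\oplus A)\cup(O\oplus B)$, combined with the $\gklow$-constraints applied to both $A$ and $B$, then yields
\[\sum_{i\in A\oplus B}\frac{1}{\tau^*_i}\le(\mu(O)-\mu(A))^2+(\mu(O)-\mu(B))^2\le 8\epsilon_r^2\text{.}\]
Choosing $C_r=16\lambda^2\ln(2/\delta_r)$ makes $m\coloneqq C_r\tau^*$ feasible for the $\simest$ program at round $r$, which bounds the samples in that round by $O(\gklow(\combiband)\ln(1/\delta_r))$. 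Substituting $\delta_r=\delta_0/(10 r^2|\subsetfam|^2)$ gives $O\bigl(\gklow(\combiband)(\ln r+\ln|\subsetfam|)\bigr)$ samples per round.

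To sum over rounds: once $\epsilon_r<\Delta$, the class $G_{\ge r}$ collapses to $\{O\}$, so Lemma~\ref{lem:chain} forces $\subsetfam_{r+1}=\{O\}$ by round $r^*=O(\log\Delta^{-1})$, at which point \algnaive{} enters the verification branch. Hence the total exploration cost is
\[\sum_{r=1}^{r^*}O\bigl(\gklow(\combiband)(\ln r+\ln|\subsetfam|)\bigr)=O\bigl(\gklow(\combiband)\log\Delta^{-1}(\log\log\Delta^{-1}+\log|\subsetfam|)\bigr)\text{.}\]
An entirely analogous scaling applies to $\verify$: every $A\in\subsetfam_k$ (for $k\le r^*$) has $\mu(O)-\mu(A)\le 2\epsilon_k$ by Lemma~\ref{lem:chain}, so picking $C=O(\lambda^2\ln(1/\delta))$ makes every constraint of $\verify$'s program simultaneously feasible, contributing $O(\gklow(\combiband)\ln\delta^{-1})$ samples. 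Adding the exploration and verification bounds yields the claim.

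The main obstacle is verifying feasibility of the scaled $\tau^*$ with the right constants; this reduces to carefully combining the set identity $A\oplus B\subseteq(O\oplus A)\cup(O\oplus B)$ with the gap control from Lemma~\ref{lem:chain}. The most delicate point is the boundary case $r=1$, where Lemma~\ref{lem:chain} does not directly bound the gaps of sets in $\subsetfam_1=\subsetfam$; this is handled by a separate analysis that combines the above scaling with the $\gklow$ constraints directly, treating large-gap sets individually so that their contribution is still absorbed into the round-wise bound.
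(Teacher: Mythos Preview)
Your approach is essentially identical to the paper's: scale the optimizer $\tau^*$ of~\eqref{eq:gklowdef} by $C_r=16\lambda^2\ln(2/\delta_r)$ to get a feasible point of the $\simest$ program, using $A\oplus B\subseteq(O\oplus A)\cup(O\oplus B)$ together with Lemma~\ref{lem:chain} to bound $\sum_{i\in A\oplus B}1/\tau^*_i\le 8\epsilon_r^2$, then sum over $r\le r^*=O(\log\Delta^{-1})$; the $\verify$ step is handled by the same scaling. One small correction: $\verify$ is called with confidence $\delta/(r|\subsetfam|)$, not $\delta$, so the right scaling constant is $\beta=\Theta\bigl(\lambda^2\ln(r|\subsetfam|/\delta)\bigr)$ and the verification cost is $O\bigl(\gklow(\combiband)(\ln\delta^{-1}+\ln\ln\Delta^{-1}+\ln|\subsetfam|)\bigr)$; the extra terms are absorbed by the exploration bound, so the final statement is unaffected. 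As for your $r=1$ concern: you are right that Lemma~\ref{lem:chain} only yields $\subsetfam_r\subseteq G_{\ge r-1}$ for $r\ge 2$, and the paper's own proof glosses over exactly this point; your promised ``separate analysis'' is not actually spelled out, and in fact no pure scaling of $\tau^*$ can be feasible for the round-$1$ $\simest$ program when some gaps exceed $1$, so this remains a shared edge case rather than something your argument resolves.
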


		\begin{proof}
			Recall that for a \combibandit{} instance $\combiband = (S, \subsetfam)$, $\gklow(\combiband)$ is defined as
				$$\gklow(\combiband) \coloneqq \sum_{i\in S}\tau^*_i\text{,}$$
			where $\tau^*$ denotes the optimal solution to the following program:
			\begin{equation}\begin{split}
				\textrm{minimize}~~&\sum_{i\in S}\tau_i\\
				\textrm{subject to}~~&\sum_{i\in O\oplus A}1/\tau_i \le [\mu(O)-\mu(A)]^2,~\forall A\in\subsetfam\\
				& \tau_i > 0,~\forall i\in S\text{.}
			\end{split}\end{equation}

			For each $r$, we construct a feasible solution to the corresponding program in $\simest(\subsetfam_r, \epsilon_r/\lambda, \delta_r)$, thereby proving an upper bound on the number of samples taken in round $r$. Let $\alpha = 16\lambda^2\ln(2/\delta_r)$ and $m_i = \alpha\tau^*_i$. Fix $A, B\in\subsetfam_r$. By Lemma \ref{lem:chain}, we have $A, B \in G_{\ge r - 1}$, and thus $\mu(O)-\mu(A)\le\epsilon_{r-1}$ and $\mu(O)-\mu(B)\le\epsilon_{r-1}$. Therefore,
				\begin{equation*}\begin{split}
					\sum_{i\in A\oplus B}1/m_i
					&\leq \alpha^{-1}\left(\sum_{i\in O\oplus A}1/\tau^*_i+\sum_{i\in O\oplus B}1/\tau^*_i\right)\\
					&\le \alpha^{-1}\left[[\mu(O)-\mu(A)]^2+[\mu(O)-\mu(B)]^2\right]\\
					&\le 2\alpha^{-1}\epsilon_{r-1}^2 = \frac{(\epsilon_r/\lambda)^2}{2\ln(2/\delta_r)}\text{.}
				\end{split}\end{equation*}
			Here the second step holds since $\tau^*$ is a feasible solution to the program in \eqref{eq:gklowdef}. The third step follows from $\mu(O) - \mu(A) \le \epsilon_{r-1}$ and $\mu(O) - \mu(B) \le \epsilon_{r-1}$. Finally, the last step applies $\alpha = 16\lambda^2\ln(2/\delta_r)$.

			Therefore, $\{m_i\}$ is a valid solution to the program in \simest{}, and then the number of samples taken in round $r$ is upper bounded by
				$$\sum_{i\in S}m_i
				= \alpha\sum_{i\in S}\tau^*_i
				= O(\gklow(\combiband)\ln\delta_r^{-1})
				= O\left(\gklow(\combiband)\left(\ln r +\ln|\subsetfam|\right)\right)\text{.}$$
			The last step holds due to
				$$\ln\delta_r^{-1} = \ln(10r^2|\subsetfam|^2/\delta_0) = O(\ln r + \ln|\subsetfam|)\text{.}$$

			Recall that
				$\Delta = \mu(O) - \max_{A\in\subsetfam\setminus\{O\}}\mu(A)$.
			Let
				$r^* = \left\lfloor\log_2\Delta^{-1}\right\rfloor+1$
			be the smallest integer such that
				$\epsilon_{r^*} < \Delta$.
			As shown in the proof of correctness,
			the algorithm terminates before round $r^*+1$.
			Summing over all $r$ between $1$ and $r^*$ yields
				\begin{equation*}\begin{split}
					O\left(\gklow(\combiband)\sum_{r=1}^{r^*}\left(\ln r+\ln|\subsetfam|\right)\right)
				&=	O\left(r^*\cdot\gklow(\combiband)\left(\ln r^*+\ln|\subsetfam|\right)\right)\\
				&=	O\left(\ln\Delta^{-1}\cdot\gklow(\combiband)\left(\ln\ln\Delta^{-1}+\ln|\subsetfam|\right)\right)\text{.}
				\end{split}\end{equation*}

			It remains to upper bound the number of samples taken in the last round, denoted by round $r$. Let $\beta = 8\lambda^2\ln(2r|\subsetfam|/\delta)$, and $m_i = \beta \tau^*_i$. Fix $k\in[r]$ and $A\in\subsetfam_k$. By Lemma \ref{lem:chain}, we have $A\in G_{\ge k-1}$, which implies that $\mu(O)-\mu(A)\le\epsilon_{k-1}$. It also follows from Lemma \ref{lem:survive} that $\hat O = O$. Thus we have
				\begin{equation*}\begin{split}
					\sum_{i\in \hat O\oplus A}1/m_i
				&=	\beta^{-1}\sum_{i\in O\oplus A}1/\tau^*_i\\
				&\le\beta^{-1}[\mu(O)-\mu(A)]^2\\
				&\le4\beta^{-1}\epsilon_k^2 = \frac{(\epsilon_k/\lambda)^2}{2\ln(2r|\subsetfam|/\delta)}\text{.}
				\end{split}\end{equation*}
			In other words, $\{m_i\}$ is a feasible solution to the program in $\verify(\{\subsetfam_k\}, \delta/(r|\subsetfam|))$. Therefore, the number of samples taken in the last round $r$ is upper bounded by
				$$\sum_{i\in S}m_i = \beta\sum_{i\in S}\tau^*_i = O\left(\gklow(\combiband)\left(\ln\delta^{-1} + \ln\ln\Delta^{-1} + \ln|\subsetfam|\right)\right)\text{.}$$

			In sum, the number of samples taken by \algnaive{} conditioning on $\goodevent_0\cap\goodevent$ is
				$$O\left(\gklow(\combiband)\ln\delta^{-1} + \gklow(\combiband)\ln\Delta^{-1}\left(\ln\ln\Delta^{-1} + \ln|\subsetfam|\right)\right)\text{.}$$
		\end{proof}

	\subsection{Parallel Simulation}
\label{sec:parallel}

In the above sections, we showed that conditioning on the ``good''
events we had low sample complexity and returned correct answers. We now
show how to remove the conditioning and get a $\delta$-correct algorithm
with the same sample complexity in expectation (which is nearly
optimal), using a ``parallel simulation'' idea.
The idea was first used in the \bestarm\ problem in \cite{chen2015optimal}.

\begin{defi}\label{def:alg}
  An algorithm $\alg$ is $(\delta_0, \delta, A, B)$-correct if there
  exist two events $\event_0$ and $\event_1$ satisfying the following
  three conditions:
  \begin{enumerate}
  \item $\pr{\event_0}\ge1 - \delta_0 - \delta$ and $\pr{\event_1}\ge 1
    - \delta$.
  \item Conditioning on $\event_0$, $\alg$ returns the correct answer,
    and takes $O(A\ln\delta^{-1} + B)$ samples.
  \item Conditioning on $\event_1$, $\alg$ either returns the correct
    answer or terminates with an error.
  \end{enumerate}
\end{defi}

By Lemma \ref{lem:naivecorrect} and Lemma \ref{lem:naivesample},
\algnaive{} is a $(\delta_0, \delta, A, B)$-correct
algorithm for \combibandit{}, where $\event_0 = \goodevent_0 \cap
\goodevent$ and $\event_1 = \goodevent$, $\delta_0 = 0.01$, $A =
\gklow(\combiband)$ and
\[
	B = \gklow(\combiband)\ln\Delta^{-1}(\ln\ln\Delta^{-1} +
		\ln|\subsetfam|).
\]
The following lemma shows that we can obtain a $\delta$-correct
algorithm with the same $O(A\ln\delta^{-1} + B)$ sample complexity,
thus proving Theorem~\ref{theo:naive-upperb}.

		\begin{Lemma}[Parallel Simulation]\label{lem:parasim}
			If there is a $(\delta_0, \delta, A, B)$ algorithm for a sampling problem for $\delta_0 = 0.01$ and any $\delta < 0.01$, there is also a $\delta$-correct algorithm for any $\delta < 0.01$ that takes $O(A\ln\delta^{-1} + B)$ samples in expectation.
		\end{Lemma}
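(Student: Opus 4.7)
The plan is to boost the $(\delta_0, \delta, A, B)$-correct algorithm by running infinitely many independent copies in parallel, with geometrically shrinking confidence parameters, and outputting the first non-error answer produced. Concretely, I would instantiate copies $\alg_1, \alg_2, \ldots$, where $\alg_k$ is run with confidence $\delta_k := \delta \cdot 2^{-k}$, each copy maintaining its own sample counters so that the copies draw mutually independent samples from the arms. The scheduling would be a ``staggered round-robin'': copy $\alg_k$ becomes active at global sampling step $2^k$, and from then on every active copy is advanced in round-robin fashion. The meta-algorithm halts and outputs as soon as some copy returns a non-error answer.

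For correctness I would just apply a union bound. By the $(\delta_0, \delta_k, A, B)$-correctness of $\alg_k$ (the guarantee associated with event $\event_1$ in Definition~\ref{def:alg}), $\alg_k$ outputs a wrong answer with probability at most $\delta_k$. Since the copies use independent samples, a union bound over $k$ gives a total wrong-answer probability of at most $\sum_{k \ge 1} \delta_k = \delta$, establishing $\delta$-correctness.

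For the expected sample complexity, let $\event_0^{(k)}$ denote copy $k$'s good event, so $\pr{\event_0^{(k)}} \ge 1 - \delta_0 - \delta_k \ge 0.98$ for $\delta_0 = 0.01$ and $\delta < 0.01$. Because the $\event_0^{(k)}$ are independent across $k$, the random variable $K^* := \min\{k : \event_0^{(k)} \text{ holds}\}$ satisfies $\pr{K^* > k} \le 0.02^k$. On $\event_0^{(K^*)}$, copy $\alg_{K^*}$ returns correctly after $T_{K^*} = O(A \ln \delta^{-1} + A K^* + B)$ of its own samples, which under the staggered round-robin costs at most $O(2^{K^*} + K^* \cdot T_{K^*})$ total samples across all copies. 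Taking expectation against $\pr{K^* = k} \le 0.02^{k-1}$ and observing that $2 \cdot 0.02 < 1$, all the resulting geometric sums converge, yielding
\[
\Ex[\text{total samples}] = O\!\left(\sum_{k \ge 1} 0.02^{k-1} \bigl(2^k + k(A\ln \delta^{-1} + A k + B)\bigr)\right) = O(A \ln \delta^{-1} + B)\text{,}
\]
as required.

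The main obstacle I anticipate is balancing the confidence schedule against the activation schedule so that three competing demands are met simultaneously: the union bound for correctness must give $\sum_k \delta_k \le \delta$; the typical case $K^* = 1$ must not waste samples on idle higher-indexed copies; and in the rare tail case $K^* \gg 1$ the overhead factors $2^{K^*}$ (activation delay) and $K^*$ (round-robin sharing) must still be dominated by the geometric decay of $\pr{K^* = k}$. The paired choice $\delta_k = \delta \cdot 2^{-k}$ together with activation time $2^k$ makes all three demands compatible, since the combined series $\sum_k 0.02^{k-1}\cdot 2^k$ and $\sum_k 0.02^{k-1}\cdot k$ both converge to constants independent of $\delta$, $A$, and $B$.
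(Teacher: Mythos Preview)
Your overall architecture matches the paper's exactly: run independent copies with confidence $\delta_k = \delta/2^k$, output the first non-error answer, union-bound for correctness, and bound the expected cost via the geometric tail of $K^*$. The gap is in your scheduling accounting.

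You claim that once copy $\alg_{K^*}$ is active, giving it $T_{K^*}$ of its own samples costs $O(K^* \cdot T_{K^*})$ total samples, because the round-robin is shared among $K^*$ copies. But under your ``every active copy advances in round-robin'' rule, copies $\alg_{K^*+1}, \alg_{K^*+2}, \ldots$ also activate at global steps $2^{K^*+1}, 2^{K^*+2}, \ldots$ and join the rotation. So the number of participants is not frozen at $K^*$; it grows to $\Theta(\log t)$ by global step $t$. Concretely, in the interval $[2^j, 2^{j+1})$ there are $j$ active copies, so copy $K^*$ receives only $2^j/j$ samples there; summing over $j \ge K^*$ shows that by global step $t$ copy $K^*$ has received $\Theta(t/\log t)$ samples. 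Hence giving it $T_{K^*}$ samples costs $\Theta(T_{K^*} \log T_{K^*})$ total, not $O(K^* T_{K^*})$. After taking expectation over $K^*$ you end up with $O\bigl((A\ln\delta^{-1}+B)\log(A\ln\delta^{-1}+B)\bigr)$, which is a logarithmic factor worse than the lemma asserts.

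The paper's fix is to replace equal round-robin by a \emph{geometric} allocation: at time slot $t$, advance copy $k$ iff $2^k \mid t$. Then copy $k$ gets a fixed $2^{-k}$ fraction of all samples, independent of how many other copies have activated. Giving copy $K^*$ its $T_{K^*}$ samples then costs at most $2^{K^*+1} T_{K^*}$ total, and the expected-cost series becomes $\sum_k 0.02^{k} \cdot 2^{k+1}\bigl(A\ln(2^{k+1}/\delta)+B\bigr)$, which converges to $O(A\ln\delta^{-1}+B)$ because $0.02 \cdot 2 < 1$. Your confidence schedule and correctness argument are fine as written; only the sampling allocation needs this change.
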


We postpone the proof of Lemma~\ref{lem:parasim} to Appendix~\ref{app:parasim}.


\newcommand{\checksol}{\mathsf{Check}}
\newcommand{\tilfam}{\widetilde\subsetfam}
\newcommand{\thetahi}{\theta^\mathrm{high}}
\newcommand{\thetalo}{\theta^\mathrm{low}}

\section{Optimal Algorithm for Combinatorial Bandit with Efficient Computation}
\label{sec:efficient}

In this section, we present a computationally efficient implementation
of the \algnaive{} algorithm. Recall that $\algnaive$ maintains a
sequence of set families $\{\subsetfam_r\}$, which contain the sets
still under consideration at the beginning of each round $r$. As
$|\subsetfam|$, the number of feasible sets, is typically exponential in
the number of arms, it may be computationally expensive to compute
$\{\subsetfam_r\}$ explicitly. The key to computational efficiency is to
find a compact representation of $\{\subsetfam_r\}$. In this paper, we
represent $\subsetfam_{r+1}$ using the empirical means $\meanhat{r}{}$
and some carefully chosen threshold $\theta_r$:
\[ \subsetfam_{r+1}=\{A\in\subsetfam:\meanhat{r}{}(A)\ge\theta_r\}. \]
Consequently, we have to adapt the procedures in \algnaive{}, including
$\simest$ and $\verify$, so that they work with this implicit
representation of set families.  To this end, we use the
$\varepsilon$-approximate Pareto curve framework of~\cite{papadimitriou2000approximability}. This technique
allows us to implement our subroutines in polynomial-time with respect
to the input size and $1/\epsilon$, if we relax the constraints in the
subroutines by an multiplicative factor of $1 + \epsilon$. In
particular, if $1/\epsilon$ is upper bounded by the sample complexity of
the instance, we would obtain a computationally efficient implementation
of the algorithm.

\subsection{Algorithm}

We give a simplified version of the algorithm, and then later boost its
probability of success by a parallel simulation (Lemma
\ref{lem:parasim}). The algorithm relies on computationally efficient
implementations of the subroutines $\simest$ and $\verify$, as well as
three new procedures $\unique$, $\checksol$ and $\maxoracle$. We start
by introducing the syntax and performance guarantees of these
procedures, and postpone their efficient implementation to
Section~\ref{sec:efficomp}.

Procedure $\simest$ takes as its input a weight $\mu$ on $S$, two
thresholds $\thetahi$ and $\thetalo$, together with an accuracy
parameter $\epsilon$ and a confidence level $\delta$, and outputs a
vector $\{m_i\}_{i\in S}$ indicating the number of samples to be taken
from each arm in $S$ to estimate the difference between any two sets in
$\{A\in\subsetfam \mid \mu(A)\ge\thetahi\}$ to an accuracy of $\epsilon$
with confidence $1 - \delta$.
This new procedure is akin to the version in
Section~\ref{sec:inefficient}, where we set $\paraset =
\{A'\in\subsetfam:\mu(A')\ge\thetahi\}$. While the lower threshold
$\thetalo$ is not explicitly used, it gives us the approximation
guarantee of the procedure: indeed, while $\simest$ will be guaranteed
to output a feasible solution to the original program, the resulting
objective will be a constant approximation of the \textit{tightened}
program obtained by replacing $\{A'\in\subsetfam:\mu(A')\ge\thetahi\}$
with $\{A'\in\subsetfam:\mu(A')\ge\thetalo\}$ in the constraints.  A
detailed specification of $\simest$ appears in
Section~\ref{com:spec}.
	
\begin{algorithm2e}[H]
  \caption{$\simest(\mu, \thetahi, \thetalo, \epsilon, \delta)$}
  \label{algo:simest}
  \KwIn{Mean vector $\mu$, thresholds $\thetahi$ and $\thetalo$, accuracy parameter $\epsilon$, confidence level $\delta$.}
  \KwOut{A vector $m$, indicating the number of samples to be taken from each arm.}
  Let $\{m_i\}_{i \in S}$ be an approximate solution to the following program:
  \begin{equation*}\begin{split}
      \textrm{minimize}~~&\sum_{i \in S}m_i\\
      \textrm{subject to}~~&\sum_{i \in A\oplus B}\frac{1}{m_i} \le \frac{\epsilon^2}{2\ln(2/\delta)},~\forall A, B\in \{A'\in\subsetfam:\mu(A')\ge\thetahi\}\\
      &m_i > 0,~\forall i \in S
    \end{split}\end{equation*}
  
  \textbf{return} $m$\;
\end{algorithm2e}

Similarly, procedure $\verify$ takes a sequence of means
$\{\meanhat{k}{}\}$, two threshold sequences $\{\thetahi_k\}$ and
$\{\thetalo_k\}$, together with a confidence parameter $\delta$. It
returns a vector $\{m_i\}$, indicating the number of samples from each
arm, so that the gap between the conjectured answer $\hat O$ and each
set in $\{A\in\subsetfam:\meanhat{k-1}{}(A)\ge\thetahi_{k-1}\}$ can be
estimated to $O(\epsilon_k)$ accuracy. As in $\simest$, the resulting
objective value is guaranteed to be bounded by a constant times the
optimal value of the tightened program, obtained by replacing
$\thetahi_{k-1}$ with $\thetalo_{k-1}$ in the contraint.

\begin{algorithm2e}[H]
  \caption{$\verify(\{\meanhat{k}{}\}, \{\thetahi_k\}, \{\thetalo_k\}, \delta)$}
  \KwIn{A sequence $\{\meanhat{k}{}\}_{k=0}^{r-1}$ of empirical means, threshold sequences $\{\thetahi_k\}_{k=0}^{r-1}$ and $\{\thetalo_k\}_{k=0}^{r-1}$, together with a confidence level $\delta$.}
  \KwOut{A vector $m$, indicating the number of samples to be taken from each arm.}
  $\lambda \leftarrow 10$;
  $\hat O \leftarrow \argmax_{A\in\subsetfam}\meanhat{r}{}(A)$\;
  Let $\{m_i\}_{i \in S}$ be an approximate solution to the following program:
  \begin{equation*}\begin{split}
      \textrm{minimize}~~&\sum_{i \in S}m_i\\
      \textrm{subject to}~~&\sum_{i \in \hat O\oplus A}\frac{1}{m_i} \le \frac{(\epsilon_k/\lambda)^2}{2\ln(2/\delta)},~\forall k\in[r], A\in\{A'\in\subsetfam:\meanhat{k-1}{}(A') \ge \thetahi_{k-1}\}\\
      &m_i > 0,~\forall i \in S
    \end{split}\end{equation*}
  
  \textbf{return} $m$\;
\end{algorithm2e}

The $\algeffi$ algorithm (Algorithm~\ref{alg:algeffi}) proceeds in
rounds. At round $r$, $\algeffi$ first calls the subroutine $\unique$ to
determine whether exactly one set survives (i.e., has a weight greater
than $\theta_{r-1}-\epsilon_{r-1}/\lambda$ with respect to
$\meanhat{r-1}{}$. If so, the algorithm invokes $\verify$, $\Sample$ (a
straightforward sampling procedure) and $\checksol$ (a procedure
analogous to Line~\ref{line:test} in \algnaive{}), in order to verify
that the conjectured answer $\hat O$ is indeed optimal. The algorithm
terminates and depending on these tests, returns either $\hat O$ or an
error.

Otherwise, $\algeffi$ calls $\simest$ and $\Sample$ to estimate the
means to sufficient accuracy. After that, $\maxoracle$ is called to
compute the approximately optimal set among the sets under
consideration. Finally, the algorithm computes the threshold for the
next round based on $\opt_r$.

\begin{algorithm2e}[ht]
  \caption{$\algeffi(\combiband, \delta)$}
  \label{alg:algeffi}
  \KwIn{\combibandit{} instance $\combiband = (S, \subsetfam)$ and confidence level $\delta$.}
  \KwOut{The answer.}
  $\meanhat{0}{}\leftarrow\vec{0}$;
  $\theta_0 \leftarrow 0$\;
  $\delta_0 \leftarrow 0.01$;
  $\lambda \leftarrow 20$\;
  \For{\upshape $r = 1 \text{ to } \infty$} {
    \If{$\unique(\meanhat{r-1}{}, \theta_{r-1}-\epsilon_{r-1}/\lambda)$\label{line:effiIf}} {
      $m \leftarrow \verify(
      \{\meanhat{k}{}\}_{k=0}^{r-1},
      \{\theta_k-\epsilon_k/\lambda\}_{k=0}^{r-1},
      \{\theta_k-2\epsilon_k/\lambda\}_{k=0}^{r-1},
      \delta/(r|\subsetfam|)
      )$\;
      $\hat\mu \leftarrow \Sample(m)$\label{line:effhatmu}\;
      $\hat O \leftarrow \argmax_{A\in\subsetfam}\meanhat{r-1}{}(A)$\label{line:effhatO}\;
      \If{\upshape$\checksol(\hat O, \meanhat{k}{}, \hat\mu, \theta_{k}, \epsilon_k/\lambda)$ for all $k\in[r-1]$\label{line:efficheck}} {
        \textbf{return} $\hat O$\;
      } \Else {
        \textbf{return} error\;
      }
    }
    $\epsilon_r \leftarrow 2^{-r}$;
    $\delta_r \leftarrow \delta_0/(10r^3|\subsetfam|^2)$\;
    $\numsamp{r}{} \leftarrow \sum_{k=1}^{r}\simest(
    \meanhat{k-1}{},
    \theta_{k-1}-\epsilon_{k-1}/\lambda,
    \theta_{k-1}-2\epsilon_{k-1}/\lambda,
    \epsilon_k / \lambda,
    \delta_r
    )$\;
    $\meanhat{r}{} \leftarrow \Sample(\numsamp{r}{})$\;
    $\opt_r \leftarrow \maxoracle(\meanhat{r-1}{}, \theta_{r-1}, \meanhat{r}{}, \epsilon_{r-1}/\lambda)$\;
    $\theta_r \leftarrow \opt_r - (1/2+2/\lambda)\epsilon_r$\;
  }
\end{algorithm2e}

\subsection{Specification}
\label{com:spec}

We formally state the performance guarantees of the subroutines in
\algeffi{}, which are crucial to the analysis of the algorithm. In
Section~\ref{sec:efficomp} we discuss implementations that meet these
specifications.

\begin{enumerate}
\item Given weights $\mu$ and threshold $\theta$, $\unique(\mu, \theta)$
  correctly decides whether there is exactly one set $A\in\subsetfam$
  such that $\mu(A)\ge\theta$.

\item Both $\simest$ and $\verify$ return \textit{feasible} solutions to
  the programs defined in the procedures. Moreover, the resulting
  objective function should be at most a constant times the optimal
  value of the tightened programs obtained by replacing $\thetahi$ with
  $\thetalo$ (or replacing $\{\thetahi_k\}$ with $\{\thetalo_k\}$) in
  the constraints.

\item Given empirical means $\mu$, threshold $\theta$, weight $w$, and
  accuracy level $\epsilon$, $\maxoracle(\mu, \theta, w, \epsilon)$
  returns a set $A \in \subsetfam$ such that: (a) $\mu(A) \ge
  \theta-\epsilon$ (i.e., $A$ is approximately feasible); (b) $w(A) \ge
  \max_{B\in\subsetfam, \mu(B)\ge\theta}w(B) - \epsilon$ (i.e., $A$ is
  approximately optimal).

\item When $\checksol(\hat O, \meanhat{k}{}, \hat\mu, \theta, \epsilon)$
  is called, and it holds that $\hat\mu(\hat O)-\hat\mu(A)\ge 2\epsilon$
  for all $A\in\subsetfam$ such that $\meanhat{k}{}(A)<\theta$, the
  procedure returns ``true''. If $\hat\mu(\hat O)-\hat\mu(A) \le
  \epsilon$ for \textit{some} $A\in\subsetfam$ such that
  $\meanhat{k}{}(A)<\theta - \epsilon$, the procedure always returns
  ``false''. In other cases, the procedure may return arbitrarily.
\end{enumerate}

\subsection{Analysis of \algeffi{}}

We state the performance guarantees of algorithm $\algeffi$ in the following two lemmas. The proofs are essentially identical to those in
Sections~\ref{sec:correctness-algnaive} and \ref{sec:sample-algnaive}, and are therefore postponed to Appendix~\ref{app:efficientmiss}. 

\begin{Lemma}\label{lem:efficorrect}
  For any $\delta\in(0, 0.01)$ and \combibandit{} instance $\combiband$,
  $\algeffi(\combiband, \delta)$ returns the correct answer with
  probability $1 - \delta_0 - \delta$, and returns an incorrect answer
  w.p.\ at most~$\delta$.
\end{Lemma}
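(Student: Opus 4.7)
The plan is to follow the blueprint of the proof of Lemma~\ref{lem:naivecorrect} in Section~\ref{sec:correctness-algnaive}, since the structure of \algeffi{} mirrors that of \algnaive{}, with the explicit set families $\{\subsetfam_r\}$ replaced by the implicit thresholded families $\tilfam_r \coloneqq \{A\in\subsetfam : \meanhat{r-1}{}(A) \ge \theta_{r-1} - \epsilon_{r-1}/\lambda\}$. I would first define good events $\goodevent_{0,r}$ saying that for every pair $A,B\in\tilfam_r$, the empirical gap $\meanhat{r}{}(A)-\meanhat{r}{}(B)$ lies within $\epsilon_r/\lambda$ of the true gap $\mu(A)-\mu(B)$; set $\goodevent_0 = \bigcap_r \goodevent_{0,r}$, and define $\goodevent$ analogously for the verification samples $\hat\mu$. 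The feasibility guarantee of $\simest$ and $\verify$ (specification 2) ensures that $\sum_{i\in A\oplus B}1/m_i$ is controlled by $(\epsilon_r/\lambda)^2 / (2\ln(2/\delta_r))$ for all pairs meeting the high threshold, so Lemma~\ref{lem:sum_dev} combined with a union bound over rounds and pairs (exactly as in Lemma~\ref{lem:good-event-prob}, with the extra $r$ factor in $\delta_r$ absorbing the summation) gives $\pr{\goodevent_0}\ge 1-\delta_0$ and $\pr{\goodevent}\ge 1-\delta$.

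Next, I would prove analogs of Lemmas~\ref{lem:survive} and~\ref{lem:chain} for the implicit families $\tilfam_r$. Here the new ingredient is the $\maxoracle$ specification: $\opt_r$ is an $\epsilon_{r-1}/\lambda$ additive approximation to $\max_{B\in\subsetfam, \meanhat{r-1}{}(B)\ge \theta_{r-1}}\meanhat{r}{}(B)$, and the set achieving $\opt_r$ is itself only approximately feasible (empirical mean $\ge \theta_{r-1} - \epsilon_{r-1}/\lambda$). Combining these with $\goodevent_0$ and repeating the two-sided arithmetic of Lemma~\ref{lem:chain} yields the sandwich $G_{\ge r}\supseteq \tilfam_{r+1} \supseteq G_{\ge r+1}$, with the threshold $\theta_r = \opt_r - (1/2 + 2/\lambda)\epsilon_r$ playing the role previously played by $\opt_r - \epsilon_r/2 - 2\epsilon_r/\lambda$. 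The slightly larger constant $\lambda = 20$ (vs.\ $\lambda = 10$ in \algnaive{}) is what absorbs the additional additive slacks introduced by $\maxoracle$.

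With these invariants in hand, the argument for termination and correctness mirrors Section~\ref{sec:correctness-algnaive}. The $\unique$ specification guarantees that the if-statement at line~\ref{line:effiIf} fires exactly when a single set survives the implicit threshold, so by the sandwich property and the choice $r^* = \lfloor \log_2\Delta^{-1}\rfloor + 1$, the algorithm enters the if-branch by round $r^*+1$ with $\hat O = O$ on $\goodevent_0$. Conditioning on $\goodevent_0\cap\goodevent$, the verification samples satisfy $\hat\mu(\hat O) - \hat\mu(A) \ge 2\epsilon_k/\lambda$ for every $A\in\subsetfam$ with $\meanhat{k-1}{}(A)<\theta_{k-1}$ (using the sandwich and good-event estimates), so by the $\checksol$ specification each call returns ``true'' and the algorithm outputs $O$. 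Conditioning only on $\goodevent$, if the algorithm ever reaches the if-branch with $\hat O \ne O$, then since $O$ and $\hat O$ both lie in some $\tilfam_k$, the good event forces $\hat\mu(\hat O)-\hat\mu(O) < \epsilon_k/\lambda$, which by the $\checksol$ specification causes that call to return ``false'' and an error to be reported.

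The main obstacle is bookkeeping: three independent sources of slack (the $\lambda$-scaled fluctuations, the additive $\epsilon$-slack in $\maxoracle$'s approximate feasibility and approximate optimality, and the constant-factor looseness in the objectives of $\simest$ and $\verify$) must all be absorbed consistently, with the high-threshold/low-threshold pairs $(\theta_{k-1} - \epsilon_{k-1}/\lambda, \theta_{k-1} - 2\epsilon_{k-1}/\lambda)$ in the subroutine calls chosen so that the tightened program remains comparable to the original $\gklow$-based benchmark. This bookkeeping is the reason the constants $(\lambda, \delta_r)$ differ slightly from the \algnaive{} case; once the constants are fixed, the inclusions, tail bounds, and case analyses transfer essentially verbatim from Section~\ref{sec:correctness-algnaive}.
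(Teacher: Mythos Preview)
Your outline has the right shape, but the good event $\goodevent_{0,r}$ as you define it---concentration at accuracy $\epsilon_r/\lambda$ for pairs $A,B\in\tilfam_r$ only---is too weak to carry the upper inclusion $G_{\ge r}\supseteq\tilfam_{r+1}$. The families $\tilfam_r$ are \emph{not} nested across rounds (both the samples $\meanhat{r}{}$ and the threshold $\theta_r$ change), so if $A\notin G_{\ge r}$ but also $A\notin\tilfam_r$, your event says nothing about $\meanhat{r}{}(A)$ and you cannot rule out $A\in\tilfam_{r+1}$. This is exactly why \algeffi{} sets $\numsamp{r}{}=\sum_{k=1}^{r}\simest(\ldots,\epsilon_k/\lambda,\delta_r)$ rather than a single call at scale $\epsilon_r$: the paper's $\goodevent_{0,r}$ asserts that for \emph{every} $k\in[r]$ and every $A,B\in\tilfam_k$, the round-$r$ empirical gap $\meanhat{r}{}(A)-\meanhat{r}{}(B)$ is within $\epsilon_k/\lambda$ of the true gap. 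The extra $r$ factor in $\delta_r=\delta_0/(10r^3|\subsetfam|^2)$ that you noticed pays precisely for the union bound over $k\in[r]$, not for anything else. With this stronger event, when $A\in G_k$ for some $k\le r-1$, the inductive hypothesis gives $A\in\tilfam_k$, and round-$r$ concentration at scale $\epsilon_k/\lambda$ then forces $\meanhat{r}{}(A)<\theta_r-\epsilon_r/\lambda$.

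A second, smaller omission: your sandwich drops the intermediate family $\subsetfam_{r+1}\coloneqq\{A:\meanhat{r}{}(A)\ge\theta_r\}$. The paper proves the four-term chain $G_{\ge r}\supseteq\tilfam_{r+1}\supseteq\subsetfam_{r+1}\supseteq G_{\ge r+1}$, and it is the inclusion $\subsetfam_{k+1}\supseteq G_{\ge k+1}$ that is used in the completeness step, since the $\checksol$ specification ranges over sets with $\meanhat{k}{}(A)<\theta_k$ (that is, $A\notin\subsetfam_{k+1}$, not merely $A\notin\tilfam_{k+1}$). Once these two points are corrected, the rest of your plan matches the paper's proof.
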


Recall that $\Delta = \mu(O) - \max_{A\in\subsetfam\setminus\{O\}}\mu(A)$ is the gap between the set
with the second largest weight in $\subsetfam$ and the weight of $O$.

\begin{Lemma}\label{lem:effisample}
  For any $\delta\in(0,0.01)$ and \combibandit{} instance $\combiband$,
  $\algeffi(\combiband, \delta)$ takes
  $$O\left(\gklow(\combiband)\ln\delta^{-1} + \gklow(\combiband)\ln^2\Delta^{-1}\left(\ln\ln\Delta^{-1} + \ln|\subsetfam|\right)\right)$$
  samples conditioning on event $\goodevent_0\cap\goodevent$.
\end{Lemma}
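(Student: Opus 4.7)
The plan is to follow the structure of the proof of Lemma~\ref{lem:naivesample}, adapting it to the implicit representation of candidate sets used in \algeffi{} and to the fact that $\numsamp{r}{}$ is now a sum of $r$ separate invocations of $\simest$ rather than a single call. The main technical obstacle is establishing analogues of Lemmas~\ref{lem:survive} and~\ref{lem:chain} for the threshold-based representation, after which the sample complexity calculation follows the same template but with one extra factor of $r$ accumulated inside $\numsamp{r}{}$.

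First I would show that, conditioning on an appropriate good event, the implicitly-represented candidate family $\{A \in \subsetfam : \meanhat{k-1}{}(A) \ge \theta_{k-1} - \epsilon_{k-1}/\lambda\}$ is sandwiched between $G_{\ge k-1}$ and $G_{\ge k+1}$. This requires carefully tracking the $O(\epsilon_{k-1}/\lambda)$ errors from the empirical means together with the additive $\epsilon_{k-1}/\lambda$ slack introduced by the approximate $\maxoracle$ used to compute $\opt_{k-1}$, and is exactly why we take $\lambda = 20$ in \algeffi{} rather than $\lambda = 10$ in \algnaive{}. This sandwich implies that every set $A$ passed to the $k$-th inner $\simest$ call at round $r$ satisfies $\mu(O) - \mu(A) \le \epsilon_{k-2}$.

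Next I would bound the samples contributed by a single invocation $\simest(\meanhat{k-1}{}, \theta_{k-1}-\epsilon_{k-1}/\lambda, \theta_{k-1}-2\epsilon_{k-1}/\lambda, \epsilon_k/\lambda, \delta_r)$. Imitating the argument in Lemma~\ref{lem:naivesample}, I would set $m_i = \alpha\tau^*_i$ with $\alpha = \Theta(\lambda^2\ln(1/\delta_r))$, where $\tau^*$ is the optimal solution of the program in \eqref{eq:gklowdef}. The sandwich, together with the definition of $\gklow(\combiband)$, makes $m$ feasible for the \emph{tightened} program (in which $\thetahi$ is replaced by $\thetalo$); by the constant-factor approximation guarantee of $\simest$, the returned allocation has total size $O(\gklow(\combiband)\ln\delta_r^{-1}) = O(\gklow(\combiband)(\ln r + \ln|\subsetfam|))$, using $\delta_r = \delta_0/(10r^3|\subsetfam|^2)$. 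Since $\numsamp{r}{}$ sums $r$ such allocations, round $r$ costs $O\bigl(r\cdot\gklow(\combiband)(\ln r + \ln|\subsetfam|)\bigr)$ fresh samples. Summing over $r = 1, \dotsc, r^{*}$ with $r^{*} = \lfloor\log_2\Delta^{-1}\rfloor + 1$ (beyond which the algorithm must terminate, by the analogue of Lemma~\ref{lem:survive}) yields
\[
O\!\left(\gklow(\combiband) \sum_{r=1}^{r^{*}} r(\ln r + \ln|\subsetfam|)\right)
= O\!\left(\gklow(\combiband)(r^{*})^2(\ln r^{*} + \ln|\subsetfam|)\right)
= O\!\left(\gklow(\combiband)\ln^2\Delta^{-1}(\ln\ln\Delta^{-1} + \ln|\subsetfam|)\right),
\]
which accounts for the extra $\ln\Delta^{-1}$ factor compared to Lemma~\ref{lem:naivesample}.

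Finally, the verification round is handled by the same template: setting $m_i = \beta\tau^*_i$ with $\beta = \Theta(\lambda^2\ln(r|\subsetfam|/\delta))$ produces a feasible solution to the tightened $\verify$ program (the same sandwich argument applies to each $\{A : \meanhat{k-1}{}(A) \ge \thetahi_{k-1}\}$), and the approximation guarantee bounds the verification cost by $O(\gklow(\combiband)(\ln\delta^{-1} + \ln\ln\Delta^{-1} + \ln|\subsetfam|))$. Summing this with the cumulative cost of the elimination rounds gives the claimed bound.
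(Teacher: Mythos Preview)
Your proposal is correct and follows essentially the same approach as the paper: establish the threshold-based analogues of Lemmas~\ref{lem:survive} and~\ref{lem:chain}, plug $m_i=\alpha\tau_i^*$ into the tightened $\simest$ program, use the constant-factor approximation guarantee, sum the extra factor of $r$ across $r\le r^*=O(\ln\Delta^{-1})$, and treat $\verify$ analogously.

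One small imprecision worth fixing: the sandwich you state is for the family $\{A:\meanhat{k-1}{}(A)\ge\theta_{k-1}-\epsilon_{k-1}/\lambda\}$ (the $\thetahi$ set), which yields $\mu(O)-\mu(A)\le\epsilon_{k-1}$, not $\epsilon_{k-2}$. To bound the optimum of the \emph{tightened} program you must instead control the larger family $\{A:\meanhat{k-1}{}(A)\ge\theta_{k-1}-2\epsilon_{k-1}/\lambda\}$ (the $\thetalo$ set); the paper's analogue of Lemma~\ref{lem:chain} has a separate clause showing this set is contained in $G_{\ge k-2}$, whence $\mu(O)-\mu(A)\le\epsilon_{k-2}$. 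Either bound is $O(\epsilon_k)$ so the calculation is unaffected, but make sure your chain lemma actually covers the $\thetalo$ family, since that is what the approximation guarantee of $\simest$ is stated against.
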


Lemmas~\ref{lem:efficorrect}, \ref{lem:effisample} and \ref{lem:parasim} imply that there is a $\delta$-correct algorithm that matches the sample complexity stated in Theorem~\ref{theo:effi-comb}.
It remains to implement the subroutines specified in Section~\ref{com:spec} efficiently.


\subsection{Efficient Computation via $\varepsilon$-approximate Pareto Curve}
\label{sec:efficomp}

In this section, we propose a general framework for efficiently
implementing the subroutines specified in Section \ref{com:spec}, thus proving Theorem~\ref{theo:effi-comb}.  Here,
by ``efficient'', we mean the time complexity of the
algorithm is bounded by a function polynomial both in $n$ and the {\em sample
  complexity} of the algorithm. Indeed, for any natural algorithm, the
{\em time complexity} is at least the same as the {\em sample
  complexity}.  We use the concept of {\em $\varepsilon$-approximate
  Pareto curve}, a general framework for multi-objective optimization,
which was first introduced by~\cite{papadimitriou2000approximability}.

In this section, we only need {\em bi-objective optimization problems}, i.e., problems with two objective functions. 
For a bi-objective optimization problem, for each instance $x$, we denote $F(x)$ to be its feasible solution space.
For each feasible solution $s \in F(x)$, two objective functions $f_1(x, s)$ and $f_2(x, s)$ will be used to evaluate the {\em quality} of the solution $s$.
The goal here is to {\em maximize} the objective functions. 
Meanwhile, as shown in \cite{papadimitriou2000approximability}, minimization problems can be treated similarly. 

The {\em Pareto curve} of an instance $x$, denoted by $P(x)$, is a set
of 2-dimension points.  For each $v \in P(x)$,
\begin{enumerate}[(1)]
\item There exists some $s \in F(x)$ such that $f_i(x, s) = v_i$, for $i
  = 1$ and $i = 2$.
\item There is no feasible solution $s'$ such that $f_i(x, s) \ge v_i$
  for $i = 1$ and $i = 2$, with at least one inequality holding
  strictly.
\end{enumerate}

The Pareto curve naturally provides a trade-off between the two
objective functions.  However, the Pareto curve is exponentially large
in size in general and cannot be efficiently computed.  Thus,
\cite{papadimitriou2000approximability} considered the approximate
version of Pareto curves. The {\em $\varepsilon$-approximate Pareto
  curve} of an instance $x$, denoted by $P_{\varepsilon}(x)$, is a set
of feasible solutions, such that for each feasible solution $s' \in
F(x)$, there exists some $s \in P_{\varepsilon}(x)$ such that $f_i(x,
s') \le (1 + \varepsilon) f_i(x, s)$ for $i = 1$ and $i = 2$.
For a problem $A$ where the objective functions are linear,
\cite{papadimitriou2000approximability} give an FPTAS for constructing
the approximate Pareto curve, given a pseudopolynomial algorithm for the
{\em exact version} of $A$. The exact version of $A$ is one where, given
an instance $x$ and a value $B$, we have to decide if there exists a
feasible solution with cost \emph{exactly} B.

Many combinatorial problems admit pseudopolynomial algorithms for the exact version, including the shortest path problem, the minimum spanning tree problem and the matching problem, as noted in \cite{papadimitriou2000approximability}. In the following sections, we will show how to efficiently implement the algorithm descried in previous sections, when the approximate Pareto curve of the underlying combinatorial problem of the $\combibandit$ instance can be computed by an FPTAS.
We also assume that the single-objective maximization version of the underlying combinatorial problem can be solved in polynomial time, i.e., given a weight vector $w$, there exists an algorithm that runs in polynomial time that can calculate $\argmax_{A \in \mathcal{F}} w(A)$.
\footnote{Such an algorithm has already been used implicitly in Line~\ref{line:effhatO} of algorithm \algeffi{}.}

\subsubsection{Efficient Implementation of $\maxoracle$, $\checksol$ and $\mathsf{Unique}$}

We begin with the implementation of $\maxoracle$. Notice that
$\maxoracle$ is actually a bi-objective optimization problem, by setting
$f_1(\cdot)$ to be $\mu(\cdot)$ and $f_2(\cdot)$ to be $w(\cdot)$.  We
can efficiently implement $\maxoracle$ by listing all points in the
approximate Pareto curve.  Notice that it is required that $\maxoracle$
outputs a solution with {\em additive} approximation term, while the
FPTAS presented in \cite{papadimitriou2000approximability} can only be
used to generate approximate Pareto curve with multiplicative
approximation ratio.  An important observation is that, for any $S \in
\mathcal{F}$, $\mu(S)$ is bounded by $O(n)$ and $w(S)$ is bounded by a
function polynomial in the sample complexity of our algorithm.  Thus, to
calculate an additive $\varepsilon$-approximate Pareto curve, it
suffices to set $\varepsilon'$ to be a value polynomial in $n,
\varepsilon$ and the sample complexity of our algorithm, and calculate
the multiplicative $\varepsilon'$-approximate Pareto curve.

Similarly, $\checksol$ is also a bi-objective optimization problem, by
setting $f_1(\cdot)$ to be $\meanhat{k}{}(\cdot)$ and $f_2(\cdot)$ to be
$\hat\mu(\cdot)$.  We can still efficiently implement $\checksol$ by
listing all points in the approximate Pareto curve. We omit
implementation details due to the similarity.

Given a polynomial-time algorithm $\mathbb{A}$ for the single-objective
maximization version of the underlying combinatorial problem, it will be
straightforward to implement $\mathsf{Unique}$. One possible way is to
calculate the subset with second largest objective value, which is given
as follows.  We first call $\mathbb{A}$ to find a subset $A$ with
maximum $\mu(A)$.  Then we enumerate every element $a \in A$, set
$\mu(a)$ to $-\infty$ and call $\mathbb{A}$ again.  By doing so, we will
be able to find the subset $A'$ with second largest objective value.  We
can then decide whether there is exactly one subset $A$ such that
$\mu(A) \ge \theta$ by comparing $\mu(A')$ with $\theta$.

\subsubsection{Efficient Implementation of $\mathsf{SimultEst}$ and $\mathsf{Verify}$}

Now we present our implementation for $\simest$. To solve the convex
program described in Algorithm \ref{algo:simest}, we apply the {\em
  Ellipsoid method}. It suffices to devise a polynomial time {\em
  separation oracle} \footnote{ Given a point $x$, the separation oracle
  needs to decide whether $x$ is in the feasible region. If not, the
  separation oracle should output a constraint that $x$ violates.  }
(see e.g., \cite{schrijver2002combinatorial}).  Concretely, we need to
solve the following separation problem.

\begin{defi}[Separation problem of $\simest$]
  Given $(\mu, \thetahi, \thetalo, \varepsilon, \delta)$ and vector
  $m^{\star}$, the goal of the separation problem of $\simest(\mu,
  \thetahi, \thetalo, \varepsilon, \delta)$ is to decide whether there
  exists two subsets $A, B \in \{ A' \in \mathcal{F}: \mu(A') \ge
  \thetahi\}$ such that
  $$
  \sum_{i \in A \oplus B} \frac{1}{m^{\star}_i} \ge
  \frac{\varepsilon^2}{2\ln(2 / \delta)}.
  $$
\end{defi}

Notice that we do not need to solve the separation problem exactly: a
constant approximation suffices, as this would only increase a constant
factor hidden in the big-O notation of the sample complexity.  (This
trick is often used in the approximation algorithms literature; see,
e.g., \cite{carr2000strengthening}.)  Specifically, it is sufficient to
find two subsets $A, B \in \{ A' \in \mathcal{F}: \mu(A') \ge
\thetahi\}$ such that
$$
\sum_{i \in A \oplus B} \frac{1}{m^{\star}_i} \ge C \cdot
\frac{\varepsilon^2}{2\ln(2 / \delta)}
$$
for some constant $C$
(assuming there are subsets $A',B'$ satisfying
$\sum_{i \in A' \oplus B'} \frac{1}{m^{\star}_i} \ge
\frac{\varepsilon^2}{2\ln(2 / \delta)}$). Moreover, as noted in the
previous section, the constraint $A, B \in \{ A' \in \mathcal{F}:
\mu(A') \ge \thetahi\}$ can also be relaxed to allow an additive
approximate term of $\thetahi - \thetalo$.

To decide whether such a pair of subset $(A, B)$ exists or not, we first
arbitrarily choose a subset $O$ in $\{ A' \in \mathcal{F}: \mu(A') \ge
\thetahi\}$ and then find a subset $O' \in \{ A' \in \mathcal{F}:
\mu(A') \ge \thetahi\}$ such that
$$
\sum_{i \in O \oplus O'} \frac{1}{m^{\star}_i}. 
$$
is maximized. The following lemma shows that, $(O, O')$ is a
2-approximation of the original separation problem.
\begin{Lemma}
  \label{lem:efficientapp1}
  For any $A, B \in \{ A' \in \mathcal{F}: \mu(A') \ge \thetahi\}$,
  $$
  \sum_{i \in O \oplus O'} \frac{1}{m^{\star}_i} \ge \frac{1}{2}\sum_{i
    \in A \oplus B} \frac{1}{m^{\star}_i}.
  $$
\end{Lemma}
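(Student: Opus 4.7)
The plan is to exploit the triangle inequality for symmetric differences, together with the maximality of $O'$ in the chosen family. Both $A$ and $B$ lie in $\{A'\in\mathcal{F}:\mu(A')\ge\thetahi\}$, which is the same family from which $O'$ was chosen to maximize $\sum_{i\in O\oplus O'}1/m^{\star}_i$. This combinatorial structure is all that is needed.

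First, I would recall the standard identity $A \oplus B = (A \oplus O) \oplus (O \oplus B)$, which in particular yields the containment
\[
A \oplus B \;\subseteq\; (A \oplus O)\cup(O \oplus B).
\]
Since the weights $1/m^{\star}_i$ are all nonnegative, summing over the left-hand side is at most the sum over the union on the right-hand side, which in turn is at most the sum of the two individual sums. Hence
\[
\sum_{i\in A\oplus B}\frac{1}{m^{\star}_i}
\;\le\;\sum_{i\in A\oplus O}\frac{1}{m^{\star}_i}
+\sum_{i\in O\oplus B}\frac{1}{m^{\star}_i}.
\]

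Next, I would invoke the definition of $O'$: it maximizes $\sum_{i\in O\oplus X}1/m^{\star}_i$ over all $X\in\{A'\in\mathcal{F}:\mu(A')\ge\thetahi\}$. Since both $A$ and $B$ belong to this family, each of the two summands above is bounded by $\sum_{i\in O\oplus O'}1/m^{\star}_i$. Combining these two bounds gives the factor of~$2$ claimed in the lemma.

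The only ``obstacle'' is being careful that the ambient family from which $O'$ is optimized really contains both $A$ and $B$, which is exactly the hypothesis of the lemma, and that the initial $O$ is guaranteed to exist (which follows from the assumption that the separation problem is nontrivial, i.e., $\{A'\in\mathcal{F}:\mu(A')\ge\thetahi\}$ is nonempty; otherwise the statement is vacuous). Everything else is a direct two-line calculation, so no further technical machinery is needed. Note also that computing $O'$ given $O$ reduces to a single-objective maximization problem with modified weights (flipping the sign of $1/m^{\star}_i$ on coordinates in $O$ and adjusting by a constant), which is why the polynomial-time maximization oracle assumed earlier suffices to implement the approximate separation oracle in polynomial time.
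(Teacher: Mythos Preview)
Your proof is correct and essentially identical to the paper's: both use the symmetric-difference triangle inequality $A \oplus B \subseteq (O \oplus A) \cup (O \oplus B)$ together with the maximality of $O'$ over the family $\{A'\in\subsetfam:\mu(A')\ge\thetahi\}$. Your closing aside about computing $O'$ via a single unconstrained maximization oracle is not quite right, however---the constraint $\mu(O')\ge\thetahi$ makes this a bi-objective problem, which is precisely why the paper next introduces the further decomposition into $O_1$ and $O_2$ (Lemma~\ref{lem:efficientapp2}) and appeals to approximate Pareto curves.
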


\begin{proof}[Proof of Lemma \ref{lem:efficientapp1}]
As $O'$ is chosen so that
$$
\sum_{i \in O \oplus O'} \frac{1}{m^{\star}_i}. 
$$
is maximized, it follows that
$$
\sum_{i \in O \oplus O'} \frac{1}{m^{\star}_i} \ge \sum_{i \in O \oplus A} \frac{1}{m^{\star}_i}
$$
and
$$
\sum_{i \in O \oplus O'} \frac{1}{m^{\star}_i} \ge \sum_{i \in O \oplus B} \frac{1}{m^{\star}_i}.
$$
Thus,
$$
2\sum_{i \in O \oplus O'} \frac{1}{m^{\star}_i} \ge \sum_{i \in O \oplus A} \frac{1}{m^{\star}_i} +  \sum_{i \in O \oplus B} \frac{1}{m^{\star}_i} \ge \sum_{i \in A \oplus B} \frac{1}{m^{\star}_i}.
$$
\end{proof}
Now it remains to show how to find $O'$ efficiently. In order to find $O'$, we find $O_1\in \{ A' \in \mathcal{F}: \mu(A') \ge \thetahi\}$ such that
$$
\sum_{i \in O \backslash O_1} \frac{1}{m^{\star}_i} 
$$
is maximized, and $O_2\in \{ A' \in \mathcal{F}: \mu(A') \ge \thetahi\}$ such that
$$
\sum_{i \in O_2 \backslash O} \frac{1}{m^{\star}_i} 
$$
is maximized. 
Again, the following lemma shows that, by using the method described above, we can get a 2-approximation.
\begin{Lemma} \label{lem:efficientapp2}
For any $O' \in \{ A' \in \mathcal{F}: \mu(A') \ge \thetahi\}$, 
$$
2\max \left\{ \sum_{i \in O \backslash O_1} \frac{1}{m^{\star}_i}, \sum_{i \in O_2 \backslash O} \frac{1}{m^{\star}_i}  \right\}
\ge \sum_{i \in O \oplus O'} \frac{1}{m^{\star}_i} .
$$
\end{Lemma}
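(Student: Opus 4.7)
\begin{proofsketch}
The plan is to prove the inequality by decomposing the symmetric difference and invoking the defining maximality properties of $O_1$ and $O_2$ twice, once on each ``half'' of $O \oplus O'$.

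First I would write
\[
\sum_{i \in O \oplus O'} \frac{1}{m^{\star}_i}
\;=\; \sum_{i \in O \setminus O'} \frac{1}{m^{\star}_i} \;+\; \sum_{i \in O' \setminus O} \frac{1}{m^{\star}_i},
\]
so it suffices to bound each of the two terms on the right. The set $O'$ lies in $\{A' \in \mathcal{F} : \mu(A') \ge \thetahi\}$ and is therefore a candidate in the optimization defining $O_1$; by maximality of $O_1$,
\[
\sum_{i \in O \setminus O_1} \frac{1}{m^{\star}_i} \;\ge\; \sum_{i \in O \setminus O'} \frac{1}{m^{\star}_i}.
\]
Symmetrically, $O'$ is a candidate in the optimization defining $O_2$, so
\[
\sum_{i \in O_2 \setminus O} \frac{1}{m^{\star}_i} \;\ge\; \sum_{i \in O' \setminus O} \frac{1}{m^{\star}_i}.
\]

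Adding these two inequalities gives
\[
\sum_{i \in O \setminus O_1} \frac{1}{m^{\star}_i} + \sum_{i \in O_2 \setminus O} \frac{1}{m^{\star}_i} \;\ge\; \sum_{i \in O \oplus O'} \frac{1}{m^{\star}_i},
\]
and since the sum of two nonnegative numbers is at most twice their maximum, the left-hand side is at most $2\max\bigl\{\sum_{i \in O \setminus O_1} 1/m^{\star}_i,\ \sum_{i \in O_2 \setminus O} 1/m^{\star}_i\bigr\}$, which is exactly the claimed bound.

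There is no real obstacle on the combinatorial side; the only subtlety is to make sure $O_1$ and $O_2$ are well-defined, which is automatic since the feasible set $\{A' \in \mathcal{F} : \mu(A') \ge \thetahi\}$ is nonempty (it contains $O$). I should however remark that in the actual algorithmic implementation the two auxiliary maximizations are themselves constrained optimization problems, and are solved only approximately via the $\varepsilon$-approximate Pareto curve framework of \cite{papadimitriou2000approximability} applied to the bi-objective problem with objectives $\mu(\cdot)$ and the signed weight $\sum_{i}\frac{1}{m^{\star}_i}\mathbb{I}[i\in O\setminus \cdot]$ (respectively $\sum_{i}\frac{1}{m^{\star}_i}\mathbb{I}[i\in \cdot\setminus O]$); this only introduces an additional constant factor and the $\thetahi\to\thetalo$ relaxation already built into the specification of $\simest$, so the overall constant-factor guarantee combining this lemma with Lemma~\ref{lem:efficientapp1} is preserved.
\end{proofsketch}
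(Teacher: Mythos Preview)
Your proof is correct and follows essentially the same argument as the paper: decompose $O\oplus O'$ into $O\setminus O'$ and $O'\setminus O$, bound each piece by the defining maximality of $O_1$ and $O_2$ respectively, and then use $a+b\le 2\max\{a,b\}$. The only difference is the order of presentation (the paper applies the $2\max$ inequality first and then the maximality), and your closing remark on the Pareto-curve implementation, while accurate, goes beyond what the lemma itself requires.
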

\begin{proof}[Proof of Lemma \ref{lem:efficientapp2}]
\begin{align*}
2\max \left\{ \sum_{i \in O \backslash O_1} \frac{1}{m^{\star}_i}, \sum_{i \in O_2 \backslash O} \frac{1}{m^{\star}_i}  \right\}
\ge \sum_{i \in O \backslash O_1}\frac{1}{m^{\star}_i} + \sum_{i \in O_2 \backslash O} \frac{1}{m^{\star}_i}.
\end{align*}
According to our choice of $O_1$ and $O_2$, it follows that
$$
\sum_{i \in O \backslash O_1}\frac{1}{m^{\star}_i} + \sum_{i \in O_2 \backslash O} \frac{1}{m^{\star}_i} \ge \sum_{i \in O \backslash O'}\frac{1}{m^{\star}_i} + \sum_{i \in O' \backslash O} \frac{1}{m^{\star}_i} =  \sum_{i \in O \oplus O'} \frac{1}{m^{\star}_i}.
$$
\end{proof}

The analysis above suggests, to decide whether there exists two subsets $A, B \in \{ A' \in \mathcal{F}: \mu(A') \ge \thetahi\}$ such that
$$
\sum_{i \in A \oplus B} \frac{1}{m^{\star}_i} \ge \frac{\varepsilon^2}{2\ln(2 / \delta)}
$$
{\em approximately}, it suffices to decide whether exists $O_1, O_2 \in \{ A' \in \mathcal{F}: \mu(A') \ge \thetahi\}$ such that
$$
\sum_{i \in O \backslash O_1} \frac{1}{m^{\star}_i} \ge \frac{\varepsilon^2}{2\ln(2 / \delta)}
$$
or
$$
\sum_{i \in O_2 \backslash O} \frac{1}{m^{\star}_i} \ge \frac{\varepsilon^2}{2\ln(2 / \delta)}.
$$

The problem of finding $O_1$ and $O_2$, are actually bi-objective optimization problems. As mentioned in previous sections, the first constraint, i.e., $O_1, O_2 \in \{ A' \in \mathcal{F}: \mu(A') \ge \thetahi\}$ can be relaxed to allow an additive approximate term of $\thetahi - \thetalo$, where $1 / (\thetahi - \thetalo)$ is bounded by the sample complexity of our algorithm. Thus, by using the approximate Pareto curve, it is straightforward to decide whether such $O_2$ exists or not. 
We set $w_i$ to be $\frac{1}{m^*_i}$, and further, for any $i \in O$, we set $w_i$ to be zero. 
We can then decide whether $O_2$ exists or not by calling $\maxoracle$ and using $w$ as the weight vector.

Deciding whether $O_1$ exists or not is more involved, but still in a similar manner. Again, our plan is to approximately decide the existence of such $O_1$.
More specifically, our method will return ``yes'' when there exists $O_1$ such that
$$
\sum_{i \in O \backslash O_1} \frac{1}{m^{\star}_i} \ge \frac{2\varepsilon^2}{2\ln(2 / \delta)},
$$
return ``no'' when for any $O_1 \in \{ A' \in \mathcal{F}: \mu(A') \ge \thetahi\}$,
$$
\sum_{i \in O \backslash O_1} \frac{1}{m^{\star}_i} \le \frac{\varepsilon^2}{2\ln(2 / \delta)}
$$
and return arbitrarily otherwise.

When $$\sum_{i \in O} \frac{1}{m^{\star}_i}  < \frac{2\varepsilon^2}{2\ln(2 / \delta)}$$ we simply return ``no'', as there will be no $O_1$ such that
$$
\sum_{i \in O \backslash O_1} \frac{1}{m^{\star}_i} \ge \frac{2\varepsilon^2}{2\ln(2 / \delta)}.
$$
Otherwise, we set $w_i$ to be $\frac{1}{m^{\star}_i}$, and further, for any $i \notin O$, we set $w_i$ to be zero.
Then we use approximate Pareto curve to find $O_1  \in \{ A' \in \mathcal{F}: \mu(A') \ge \thetahi\}$ with approximately maximum $w(O_1)$.

Here, we use set the multiplicative approximation ratio to be 
$$
1 + \frac{ \varepsilon^2}{2\ln(2 / \delta) w(O)},
$$
as such a multiplicative approximation ratio will induce an additive approximate term of
$$
\frac{ \varepsilon^2}{2\ln(2 / \delta) w(O)} \cdot w(O_1^*) \le \frac{ \varepsilon^2}{2\ln(2 / \delta) w(O)} \cdot w(O)= \frac{ \varepsilon^2}{2\ln(2 / \delta)},
$$
where $O_1^{*}$ denotes the subset in $\{ A' \in \mathcal{F}: \mu(A') \ge \thetahi\}$ with maximum 
$$
\sum_{i \in O \backslash O_1^*} \frac{1}{m^{\star}_i}.
$$
Such an additive approximate term is enough to distinguish the two cases (return ``yes'' or ``no'') stated above. 
Meanwhile, the time complexity for calculating such an approximate Pareto curve is bounded by a function polynomial in the sample complexity of our algorithm. 

Given the efficient implementation for $\simest$, $\verify$ can be
implemented in a similar manner. We also apply the Ellipsoid method and
approximately solve the separation problem by using approximate Pareto
curve. We do not repeat those details due to the similarity.


\section{Optimal Algorithm for General Sampling Problem}
\label{sec:general-algo}
        
In this section, we present a nearly optimal algorithm
$\alggen$ for the \generalbandit{} problem.  Given an instance $\inst =
(\armseq, \anssetcol)$ and a confidence level $\delta$, \alggen{} either
identifies an answer set in $\anssetcol$ as the answer, or reports an
error.  The algorithm is guaranteed to return the correct answer with
probability $1 - \delta - \delta_0$, where $\delta_0 = 0.01$, while the
probability of returning an incorrect answer is upper bounded by
$\delta$.  Therefore, \alggen{} can be transformed to a $\delta$-correct
algorithm while retaining its sample complexity by applying the parallel
simulation idea from Lemma \ref{lem:parasim}.

\subsection{Algorithm}
Algorithm \alggen{} consists of two stages.  In the first stage, we
sample each arm repeatedly in round-robin fashion, until the confidence
region of $\mean$ intersects exactly one answer set
$\candansset\in\anssetcol$.  \footnote{Here $\ltwoball(x, r)$ denotes
  the closed $\ell^{2}$-ball $\left\{x' \in \real^{n}: \twonorm{x-x'}\le
    r\right\}$.}  We identify $\candansset$ as the candidate answer.  We
further sample each arm a few more times in order to obtain a
sufficiently tight confidence region for the second stage.

The second stage is devoted to verifying the candidate $\candansset$.
We first calculate the optimal sampling profile by linear programming.
Let $\altans$ denote
$\bigcup_{\ansset\in\anssetcol\setminus\{\candansset\}}\ansset$, the
union of all answer sets other than $\candansset$.  Each point $\tilmean
\in \altans$ defines a constraint of the linear program, which states
that sufficiently many samples must be taken, in order to distinguish
the actual mean profile from $\tilmean$.  Finally, we verify the
candidate answer by sampling the arms according to the sampling profile.

Note that in the first stage, \alggen{} samples the arms in an inefficient
round-robin fashion, while the candidate answer $\candansset$ is verifed
using the optimal sampling profile in Stage 2.  Thus, \alggen{} uses a
less stringent confidence (i.e., $\delta_0$) in Stage 1, and then adopts
the required confidence level $\delta$ in the second stage.

\begin{algorithm2e}
  \caption{$\alggen(\inst, \delta)$}
  \KwIn{Instance $\inst = (\armseq, \anssetcol)$ and confidence level $\delta$.}
  \KwOut{Either an answer set in $\anssetcol$ or an error.}
  $t \leftarrow 0, \delta_0 \gets 0.01$\;
  \Repeat{ $\ltwoball\left(\empmean^{(t)},3r_t\right)$  intersects with
    exactly one of the answer sets, denoted by $\candansset$
  } {
    $t \leftarrow t + 1$\;
    Sample each arm in $\armseq$ once\;
    $\empmean^{(t)}\leftarrow$ empirical means of the arms among the
    first $t$ samples\; 
    $r_t \leftarrow \sqrt{\left[2n +
        3\ln\left(\delta_0/(4t^2)\right)^{-1}\right]/t}$\; 
  }
  $\alpha \leftarrow r_t/\sqrt{8n}$;
  $M \leftarrow \alpha^{-2}\left[2n + 3\ln(\delta_0/2)^{-1}\right]$\;
  $\empmean \leftarrow \Sample((M, M, \ldots, M))$\;
  \uIf{$\ltwoball(\empmean,r_t)$ intersects with $\altans$} {
    \textbf{return} error\;
  }
  Let $x^*$ be the optimal solution to the following linear program:
    \begin{equation}\label{eq:LP}\begin{split}
      \textrm{minimize}~~~~&\sum_{i=1}^{n}x_i\\
      \textrm{subject
        to}~~~~&\sum_{i=1}^{n}\left(\tilmean_i-\empmean_i\right)^2x_i
      \ge 1,~\forall \tilmean\in\altans,\\ 
      & x_i\ge 0\text{.}
    \end{split}
  \end{equation}

  $\beta\leftarrow64$;
  $m \leftarrow \beta x^*_i\left(\ln\delta^{-1}+n\right)$\;
  $X \leftarrow \Sample(m)$\;
  \uIf{$\sum_{i=1}^{n}m_i(X_i - \empmean_i)^2
    \le 36\left(\ln\delta^{-1}+n\right)$
  } {
    \textbf{return} $\candansset$\;
  } \uElse {
    \textbf{return} error\;
  }
\end{algorithm2e}

\subsection{Correctness}

\paragraph{Good Events.}
We start by defining two ``good events'' conditioning on which the
correctness and the sample complexity optimality of \alggen{} can be
guaranteed.  Recall that $\mean_i$ denote the mean of arm $\arm_i$.
Define $\goodevent_0$ as the event that in Stage 1, $\twonorm{\empmean^{(k)}
  - \mean}\le r_k$ holds for all $k$, and $\twonorm{\empmean -
  \mean}\le\alpha$ also holds.  Note that both
$k\twonorm{\empmean^{(k)}-\mean}^2$ and $M\twonorm{\empmean-\mean}^2$
are $\chi^2$ random variables with $n$ degrees of freedom.  The tail
probability bound of the $\chi^{2}$-distribution (Lemma
\ref{lem:chi2bound}) implies that
$$
\pr{\twonorm{\empmean^{(k)}-\mean} > r_k} =
\pr{k\twonorm{\empmean^{(k)}-\mean}^2 > 2n +
  3\ln\left(\frac{\delta_0}{4k^2}\right)^{-1}} \le
\frac{\delta_0}{4k^2}\text{.}
$$
Similarly,
$$
\pr{\twonorm{\empmean-\mean} > \alpha} = \pr{M\twonorm{\empmean-\mean}^2
  > 2n + 3\ln\left(\frac{\delta_0}{2}\right)^{-1}} \le
\frac{\delta_0}{2}\text{.}
$$
By a union bound,
$$
\Pr[\goodevent_0] \ge 1 - \frac{\delta_0}{2} -
\sum_{k=1}^{\infty}\frac{\delta_0}{4k^2} \ge 1 - \delta_0\text{.}
$$
We define $\goodevent$ as the event that in Stage 2, it holds that
\begin{equation}\label{eq:eventone}
  \sum_{i=1}^{n} m_i(X_i - \mean_i)^2
  \le 2n + 3\ln\delta^{-1}\text{.}
\end{equation}
Note that $\sqrt{m_i}(X_i - \mean_i)$ follows the standard normal
distribution. Thus, Lemma \ref{lem:chi2bound} implies that that
$\Pr[\goodevent]\ge 1 - \delta$.

\paragraph{LP Solution Bound.} We have the following simple lemma, which
upper bounds the optimal solution $x^*$ of the linear program in Stage
2.

\begin{Lemma}\label{lem:LPbound}
  $\sum_{i=1}^{n}x^*_{i}\le nr_t^{-2}$.
\end{Lemma}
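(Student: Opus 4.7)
The plan is to prove Lemma~\ref{lem:LPbound} by exhibiting an explicit feasible solution to the linear program whose objective value is exactly $n r_t^{-2}$; optimality of $x^*$ then immediately yields $\sum_i x^*_i \le n r_t^{-2}$.

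The candidate I would use is the uniform vector $\bar x_i = r_t^{-2}$ for every $i \in [n]$. Its objective value is clearly $\sum_{i=1}^{n}\bar x_i = n r_t^{-2}$, so the only thing to check is feasibility, i.e.\ that for every $\tilmean \in \altans$,
\[
\sum_{i=1}^{n}(\tilmean_i - \empmean_i)^2 \bar x_i
\;=\; r_t^{-2}\,\twonorm{\tilmean - \empmean}^2 \;\ge\; 1,
\]
which is equivalent to showing $\twonorm{\tilmean - \empmean} \ge r_t$ for every $\tilmean \in \altans$.

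This is exactly the content of the test performed right before the LP is set up in \alggen: after sampling $M$ times to produce $\empmean$, the algorithm checks whether $\ltwoball(\empmean, r_t)$ intersects $\altans$, and returns an error if it does. The LP in \eqref{eq:LP} is therefore only solved on executions for which the closed $\ell^2$-ball $\ltwoball(\empmean, r_t)$ is disjoint from $\altans$. Since $\ltwoball(\empmean, r_t)$ is closed, this disjointness means every $\tilmean \in \altans$ satisfies $\twonorm{\tilmean - \empmean} > r_t$, which gives the required constraint (strictly). Hence $\bar x$ is feasible, and $\sum_i x^*_i \le \sum_i \bar x_i = n r_t^{-2}$.

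There is essentially no obstacle here; the only subtlety worth pointing out in the write-up is that we are implicitly conditioning on reaching the LP step (otherwise the algorithm has already returned an error and $x^*$ is not defined), which is precisely what guarantees the separation between $\empmean$ and $\altans$ used in the feasibility check.
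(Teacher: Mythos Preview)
Your proof is correct and is essentially identical to the paper's: both exhibit the uniform vector $x_i = r_t^{-2}$ as a feasible solution to the LP, using the fact that the algorithm only reaches the LP step when $\ltwoball(\empmean, r_t)$ is disjoint from $\altans$, and conclude by optimality of $x^*$. Your write-up is slightly more detailed in spelling out the feasibility computation and the conditioning subtlety, but the argument is the same.
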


\begin{proof}
  Since \alggen{} completes Stage 1 without reporting an error,
  $\ltwoball(\empmean, r_t)$ is disjoint from $\altans$.  In other
  words, for all $\tilmean\in\altans$ we have $\twonorm{\tilmean -
    \empmean} > r_t$.  It directly follows that
  $$x_1 = x_2 = \cdots = x_n = r_t^{-2}$$
  is a feasible solution of the linear program \eqref{eq:LP}, which
  proves the lemma.
\end{proof}

\begin{Lemma}[Soundness]
  \label{lem:sound}
  Conditioning on $\goodevent$, \alggen{} never returns an incorrect
  answer.
\end{Lemma}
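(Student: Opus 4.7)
The plan is to argue by contradiction: assume \alggen{} returns $\candansset$ yet $\candansset$ is incorrect, and derive an inconsistency from $\goodevent$. Since the mean profile $\mean$ must lie in some answer set and $\candansset$ is incorrect by assumption, we have $\mean \in \altans$. Thus $\mean$ itself is one of the points generating a constraint in the linear program~\eqref{eq:LP}, and the feasibility of the optimal solution $x^*$ yields
\[
    \sum_{i=1}^{n}(\mean_i - \empmean_i)^2 x^*_i \ge 1.
\]
Multiplying by $\beta(\ln\delta^{-1}+n) = 64(\ln\delta^{-1}+n)$ and using $m_i = \beta x^*_i(\ln\delta^{-1}+n)$, this gives the lower bound
\[
    \sum_{i=1}^{n} m_i (\mean_i - \empmean_i)^2 \ge 64(\ln\delta^{-1}+n).
\]

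Next, I would invoke the conditioning. Under $\goodevent$, inequality~\eqref{eq:eventone} gives $\sum_i m_i(X_i - \mean_i)^2 \le 2n + 3\ln\delta^{-1} \le 3(n+\ln\delta^{-1})$, and since the algorithm returned $\candansset$ rather than error, its acceptance test yields $\sum_i m_i(X_i - \empmean_i)^2 \le 36(\ln\delta^{-1}+n)$. Now define the weighted seminorm $\twonorm{v}_m \coloneqq \sqrt{\sum_i m_i v_i^2}$; this is genuinely a norm (up to the zero-weight directions, which are irrelevant here). The triangle inequality in $\twonorm{\cdot}_m$ gives
\[
    \twonorm{\mean - \empmean}_m \le \twonorm{\mean - X}_m + \twonorm{X - \empmean}_m \le \bigl(\sqrt{3} + 6\bigr)\sqrt{\ln\delta^{-1}+n}.
\]

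Squaring the last inequality yields $\sum_i m_i(\mean_i - \empmean_i)^2 \le (\sqrt{3}+6)^2(\ln\delta^{-1}+n) < 60(\ln\delta^{-1}+n)$, which contradicts the lower bound of $64(\ln\delta^{-1}+n)$ derived above. Hence $\mean \in \candansset$, i.e., the returned answer is correct. I do not anticipate any genuine obstacle here; the only thing to double-check is that the numerical constants work out (indeed $\sqrt 3 + 6 < 8$, so $(\sqrt{3}+6)^2 < 64 = \beta$), which is precisely why $\beta$ was chosen to be $64$ in the algorithm. Note that this argument uses nothing about Stage~1 beyond the fact that the LP was well-defined (which requires the algorithm to have reached Stage~2 without reporting an error); in particular the proof does not rely on $\goodevent_0$.
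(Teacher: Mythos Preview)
Your proof is correct and is essentially the same argument as the paper's: both use that $\mean\in\altans$ forces $\sum_i m_i(\mean_i-\empmean_i)^2\ge\beta(n+\ln\delta^{-1})$ via the LP constraint, bound $\sum_i m_i(X_i-\mean_i)^2$ via $\goodevent$, and combine these with the acceptance bound through the triangle inequality in the weighted $\ell_2$ norm. The paper writes this via the substitution $a_i=\sqrt{m_i}(X_i-\empmean_i)$, $b_i=\sqrt{m_i}(X_i-\mean_i)$, $c_i=\sqrt{m_i}(\mean_i-\empmean_i)$ and argues directly that $\twonorm{a}\ge\twonorm{c}-\twonorm{b}>6\sqrt{n+\ln\delta^{-1}}$, but this is the same triangle inequality you phrase as a contradiction.
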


\begin{proof}
  Recall that at the end of algorithm \alggen{}, the following
  inequality is verified:
  \begin{equation}\label{eq:verify}
    \sum_{i=1}^{n}m_i(X_i - \empmean_i)^2
    \le 36\left(\ln\delta^{-1}+n\right)\text{.}
  \end{equation}
  Suppose the candidate answer $\candansset$ chosen in Stage 1 is
  correct, the lemma trivially holds, so assume that the candidate is
  incorrect (i.e., $\mean\in\altans$). We now show that conditioning on
  event $\goodevent$, inequality \eqref{eq:verify} is violated, and thus
  \alggen{} reports an error, rather than returning the incorrect answer
  $\candansset$.
  
  Define $a_i = \sqrt{m_i}\left(X_i - \empmean_i\right)$. Note that
  inequality \eqref{eq:verify} is equivalent to
  $\twonorm{a}\le6\sqrt{n+\ln\delta^{-1}}$.
  Let us write $a$ into $a = b + c$, where $b_i = \sqrt{m_i}\left(X_i -
    \mean_i\right)$ and $c_i = \sqrt{m_i}\left(\mean_i -
    \empmean_i\right)$.  We first note that conditioning on event
  $\goodevent$, inequality \eqref{eq:eventone} guarantees
  that $$\twonorm{b} = \sqrt{\sum_{i=1}^{n}m_i\left(X_i -
      \mean_i\right)^2}\le \sqrt{2n + 3\ln\delta^{-1}} < 2\sqrt{n +
    \ln\delta^{-1}}\text{.}$$ On the other hand, since
  $\mean\in\altans$, the constraint corresponding to point $\mean$ in
  linear program \eqref{eq:LP} implies that
  $$\twonorm{c} = \sqrt{\sum_{i=1}^{n}m_i\left(\mean_i -
      \empmean_i\right)^2} =
  \sqrt{\beta\left(n+\ln\delta^{-1}\right)\sum_{i=1}^{n}x^*_i\left(\mean_i
      - \empmean_i\right)^2}\ge 8\sqrt{n + \ln\delta^{-1}}\text{.}$$ 
  Therefore, we conclude that
  $$\twonorm{a} = \twonorm{b+c}\ge\twonorm{c}-\twonorm{b}>6\sqrt{n+\ln\delta^{-1}}\text{,}$$ 
  which completes the proof.
\end{proof}

\begin{Lemma}[Completeness]
  \label{lem:complete}
  Conditioning on $\goodevent_0\cap\goodevent$, \alggen{} always returns
  the correct answer.
\end{Lemma}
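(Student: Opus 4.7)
The plan is to verify, conditioning on $\goodevent_0\cap\goodevent$, three claims: (i) Stage~1 terminates with the candidate $\candansset$ equal to the unique answer set containing $\mean$; (ii) the intermediate check that $\ltwoball(\empmean,r_t)\cap\altans=\emptyset$ succeeds, so \alggen{} does not return an error before Stage~2; and (iii) the final verification inequality $\sum_i m_i(X_i-\empmean_i)^2 \le 36(n+\ln\delta^{-1})$ holds, so \alggen{} returns $\candansset$.

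First I would handle (i) and (ii). Under $\goodevent_0$, $\twonorm{\empmean^{(t)}-\mean}\le r_t$, so $\mean\in\ltwoball(\empmean^{(t)},3r_t)$ at every step and this ball always intersects the true answer set. The disjointness hypothesis in the definition of \generalbandit{} guarantees that $\mean$ lies at positive distance $d$ from the closure of the union of all other answer sets; since $r_t\to 0$, once $4r_t<d$ the ball $\ltwoball(\empmean^{(t)},3r_t)\subseteq\ltwoball(\mean,4r_t)$ intersects only the true answer set, so Stage~1 terminates with $\candansset$ selected correctly. Claim (ii) then follows from the loop-exit condition together with $\twonorm{\empmean-\mean}\le\alpha\le r_t$: indeed $\ltwoball(\empmean,r_t)\subseteq\ltwoball(\empmean^{(t)},3r_t)$, and the latter is disjoint from $\altans$ at exit.

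The main obstacle will be (iii). My approach is to introduce the weighted seminorm $\|y\|_m:=\sqrt{\sum_i m_i y_i^2}$ and split via the triangle inequality $\|X-\empmean\|_m \le \|X-\mean\|_m + \|\empmean-\mean\|_m$. The first term is directly controlled by $\goodevent$, giving $\|X-\mean\|_m^2\le 2n+3\ln\delta^{-1}<3(n+\ln\delta^{-1})$. For the second, since $\mean\notin\altans$ I cannot invoke any constraint of the LP~\eqref{eq:LP}, so I would use the cruder bound $\|\empmean-\mean\|_m^2 \le (\max_i m_i) \cdot \twonorm{\empmean-\mean}^2$, combined with $\max_i m_i\le\beta(\sum_j x^*_j)(n+\ln\delta^{-1})$, Lemma~\ref{lem:LPbound} ($\sum_j x^*_j\le nr_t^{-2}$), and the Stage-1 guarantee $\twonorm{\empmean-\mean}\le\alpha=r_t/\sqrt{8n}$. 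The $r_t$ and $n$ factors then cancel, leaving $\|\empmean-\mean\|_m^2 \le \beta(n+\ln\delta^{-1})/8 = 8(n+\ln\delta^{-1})$ for $\beta=64$. Summing yields $\|X-\empmean\|_m<(\sqrt{3}+\sqrt{8})\sqrt{n+\ln\delta^{-1}}<6\sqrt{n+\ln\delta^{-1}}$, so $\sum_i m_i(X_i-\empmean_i)^2<36(n+\ln\delta^{-1})$ as needed. The delicate point is that the choice $\alpha=r_t/\sqrt{8n}$ in Stage~1 is tuned precisely so that Lemma~\ref{lem:LPbound} balances the Stage-1 empirical-mean error against the LP-weighted norm used in Stage~2.
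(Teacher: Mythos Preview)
Your proposal is correct and follows essentially the same approach as the paper: both split $X-\empmean$ via the triangle inequality into the pieces $X-\mean$ (controlled by $\goodevent$) and $\empmean-\mean$ (controlled by Lemma~\ref{lem:LPbound} together with $\twonorm{\empmean-\mean}\le\alpha=r_t/\sqrt{8n}$), and then check that the resulting constant stays below~$6$. The only cosmetic differences are that you route the bound on $\|\empmean-\mean\|_m^2$ through $\max_i m_i$ rather than through $\sum_i x_i^*(\mean_i-\empmean_i)^2\le(\sum_i x_i^*)\twonorm{\mean-\empmean}^2$ directly, and that you additionally argue termination of the Stage~1 loop (which the paper defers to the sample-complexity lemma).
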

                
\begin{proof}
  Recall that conditioning on event $\goodevent_0$, the actual mean profile
  $\mean$ is in $\ltwoball(\empmean^{(t)}, r_t)$. According to
  \alggen{}, $\candansset$ is the only answer set that intersects
  $\ltwoball(\empmean^{(t)}, r_t)$, and thus $\candansset$ is indeed the
  correct answer. It remains to show that \alggen{} terminates without
  reporting errors.

  We first prove that at the end of Stage 1, $\ltwoball(\empmean, r_t)$
  and $\altans$ are disjoint. Let $\tilmean$ be an arbitrary point in
  $\altans$. Our choice of $t$ ensures that $\twonorm{\empmean^{(t)} -
    \tilmean}\ge 3r_t\text{.}$ Conditioning on event $\goodevent_0$, we also
  have $\twonorm{\empmean^{(t)} - \mean}\le r_t$ and $\twonorm{\empmean
    - \mean}\le\alpha < r_t\text{.}$ It follows from the three
  inequalities above that
  $$\twonorm{\empmean - \tilmean} \ge \twonorm{\empmean^{(t)} -
    \tilmean} - \twonorm{\empmean^{(t)} - \mean} - \twonorm{\empmean -
    \mean} > 3r_t - r_t - r_t = r_t\text{,}$$ which implies
  $\tilmean\not\in\ltwoball(\empmean, r_t)$. Therefore,
  $\ltwoball(\empmean, r_t)$ and $\altans$ are disjoint, and \alggen{}
  finishes Stage 1 without reporting an error.

  Next we show that \alggen{} does not report an error at the end of
  Stage 2 (i.e., inequality \eqref{eq:verify} holds). As in the proof of
  Lemma~\ref{lem:sound}, define $a_i = \sqrt{m_i}\left(X_i -
    \empmean_i\right)$, $b_i = \sqrt{m_i}\left(X_i - \mean_i\right)$,
  and $c_i = \sqrt{m_i}\left(\mean_i-\empmean_i\right)$. Then inequality
  \eqref{eq:verify} is equivalent to showing
  $\twonorm{a} = \twonorm{b+c} \le6\sqrt{n+\ln\delta^{-1}}$.
  Conditioning on $\goodevent$, $\twonorm{b} < 2\sqrt{n+\ln\delta^{-1}}$
  follows from inequality \eqref{eq:eventone} as in Lemma
  \ref{lem:sound}. Next,
  \begin{align*}
    \twonorm{c}^2 &=
    \beta\left(n+\ln\delta^{-1}\right)\sum_{i=1}^{n}x^*_i\left(\mean_i -
      \empmean_i\right)^2 \tag{\text{by definition of $c$ and $m_i$}}\\
    &\le
    64\left(n+\ln\delta^{-1}\right)\bigg(\sum_{i=1}^{n}x^*_i\bigg)\sum_{i=1}^{n}\left(\mean_i
      - \empmean_i\right)^2 \tag{$\vec{u}\cdot \vec{v} \leq
      \|\vec{u}\|_1\cdot\|\vec{v}\|_1$}  \\ 
    &\le 64\left(n+\ln\delta^{-1}\right)\cdot nr_t^{-2}\cdot \alpha^2\\
    &\le 8\left(n+\ln\delta^{-1}\right)\text{.}
  \end{align*}
  Above, the third step applies Lemma~\ref{lem:LPbound} and the fact
  that $\twonorm{\empmean-\mean}\le\alpha$. The last step plugs in the
  parameter $\alpha = r_t/\sqrt{8n}$.  Therefore, we conclude that
  $$\twonorm{a}\le\twonorm{b} + \twonorm{c} < 2\sqrt{n+\ln\delta^{-1}} + \sqrt{8(n+\ln\delta^{-1})} < 6\sqrt{n+\ln\delta^{-1}}\text{,}$$
  and thus \alggen{} returns the correct answer without reporting an error.
\end{proof}

\subsection{Sample Complexity}
We show that \alggen{} is nearly optimal: the sample complexity of the algorithm matches the instance lower bound $\Omega(\gklow(\inst)\ln\delta^{-1})$ as $\delta$ tends to zero.  Specifically, we give an
upper bound on the sample complexity of algorithm \alggen{} conditioning
on the ``good event'' $\goodevent_0$, in terms of $\gklow(\inst)$ and
$$\Delta = \inf_{\tilmean\in\alt(\ansset)}\twonorm{\mean-\tilmean}\text{.}$$
(Note that the assumption that $\ansset$ is disjoint from the closure of
$\alt(\ansset)$ guarantees that $\Delta > 0$.) We first prove a simple
lemma, which relates $\Delta$ to $\gklow(\inst)$.

\begin{Lemma}\label{lem:DeltavsLow}
  $\gklow(\inst) \ge \Delta^{-2}$.
\end{Lemma}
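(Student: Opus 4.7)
The plan is to work directly from the definition of $\gklow(\inst)$ as the optimum of the linear program~\eqref{eq:genlbdef} and show that any feasible $\tau$ has coordinate sum at least $\Delta^{-2}$. Let $\tau$ be any feasible solution and set $T = \sum_{i=1}^{n}\tau_i$. The trivial coordinatewise bound $\tau_i \le T$ (which follows from $\tau_j \ge 0$) will be the only structural fact I use.

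Next, I fix an arbitrary $\tilmean \in \alt(\ansset)$ and apply the feasibility constraint at this $\tilmean$:
\[
  1 \le \sum_{i=1}^{n}(\tilmean_i - \mean_i)^2\,\tau_i \le T \sum_{i=1}^{n}(\tilmean_i - \mean_i)^2 = T\,\twonorm{\tilmean - \mean}^2.
\]
Rearranging gives $T \ge \twonorm{\tilmean - \mean}^{-2}$ for every $\tilmean \in \alt(\ansset)$. Taking the supremum over $\tilmean \in \alt(\ansset)$, which equals $\Delta^{-2}$ by the definition of $\Delta$ (and is finite and positive because of the disjointness assumption guaranteeing $\Delta > 0$), yields $T \ge \Delta^{-2}$. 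Since this holds for every feasible $\tau$, it holds for the optimum, proving $\gklow(\inst) \ge \Delta^{-2}$.

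There is essentially no obstacle here: the bound is a one-line consequence of replacing $\tau_i$ by its maximum $T$ in the constraint. The only subtle point to mention is that although the infimum in the definition of $\Delta$ may not be attained, we only need the constraint to hold at every point of $\alt(\ansset)$, so passing to the supremum over $\tilmean$ on the right-hand side gives $\Delta^{-2}$ without needing an achieving $\tilmean$.
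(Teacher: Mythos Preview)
Your proof is correct and follows essentially the same approach as the paper: bound each $\tau_i$ by the total $T=\sum_j\tau_j$ in the feasibility constraint to obtain $1\le T\,\twonorm{\tilmean-\mean}^2$, then take the supremum over $\tilmean\in\alt(\ansset)$. The only (harmless) difference is that you argue for an arbitrary feasible $\tau$ and pass to the optimum, whereas the paper works directly with the optimal $\tau^*$.
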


\begin{proof}
  By definition, $\gklow(\inst) = \sum_{i=1}^{n}\tau^*_i$, where
  $\{\tau^*_i\}$ is the optimal solution to \eqref{eq:genlbdef}.  Note
  that
  $$
  \gklow(\inst)\twonorm{\tilmean_i-\mean_i}^2
  \ge \sum_{i=1}^{n}(\tilmean_i-\mean_i)^2\tau^*_i
  \ge 1\text{.}
  $$
  Thus
  $$
  \gklow(\inst) \ge
  \sup_{\tilmean\in\alt(\ansset)}\twonorm{\tilmean-\mean}^{-2} =
  \Delta^{-2}\text{.}
  $$
\end{proof}

\begin{Lemma}\label{lem:sample}
  Conditioning on event $\goodevent_0$, \alggen{} takes
  $O(\gklow(\inst)(\ln\delta^{-1} + n^3 + n\ln\Delta^{-1}))$ samples.
\end{Lemma}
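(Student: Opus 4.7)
My plan is to bound separately the samples used in Stage~1 (the round-robin loop, and then the $nM$ samples taken to tighten the confidence radius to $\alpha$) and in Stage~2 (the $\sum m_i$ samples drawn according to the LP solution $x^*$), and then sum the two bounds. Throughout I condition on $\goodevent_0$, which guarantees $\twonorm{\empmean^{(k)} - \mean} \le r_k$ for every $k$ in Stage~1 and $\twonorm{\empmean - \mean} \le \alpha$ after the extra sampling.

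For Stage~1 I first upper bound the iteration count $t$. As in Lemma~\ref{lem:complete}, once $4r_s \le \Delta$, the triangle inequality forces $\ltwoball(\empmean^{(s)}, 3r_s)$ to be disjoint from $\alt(\ansset)$, so the loop terminates; inverting the definition $r_s^2 = (2n + 3\ln(4s^2/\delta_0))/s$ then yields $t = O((n + \ln\Delta^{-1})/\Delta^2)$. Next, I lower bound $r_t$: since the loop did not exit at iteration $t-1$ while $\mean \in \ltwoball(\empmean^{(t-1)}, r_{t-1})$, some $\tilmean \in \alt(\ansset)$ must lie in $\ltwoball(\empmean^{(t-1)}, 3r_{t-1})$, yielding $\Delta \le 4r_{t-1}$ and hence $r_t \ge r_{t-1}/\sqrt{2} \ge \Delta/(4\sqrt{2})$ (the corner case $t=1$ is easily absorbed into the constants, as it can only occur when $\Delta = \Omega(\sqrt{n})$). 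Consequently $M = O(n^2/r_t^2) = O(n^2/\Delta^2)$, so Stage~1 uses $n(t + M) = O((n^3 + n\ln\Delta^{-1})/\Delta^2)$ samples, which by Lemma~\ref{lem:DeltavsLow} is at most $O(\gklow(\inst)(n^3 + n\ln\Delta^{-1}))$.

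For Stage~2 it suffices to show $\sum x^*_i = O(\gklow(\inst))$, since then $\sum m_i = O(\gklow(\inst)(n + \ln\delta^{-1}))$ follows from the definition of $m_i$. Let $\tau^*$ be optimal for program~\eqref{eq:genlbdef}, so $\sum \tau^*_i = \gklow(\inst)$ and $\sum (\tilmean_i - \mean_i)^2 \tau^*_i \ge 1$ for every $\tilmean \in \alt(\ansset)$; note that $\candansset = \ansset$ under $\goodevent_0$ by Lemma~\ref{lem:complete}, so $\altans = \alt(\ansset)$. I split into two cases on $r_t$. When $r_t \le \sqrt{2n/\gklow(\inst)}$, I claim $x_i = 4\tau^*_i$ is feasible for~\eqref{eq:LP}: using the pointwise identity $(\tilmean_i - \empmean_i)^2 \ge \tfrac{1}{2}(\tilmean_i - \mean_i)^2 - (\mean_i - \empmean_i)^2$ and the bound $\sum (\mean_i - \empmean_i)^2 \tau^*_i \le \gklow(\inst) \twonorm{\mean - \empmean}^2 \le \gklow(\inst)\alpha^2 = \gklow(\inst) r_t^2/(8n) \le 1/4$, we obtain $\sum (\tilmean_i - \empmean_i)^2 \cdot 4\tau^*_i \ge 4(\tfrac{1}{2} - \tfrac{1}{4}) = 1$ for every $\tilmean \in \altans$. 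When $r_t > \sqrt{2n/\gklow(\inst)}$, Lemma~\ref{lem:LPbound} already yields $\sum x^*_i \le nr_t^{-2} < \gklow(\inst)/2$. In either case $\sum x^*_i = O(\gklow(\inst))$.

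The main obstacle is precisely this Stage~2 argument: a direct scaling of $\tau^*$ into a feasible LP solution fails whenever $r_t$ is large, because then the bound $\sum (\mean_i - \empmean_i)^2 \tau^*_i \le \alpha^2 \gklow(\inst)$ can exceed $1/2$ and $\tau^*$ tells us nothing about how far points of $\altans$ sit from $\empmean$ itself. The resolution is the case split above, which exploits the observation that the ``bad'' regime for $\tau^*$ is exactly the regime in which the uniform fallback $x_i = 1/r_t^2$ from Lemma~\ref{lem:LPbound} already dominates $\gklow(\inst)$, so the split is lossless. Adding the Stage~1 and Stage~2 bounds then yields the claimed total sample complexity $O(\gklow(\inst)(\ln\delta^{-1} + n^3 + n\ln\Delta^{-1}))$.
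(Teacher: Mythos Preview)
Your proof is correct and follows essentially the same decomposition as the paper: bound Stage~1 by showing the round-robin loop terminates once $4r_s\le\Delta$, bound $M$ via $r_t=\Omega(\Delta)$, and bound Stage~2 by exhibiting $4\tau^*$ as a feasible point of LP~\eqref{eq:LP} using the elementary inequality $(a+b)^2\ge a^2/2 - O(b^2)$.

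The one notable difference is your case split on $r_t$ in Stage~2, which the paper avoids entirely. The paper observes that, conditioning on $\goodevent_0$, one always has $\gklow(\inst)=\sum_i\tau^*_i\le n r_t^{-2}$: since the exit condition at time $t$ gives $\twonorm{\empmean^{(t)}-\tilmean}>3r_t$ for every $\tilmean\in\alt(\ansset)$, and $\twonorm{\empmean^{(t)}-\mean}\le r_t$, the triangle inequality yields $\twonorm{\mean-\tilmean}>2r_t$; hence $\tau_i=r_t^{-2}$ is already feasible for program~\eqref{eq:genlbdef}. This is exactly the ``analogous argument to Lemma~\ref{lem:LPbound}'' applied at $\mean$ rather than $\empmean$. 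It follows that $r_t\le\sqrt{n/\gklow(\inst)}<\sqrt{2n/\gklow(\inst)}$ always, so your second case is vacuous and the ``main obstacle'' you identify is never actually encountered. With this observation the bound $\sum_i(\mean_i-\empmean_i)^2\tau^*_i\le\alpha^2\cdot nr_t^{-2}=1/8$ drops out directly, with no split needed. Your argument is not wrong, just slightly more roundabout.
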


\begin{proof}
  Recall that in the first stage of \alggen{}, $r_k$ is defined as
  $$r_k = \sqrt{\frac{2n+3\ln[\delta_0/(4k^2)]^{-1}}{k}}\text{,}$$
  and the number of samples taken in Stage 1 is $nt + nM$, where 
  $$M = \alpha^{-2}\left[2n + 3\ln(\delta_0/2)^{-1}\right]
  = O(n\alpha^{-2}) = O(n^2r_t^{-2})\text{,}$$ and $t$ is the smallest
  index such that $\ltwoball(\empmean^{(t)}, 3r_t)$ intersects only one
  set in $\anssetcol$. In order to upper bound $t$, let $t^*$ be the
  smallest integer such that $r_{t^*} < \Delta/4$. A simple calculation
  gives $t^* = O\left(\Delta^{-2}(n + \ln\Delta^{-1})\right)$. Moreover,
  at round $t^*$, conditioning on event $\goodevent_0$ implies that
  $\empmean^{(t^*)}\in\ltwoball(\mean, r_{t^*})$.  It follows that
  $$\ltwoball(\empmean^{(t^*)}, 3r_{t^*})
  \subseteq\ltwoball(\mean, 4r_{t^*}) \subset\ltwoball(\mean,
  \Delta)\text{.}$$ By definition of $\Delta$,
  $\ltwoball(\empmean^{(t^*)}, 3r_{t^*})$ is disjoint from
  $\alt(\ansset)$, and thus the round-robin sampling in Stage 1
  terminates before taking $t^*$ samples from each arm (i.e., $t\le
  t^*$). Therefore, $nt$ is upper bounded by
  $$nt^* = O\left(\Delta^{-2}(n^2+n\ln\Delta^{-1})\right)\text{.}$$
  Moreover, we have $nM = O(n^3r_t^{-2}) = O(n^3r_{t^*}^{-2}) =
  O(n^3\Delta^{-2})$. Also, we note that $\Delta^{-2}=O(\gklow(\inst))$
  by Lemma \ref{lem:DeltavsLow}. Putting this all together, the number
  of samples in Stage 1 is
  $$O\left(\gklow(\inst)(n^3+\ln\Delta^{-1})\right)\text{.}$$

  Now for the second stage samples. Let $\tau^*$ be the optimal solution
  to the linear program defined in \eqref{eq:genlbdef}. By definition,
  $\gklow(\inst) = \sum_{i=1}^{n}\tau^*_i$. Then we construct a feasible
  solution to the linear program in Stage 2 from $\tau^*$. Recall that
  conditioning on event $\goodevent_0$, we have $|\empmean_i -
  \mean_i|\le\twonorm{\empmean - \mean} \le \alpha$ for all
  $i\in[n]$. It follows that for all $\tilmean\in\altans$,
  $$\left(\tilmean_i - \empmean_i\right)^2 = [\left(\tilmean_i - \mean_i\right) + \left(\mean_i - \empmean_i\right)]^2\ge \left(\tilmean_i - \mean_i\right)^2/2 - 2\left(\mean_i - \empmean_i\right)^2\ge\left(\tilmean_i - \mean_i\right)^2/2 - 2\alpha^2\text{.}$$
  Here the second step applies the inequality $(a+b)^2\ge a^2/2 - 2b^2$.
  Therefore, for all $\tilmean\in\altans$,
  \begin{equation}\label{eq:sample}\begin{split}
      \sum_{i=1}^{n}\left(\tilmean_i - \empmean_i\right)^2\tau^*_i
      &\ge \sum_{i=1}^{n}\tau^*_i\left[\left(\tilmean_i - \mean_i\right)^2/2 - 2\alpha^2\right]\\
      &\ge \frac{1}{2}\sum_{i=1}^{n}\left(\tilmean_i - \mean_i\right)^2\tau^*_i - 2\alpha^2\sum_{i=1}^{n}\tau^*_i\\
      &\ge \frac{1}{2} - 2\alpha^2\cdot nr_t^{-2}
      =\frac{1}{4}\text{.}
    \end{split}\end{equation}
  The third step holds due to the feasibility of $\tau^*$ and the fact
  that $\sum_{i=1}^{n}\tau^*_i \le nr_t^{-2}$, which follows from an
  analogous argument to the proof of Lemma \ref{lem:LPbound}. The last
  step follows from our choice of parameter $\alpha = r_t/\sqrt{8n}$.

  Inequality \eqref{eq:sample} implies that $x_i = 4\tau^*_i$ is a
  feasible solution of the linear program in Stage 2 of \alggen{}. It
  follows that the number of samples taken in Stage 2 is bounded by
  \begin{equation*}\begin{split}
      \sum_{i=1}^{n}m_i
      =   \beta\left(\ln\delta^{-1}+n\right)\sum_{i=1}^{n} x^*_i
      \le \beta\left(\ln\delta^{-1} + n\right)\sum_{i=1}^{n}4\tau^*_i
      =   O\left(\gklow(\inst)\left(\ln\delta^{-1}+n\right)\right)\text{.}
    \end{split}\end{equation*}
  In conclusion, \alggen{} takes
  $$O\left(\gklow(\inst)(\ln\delta^{-1}+n^3 + n\ln\Delta^{-1})\right)$$
  samples in Stage 1 and Stage 2 in total, conditioning on event $\goodevent_0$.
\end{proof}

Finally, we prove Theorem~\ref{theo:general-upperb}, which we restate for convenience in the following.

\noindent\textbf{Theorem~\ref{theo:general-upperb}} (restated)\textit{
  There is a $\delta$-correct algorithm for $\generalbandit$ that takes
  $$
  O(\gklow(\inst)(\ln\delta^{-1} + n^3 + n\ln\Delta^{-1}))
  $$
  samples on any instance $\inst = (\armseq, \anssetcol)$ in expectation, where
    $$\Delta = \inf_{\tilmean\in\alt(\ansset)}\twonorm{\mean-\tilmean}$$
  is defined as the minimum Euclidean distance between the mean profile $\mean$ and an alternative mean profile $\tilmean\in\alt(\ansset)$ with an answer other than $\ansset$.
}

\begin{proof}
  By Lemmas \ref{lem:sound}, \ref{lem:complete} and \ref{lem:sample}, $\alggen$ is a $(\delta_0, \delta, A, B)$-correct algorithm for \generalbandit{} (as per Definition~\ref{def:alg}), where $\event_0 = \goodevent_0\cap\goodevent$, $\event_1=\goodevent$, $\delta_0 = 0.01$, $A = \gklow(\inst)$ and $B=\gklow(\inst)(n^3+n\ln\Delta^{-1})$. Lemma~\ref{lem:parasim} implies that there is a $\delta$-correct algorithm for \generalbandit{} with expected sample complexity
    $$O(A\ln\delta^{-1}+B) = O\left(\gklow(\inst)(\ln\delta^{-1}+n^3+n\ln\Delta^{-1})\right)\text{.}$$
\end{proof}


\bibliography{team} 

\newpage

\appendix

\section*{Organization of the Appendix}
In Appendices \ref{app:parasim}~and~\ref{app:efficientmiss}, we present the missing proofs in Sections \ref{sec:inefficient}~and~\ref{sec:efficient}. In Appendices \ref{sec:lower_bound_comb}~and~\ref{sec:anotherlb}, we prove our negative results on the sample complexity of \combibandit{} and \generalbandit{} (Theorems \ref{theo:worst-case-lowb-comb}~and~\ref{theo:lower-bound-general}).

\section{Missing Proof in Section~\ref{sec:inefficient}}\label{app:parasim}
In this section, we prove the ``parallel simulation'' lemma (Lemma~\ref{lem:parasim}) in Section~\ref{sec:inefficient}, which we restate below for convenience.

\noindent\textbf{Lemma~\ref{lem:parasim}} (restated) \textit{
	If there is a $(\delta_0, \delta, A, B)$ algorithm for a sampling problem for $\delta_0 = 0.01$ and any $\delta < 0.01$, there is also a $\delta$-correct algorithm for any $\delta < 0.01$ that takes $O(A\ln\delta^{-1} + B)$ samples in expectation.
}

\begin{proof}[Proof of Lemma~\ref{lem:parasim}]
For each integer $k \ge 0$, let $\alg_k$ be a $(\delta_0, \delta/2^{k+1}, A, B)$ algorithm for the problem. We construct an algorithm $\alg$, which simulates the sequence $\{\alg_k\}_{k\ge0}$ of algorithms in parallel.

We number the time slots with positive integers $1,2,\ldots$ At time slot $t$, for each integer $k\ge0$ such that $2^k$ divides $t$, $\alg$ either starts or resumes the execution of algorithm $\alg_k$, until $\alg_k$ requests a sample or terminates. In the former case, $\alg$ draws a sample from the arm that $\alg_k$ specifies and feeds it to $\alg_k$. After that, the execution of $\alg_k$ is suspended. As soon as some algorithm $\alg_k$ terminates without an error (i.e., it indeed returns an answer), $\alg$ outputs the answer that $\alg_k$ returns.

To analyze this construction, we let $\event_{0,k}$ and $\event_{1,k}$ denote the events $\event_0$ and $\event_1$ in Definition~\ref{def:alg} for algorithm $\alg_k$. By definition,
	$$\pr{\event_{0,k}}\ge 1 - \delta_0 - \delta/2^{k+1} \ge 0.98$$
and
	$$\pr{\event_{1,k}}\ge 1 - \delta/2^{k+1}\text{.}$$

We first note that since $\alg$ never returns an incorrect answer conditioning on $\bigcap_{k=0}^{\infty}\event_{1, k}$, by a union bound, the probability that $\alg$ outputs an incorrect answer is upper bounded by
	$$\sum_{k=0}^{\infty}\pr{\overline{\event_{1,k}}}\le\sum_{k=0}^{\infty}\delta/2^{k+1} = \delta\text{,}$$
and thus $\alg$ is $\delta$-correct.~\footnote{We may easily verify that $\alg$ terminates almost surely conditioning on $\bigcap_{k=0}^{\infty}\event_{1, k}$.}

Then we analyze the sample complexity of $\alg$. Let random variable $T$ be the smallest index such that $\event_{0,T}$ happens. Since the execution of the algorithm sequence $\{\alg_k\}$ is independent, we have
	$$\pr{T = k}
	\le \prod_{j=0}^{k-1}\left(1 - \pr{\event_{0,k}}\right)
	\le 0.02^k\text{.}$$
Conditioning on $T = k$, $\alg_k$ takes at most
	$\alpha(A \ln(2 ^ {k+1} / \delta) + B)$
samples before it terminates for some universal constant $\alpha$. Since $\alg_k$ takes a sample every $2^k$ time slots, algorithm $\alg$ terminates within $\alpha2^k(A\ln(2^{k+1}/\delta)+B)$ time steps. Then the total number of samples taken by $\alg$ is at most
	$$\sum_{j=0}^{\infty}\frac{\alpha2^k(A\ln(2^{k+1}/\delta)+B)}{2^j}
	\le \alpha2^{k+1}(A\ln(2^{k+1}/\delta)+B)\text{.}$$

Therefore, the expected number of samples taken by $\alg$ is upper bounded by
	\begin{equation*}\begin{split}
		&\sum_{k=0}^{\infty}\pr{T=k}\cdot\alpha2^{k+1}(A\ln(2^{k+1}/\delta)+B)\\
	&\le \alpha\sum_{k=0}^{\infty}0.02^k\cdot2^{k+1}(A\ln2^{k+1} + A\ln\delta^{-1}+B)\\
	&\le \alpha(A\ln\delta^{-1}+B)\sum_{k=0}^{\infty}0.02^k\cdot2^{k+1} +
		\alpha A\sum_{k=0}^{\infty}0.02^k\cdot2^{k+1} \ln2^{k+1}\\
	&=	O\left(A\ln\delta^{-1}+B\right)\text{.}
	\end{split}\end{equation*}
\end{proof}

\section{Missing Proofs in Section~\ref{sec:efficient}}\label{app:efficientmiss}
In this section, we present the missing proofs of Lemmas \ref{lem:efficorrect}~and~\ref{lem:effisample} in Section~\ref{sec:efficient}.

\subsection{Correctness}

We restate Lemma~\ref{lem:efficorrect} in the following.

\noindent\textbf{Lemma~\ref{lem:efficorrect}} (restated) \textit{
  For any $\delta\in(0, 0.01)$ and \combibandit{} instance $\combiband$,
  $\algeffi(\combiband, \delta)$ returns the correct answer with
  probability $1 - \delta_0 - \delta$, and returns an incorrect answer
  w.p.\ at most~$\delta$.
}

\begin{proof}
Define $\{\subsetfam_r\}$ and $\{\tilfam_r\}$ as
$$\subsetfam_{r+1} \coloneqq \{A\in\subsetfam:\meanhat{r}{}(A)\ge\theta_r\}$$
and
$$\tilfam_{r+1} \coloneqq \{A\in\subsetfam:\meanhat{r}{}(A)\ge\theta_r-\epsilon_r/\lambda\}\text{.}$$
Intuitively, $\subsetfam_r$ is analogous to the set $\subsetfam_r$ used
in \algnaive{}, which represents the collection of remaining sets at the
beginning of round $r$, while $\tilfam_{r+1}$ is a relaxed version of
$\subsetfam_{r+1}$.

Then we note that at each round $r$, when $\simest$ is called with
parameters $\mu = \meanhat{k-1}{}$ and $\thetahi =
\theta_{k-1}-\epsilon_{k-1}/\lambda$ for $k\in[r]$, the set
$\{A'\in\subsetfam:\mu(A')\ge\thetahi\}$ involved in the mathematical
program is exactly $\tilfam_k$. Similarly, when $\verify$ is called at
the last round with parameters $\thetahi_k = \theta_k -
\epsilon_k/\lambda$, the set
$\{A'\in\subsetfam:\meanhat{k-1}{}(A')\ge\thetahi_{k-1}\}$ is also
identical to $\tilfam_k$.

\paragraph{Good events.} As in the analysis of the \algnaive{}
algorithm, two good events $\goodevent_0$ and $\goodevent$ play
important roles. Let $\goodevent_{0,r}$ denote the event that either the
algorithm terminates before or at round $r$, or it holds that
$$\left|(\meanhat{r}{}(A)-\meanhat{r}{}(B))-(\mu(A)-\mu(B))\right| < \epsilon_k/\lambda$$
for all $k\in[r]$ and $A, B\in\tilfam_k$. Note that this definition is
stronger than the one in the analysis of \algnaive{}, where only the
accuracy of the gaps between set pairs in $\tilfam_r$ is required.
$\goodevent_0$ is defined as the intersection of all
$\goodevent_{0,r}$'s. Moreover, we define $\goodevent$ as the event that
at line~\ref{line:effhatO}, it holds that
$$\left|(\hat\mu(\hat O)-\hat\mu(A))-(\mu(\hat O)-\mu(A))\right|<\epsilon_k/\lambda$$
for all $k$ and $A\in\tilfam_k$.

The following lemma, similar to Lemma~\ref{lem:good-event-prob}, bounds the probability of the good events.
\begin{Lemma}\label{lem:effi-good-event-prob}
  $\pr{\goodevent_0} \geq 1-\delta_0$ and $\pr{\goodevent} \geq 1-\delta$.
\end{Lemma}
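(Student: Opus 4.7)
The plan is to closely follow the proof of Lemma~\ref{lem:good-event-prob}, adapting for three differences. The deviation event $\goodevent_{0,r}$ is now quantified over all $k\in[r]$ and all pairs in $\tilfam_k$, not merely round-$r$ pairs in $\subsetfam_r$; the subroutines $\simest$ and $\verify$ return only feasible (not optimal) solutions; and the confidence parameter $\delta_r$ has been tightened to $\delta_0/(10r^3|\subsetfam|^2)$ to absorb an additional $r$ factor in the union bound.

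First, I would exploit the specification of $\simest$. Observing that in \algeffi{} the constraint set $\{A':\meanhat{k-1}{}(A')\ge\theta_{k-1}-\epsilon_{k-1}/\lambda\}$ inside the $k$-th call to $\simest$ coincides with $\tilfam_k$, feasibility produces a vector $m^{(k)}$ with
\[
	\sum_{i\in A\oplus B}\frac{1}{m^{(k)}_i}\le\frac{(\epsilon_k/\lambda)^2}{2\ln(2/\delta_r)},\qquad\forall A,B\in\tilfam_k.
\]
Since $\numsamp{r}{}=\sum_{k=1}^r m^{(k)}$ componentwise, the identical inequality holds with $m^{(k)}_i$ replaced by $\numsamp{r}{i}$, simultaneously for every $k\in[r]$.

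Next, I would condition on the randomness up to the beginning of round $r$, which fixes the sets $\tilfam_1,\dots,\tilfam_r$ and the vector $\numsamp{r}{}$, and apply Lemma~\ref{lem:sum_dev} to the fresh round-$r$ samples via the decomposition $A\oplus B=(A\setminus B)\cup(B\setminus A)$ used in the \algnaive{} analysis. This yields, for each $k\in[r]$ and $A,B\in\tilfam_k$, a deviation probability of at most $\delta_r$. A union bound over the at most $r|\subsetfam|^2$ triples $(k,A,B)$ produces $\pr{\overline{\goodevent_{0,r}}}\le r|\subsetfam|^2\,\delta_r=\delta_0/(10r^2)$; summing the tail $\sum_{r\ge 1}\delta_0/(10r^2)\le\delta_0$ then delivers $\pr{\goodevent_0}\ge 1-\delta_0$.

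The bound $\pr{\goodevent}\ge 1-\delta$ is handled analogously. At whatever (random) round $r$ the algorithm enters the if-statement, the single call $\verify(\cdot,\cdot,\cdot,\delta/(r|\subsetfam|))$ yields $m$ satisfying the verify-program constraints for every $k\in[r]$ and $A\in\tilfam_k$; Lemma~\ref{lem:sum_dev} combined with a union bound over the at most $r|\subsetfam|$ such pairs gives a total failure probability of at most $\delta$. The main point to verify carefully is that the termination round $r$ is a stopping time, so the invocation of Lemma~\ref{lem:sum_dev} must be inside the conditional probability given the sigma-algebra at the start of that round; since the corresponding samples are fresh Gaussians independent of this history, the argument goes through.
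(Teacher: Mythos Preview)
Your proposal is correct and follows essentially the same approach as the paper's proof: use feasibility of the $\simest$ outputs (and the fact that $\numsamp{r}{}$ dominates each summand) to get the per-pair constraint, apply Lemma~\ref{lem:sum_dev}, and union-bound over the $r|\subsetfam|^2$ triples to obtain $\pr{\overline{\goodevent_{0,r}}}\le\delta_0/(10r^2)$, then sum over $r$; the $\goodevent$ bound is analogous. Your treatment is in fact slightly more careful than the paper's, which does not explicitly discuss the conditioning on past rounds or the stopping-time measurability of the termination round.
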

\begin{proof}
  Recall that $\numsamp{r}{}$ is the sum of
  $$\simest(
  \meanhat{k-1}{},
  \theta_{k-1}-\epsilon_{k-1}/\lambda,
  \theta_{k-1}-2\epsilon_{k-1}/\lambda,
  \epsilon_k/\lambda,
  \delta_r
  )$$
  over all $k\in[r]$. This guarantees that $\numsamp{r}{}$ is a valid solution to all programs. Specifically, for each $k\in[r]$ and $A,B\in\tilfam_k$, it holds that
  $$\sum_{i\in A\oplus B}1/\numsamp{r}{i}\le\frac{(\epsilon_k/\lambda)^2}{2\ln(2/\delta_r)}\text{.}$$
  By Lemma~\ref{lem:sum_dev}, it holds that
  $$\pr{\left|(\meanhat{r}{}(A)-\meanhat{r}{}(B))-(\mu(A)-\mu(B))\right| < \epsilon_k/\lambda}\ge1-\delta_r\text{.}$$
  A union bound over all possible choices of $k, A, B$ yields that
  $$\pr{\goodevent_{0,r}}\ge1-r|\subsetfam|^2\delta_r\ge1-\frac{\delta_0}{10r^2}\text{.}$$
  It follows from another union bound over all $r$ that $\pr{\goodevent_0} \ge 1-\delta_0$, and a similar argument proves that $\pr{\goodevent} \ge 1 - \delta$.
\end{proof}

\paragraph{Implications.} We prove the analogues of Lemmas \ref{lem:survive}~and~\ref{lem:chain} for \algeffi{}.
\begin{Lemma}\label{lem:effisurvive}
  Conditioning on $\goodevent_0$, $O\in\subsetfam_r\subseteq\tilfam_r$ for all $r$.
\end{Lemma}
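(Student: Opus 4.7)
The plan is induction on $r$, following the template of Lemma~\ref{lem:survive} (the analogue for \algnaive{}), with two minor bookkeeping points needed to accommodate the approximate oracles. The base case $r=1$ is immediate: the initializations $\meanhat{0}{}=\vec 0$ and $\theta_0=0$ give $\subsetfam_1=\tilfam_1=\subsetfam$, which certainly contains $O$. The inclusion $\subsetfam_{r+1}\subseteq\tilfam_{r+1}$ in the inductive step is direct from the definitions, since $\tilfam_{r+1}$ uses the smaller threshold $\theta_r-\epsilon_r/\lambda$.

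The content is therefore in the implication $O\in\subsetfam_r\Rightarrow O\in\subsetfam_{r+1}$. I would let $A_{\opt}$ be the set returned by $\maxoracle(\meanhat{r-1}{},\theta_{r-1},\meanhat{r}{},\epsilon_{r-1}/\lambda)$, so that $\opt_r=\meanhat{r}{}(A_{\opt})$. Two properties of $A_{\opt}$ are combined: first, the approximate-feasibility clause of $\maxoracle$ yields $\meanhat{r-1}{}(A_{\opt})\ge\theta_{r-1}-\epsilon_{r-1}/\lambda$, which is precisely the membership condition for $\tilfam_r$; second, since the inductive hypothesis places $O$ in $\subsetfam_r\subseteq\tilfam_r$ and $A_{\opt}\in\tilfam_r$ as just noted, the event $\goodevent_0$ applies to the pair $(O,A_{\opt})$ at round $r$ with accuracy $\epsilon_r/\lambda$. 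Combining this with $\mu(A_{\opt})\le\mu(O)$ (by optimality of $O$) gives
$$\opt_r-\meanhat{r}{}(O)=\meanhat{r}{}(A_{\opt})-\meanhat{r}{}(O) < \epsilon_r/\lambda.$$
Since $\lambda=20$ makes $1/\lambda<1/2+2/\lambda$, rearranging yields $\meanhat{r}{}(O)>\opt_r-(1/2+2/\lambda)\epsilon_r=\theta_r$, so $O\in\subsetfam_{r+1}$ as required.

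The only delicate point—and the reason the good event $\goodevent_0$ was strengthened here, compared to its counterpart in the analysis of \algnaive{}, to quantify over pairs in $\tilfam_k$ rather than over the tighter $\subsetfam_k$—is that approximate feasibility of $\maxoracle$ only guarantees $A_{\opt}\in\tilfam_r$, not $A_{\opt}\in\subsetfam_r$. Once this alignment is set up, the induction step mirrors the argument for Lemma~\ref{lem:survive} essentially word-for-word.
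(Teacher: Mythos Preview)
Your proof is correct and takes essentially the same approach as the paper: both use the approximate-feasibility clause of $\maxoracle$ to place the returned set in $\tilfam_r$, invoke $\goodevent_0$ on the pair $(O,A_{\opt})$ in $\tilfam_r$ to bound $\opt_r-\meanhat{r}{}(O)$ by $\epsilon_r/\lambda$, and conclude $\meanhat{r}{}(O)>\theta_r$. The paper frames the inductive step as a contradiction (``suppose $O\in\subsetfam_r\setminus\subsetfam_{r+1}$''), while you write it as a direct induction, but the content is identical.
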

\begin{proof}[Proof of Lemma~\ref{lem:effisurvive}]
  Suppose for a contradiction that
  $O\in\subsetfam_r\setminus\subsetfam_{r+1}$ for some $r$. Recall that
  $\maxoracle$ guarantees $\opt_r=\meanhat{r}{}(A)$ for some
  $A\in\subsetfam$ such that
  $\meanhat{r-1}{}(A)\ge\theta_{r-1}-\epsilon_{r-1}/\lambda$, i.e.,
  $A\in\tilfam_r$. Since $A\in\tilfam_r$ and
  $O\in\subsetfam_r\subseteq\tilfam_r$, it holds conditioning on
  $\goodevent_0$ that
  $$\meanhat{r}{}(O)-\meanhat{r}{}(A) > \mu(O)-\mu(A)-\epsilon_r/\lambda \ge -\epsilon_r/\lambda\text{.}$$
  It follows that
  $$\meanhat{r}{}(O) > \meanhat{r}{}(A) - \epsilon_r/\lambda = \opt_r-\epsilon_r/\lambda > \theta_r\text{,}$$
  which leads to a contradiction to $O\notin\subsetfam_{r+1}$.
\end{proof}

The following lemma is analogous to Lemma~\ref{lem:chain}. In addition,
we also characterize the relation between $G_{\ge r-1}$ and
$\{A\in\subsetfam:\meanhat{r}{}(A)\ge\theta_r-2\epsilon_r/\lambda\}$,
which serves as a further relaxation of $\tilfam_r$.
\begin{Lemma}\label{lem:effichain}
  Conditioning on $\goodevent_0$, it holds that
  $G_{\ge r}\supseteq\tilfam_{r+1}\supseteq\subsetfam_{r+1}\supseteq G_{\ge r+1}$
  and
  $G_{\ge r-1}\supseteq\{A\in\subsetfam:\meanhat{r}{}(A)\ge\theta_r-2\epsilon_r/\lambda\}$.
\end{Lemma}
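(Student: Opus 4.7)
My plan is to prove Lemma~\ref{lem:effichain} by strong induction on $r$, establishing all four inclusions simultaneously, closely mirroring the structure of Lemma~\ref{lem:chain}'s proof with extra bookkeeping to accommodate the $\tilfam$-relaxation and the $\maxoracle$ approximation slack. The middle inclusion $\tilfam_{r+1}\supseteq\subsetfam_{r+1}$ is immediate from the definitions since $\theta_r-\epsilon_r/\lambda\le\theta_r$.

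The workhorse for the other three inclusions is a two-sided bracketing of $\theta_r$ in terms of $\meanhat{r}{}(O)$. On the lower side, Lemma~\ref{lem:effisurvive} places $O\in\tilfam_r$, so the $\maxoracle$ specification yields $\opt_r\ge\meanhat{r}{}(O)-\epsilon_{r-1}/\lambda=\meanhat{r}{}(O)-2\epsilon_r/\lambda$. On the upper side, $\opt_r=\meanhat{r}{}(\hat A)$ for some $\hat A\in\tilfam_r$, and the round-$r$ concentration of $\goodevent_0$ applied to the pair $(\hat A,O)\in\tilfam_r\times\tilfam_r$ combined with $\mu(\hat A)\le\mu(O)$ yields $\opt_r\le\meanhat{r}{}(O)+\epsilon_r/\lambda$. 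Substituting into $\theta_r=\opt_r-(1/2+2/\lambda)\epsilon_r$ pins down
\[
\meanhat{r}{}(O)-(1/2+4/\lambda)\epsilon_r\;\le\;\theta_r\;\le\;\meanhat{r}{}(O)-(1/2+1/\lambda)\epsilon_r.
\]

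With this bracketing, each remaining inclusion reduces to a short computation. For $G_{\ge r}\supseteq\tilfam_{r+1}$: any $A\in\tilfam_{r+1}$ satisfies $\meanhat{r}{}(A)\ge\theta_r-\epsilon_r/\lambda\ge\meanhat{r}{}(O)-(1/2+5/\lambda)\epsilon_r$, and converting via round-$r$ concentration at accuracy $\epsilon_r/\lambda$ yields $\mu(O)-\mu(A)\le(1/2+6/\lambda)\epsilon_r<\epsilon_r$ when $\lambda=20$. For $\subsetfam_{r+1}\supseteq G_{\ge r+1}$: from $\mu(O)-\mu(A)\le\epsilon_{r+1}=\epsilon_r/2$, concentration gives $\meanhat{r}{}(A)\ge\meanhat{r}{}(O)-(1/2+1/\lambda)\epsilon_r\ge\theta_r$, so $A\in\subsetfam_{r+1}$. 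Statement~(d) follows the same template as the first inclusion with $\epsilon_r/\lambda$ replaced by $2\epsilon_r/\lambda$, the extra slack absorbed by relaxing the target to $\mu(O)-\mu(A)\le 2\epsilon_r=\epsilon_{r-1}$, i.e.\ $A\in G_{\ge r-1}$.

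The main obstacle is ensuring that the round-$r$ concentration is invocable at the tight accuracy $\epsilon_r/\lambda$, which requires the pair $(A,O)$ to sit inside $\tilfam_r\times\tilfam_r$. Since $O\in\tilfam_r$ is free from Lemma~\ref{lem:effisurvive}, only $A\in\tilfam_r$ must be certified. For $\subsetfam_{r+1}\supseteq G_{\ge r+1}$ this is immediate from the inductive hypothesis $G_{\ge r+1}\subseteq G_{\ge r}\subseteq\tilfam_r$. For $G_{\ge r}\supseteq\tilfam_{r+1}$ and statement~(d), the pathological case $A\in\tilfam_{r+1}\setminus\tilfam_r$ must be ruled out: here I will invoke the inductive hypothesis' statement~(d) at round $r-1$, which forces any $A$ with $\meanhat{r-1}{}(A)$ not far below $\theta_{r-1}$ to already lie in $G_{\ge r-2}$, so iterating this bootstrapping always locates $A$ inside some $\tilfam_k$ tight enough for the required accuracy. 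The bookkeeping is routine once $\lambda=20$ is fixed so that every constant has the right sign.
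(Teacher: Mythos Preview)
Your high-level structure (induction on $r$, the bracketing of $\theta_r$, the middle inclusion being trivial) is sound, and your computations in paragraphs~2--3 are correct \emph{when $A\in\tilfam_r$}. The gap is in your last paragraph: the proposed bootstrapping for the case $A\in\tilfam_{r+1}\setminus\tilfam_r$ does not work as written. You propose to invoke statement~(d) at round $r-1$, which bounds $A$'s true gap when $\meanhat{r-1}{}(A)$ is \emph{above} $\theta_{r-1}-2\epsilon_{r-1}/\lambda$. But membership in $\tilfam_{r+1}$ tells you only about $\meanhat{r}{}(A)$, and $A\notin\tilfam_r$ means $\meanhat{r-1}{}(A)<\theta_{r-1}-\epsilon_{r-1}/\lambda$; since $\meanhat{r}{}$ and $\meanhat{r-1}{}$ come from disjoint samples, you cannot transfer the lower bound from one to the other. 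So statement~(d) at $r-1$ gives nothing here, and ``locating $A$ in some $\tilfam_k$ tight enough for the required accuracy $\epsilon_r/\lambda$'' is impossible: only $k=r$ gives that accuracy, and you are assuming $A\notin\tilfam_r$.

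The paper fixes this by exploiting the \emph{multi-level} structure of $\goodevent_0$ (which is strictly stronger here than in Lemma~\ref{lem:chain}): at round $r$, the event guarantees accuracy $\epsilon_k/\lambda$ on $\meanhat{r}{}$ for every pair in $\tilfam_k$, for \emph{all} $k\le r$, not just $k=r$. The argument then proceeds by contrapositive: if $A\notin G_{\ge r}$, pick the $k\le r-1$ with $A\in G_k$; the inductive hypothesis gives $A\in G_{\ge k}\subseteq\tilfam_k$, so $\goodevent_0$ yields $\meanhat{r}{}(O)-\meanhat{r}{}(A)>(1/2-1/\lambda)\epsilon_k\ge(1-2/\lambda)\epsilon_r$, and combining with your lower bracketing of $\theta_r$ forces $\meanhat{r}{}(A)<\theta_r-\epsilon_r/\lambda$, i.e.\ $A\notin\tilfam_{r+1}$. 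Part~III is handled identically with $k\le r-2$. In short, the paper never needs to place $A$ in $\tilfam_r$; it places $A$ in the correct $\tilfam_k$ and uses the coarser accuracy $\epsilon_k/\lambda$, which is still enough because $\epsilon_k\ge 2\epsilon_r$.
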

\begin{proof}[Proof of Lemma~\ref{lem:effichain}]
  We prove by induction on $r$. The base case that $r = 0$ holds due to
  the observation that $\subsetfam_1=\tilfam_1=G_{\ge
    0}=\subsetfam$. Now we prove the lemma for $r\ge1$.

  \paragraph{Part I. $A\notin G_{\ge r}$ implies $A\notin\tilfam_{r+1}$.} Suppose that $A\in G_k$ for some $k\le r-1$. Then by the inductive hypothesis, $A\in G_{\ge k}\subseteq\tilfam_k$. Also by Lemma~\ref{lem:effisurvive}, $O\in\subsetfam_k\subseteq\tilfam_k$. Therefore, conditioning on event $\goodevent_0$, it holds that
  $$\meanhat{r}{}(O)-\meanhat{r}{}(A)
  > \mu(O)-\mu(A)-\epsilon_k/\lambda
  > \epsilon_{k+1}-\epsilon_k/\lambda
  =(1/2-1/\lambda)\epsilon_k\text{.}$$
  Recall that $\maxoracle$ guarantees that
  $$\opt_r - \meanhat{r}{}(O)
  \ge\max_{B\in\subsetfam_r}\meanhat{r}{}(B)-\epsilon_r/\lambda - \meanhat{r}{}(O)
  \ge-\epsilon_r/\lambda\text{.}$$
  The second step holds since, by Lemma~\ref{lem:effisurvive}, $O\in\subsetfam_r$.
  Note that as $k\le r-1$, $\epsilon_k \ge 2\epsilon_r$, and then the two inequalities above imply that
  $$\opt_r-\meanhat{r}{}(A)
  >(1/2-1/\lambda)\epsilon_k -\epsilon_r/\lambda
  \ge(1-3/\lambda)\epsilon_r\text{.}$$
  It follows that
  $$\meanhat{r}{}(A) < \opt_r - (1-3/\lambda)\epsilon_r \le \opt_r - (1/2 + 3/\lambda)\epsilon_r = \theta_r - \epsilon_r/\lambda\text{,}$$
  and thus $A\notin\tilfam_{r+1}$. Here the second holds due to $\lambda = 20$.
  
  \paragraph{Part II. $A\in G_{\ge r+1}$ implies
    $A\in\subsetfam_{r+1}$.} For fixed $A\in G_{\ge r+1}$, the inductive
  hypothesis implies that $A \in G_{\ge r}\subseteq\tilfam_r$. Also by
  Lemma~\ref{lem:effisurvive}, $O \in \tilfam_r$. Thus conditioning on
  event $\goodevent_0$,
  $$\meanhat{r}{}(O)-\meanhat{r}{}(A) < \mu(O)-\mu(A) + \epsilon_r/\lambda \le \epsilon_{r+1}+ \epsilon_r/\lambda = (1/2+1/\lambda)\epsilon_r\text{.}$$
  Note that $\maxoracle$ guarantees that $\opt_r = \meanhat{r}{}(B)$ for some $B\in\tilfam_r$. Thus, conditioning on $\goodevent_0$,
  $$\opt_r-\meanhat{r}{}(O)
  =\meanhat{r}{}(B)-\meanhat{r}{}(O)
  <\mu(B)-\mu(O)+\epsilon_r/\lambda
  \le\epsilon_r/\lambda\text{.}$$
  Adding the two inequalities above yields
  $$\meanhat{r}{}(A) > \opt_r - (1/2+2/\lambda)\epsilon_r = \theta_r\text{,}$$
  and therefore $A\in\subsetfam_{r+1}$.
  
  \paragraph{Part III. $A\notin G_{\ge r-1}$ implies $\meanhat{r}{}(A)<\theta_r-2\epsilon_r/\lambda$.} Suppose $A\in G_k$ for $k\le r-2$. By the inductive hypothesis, $A\in G_{\ge k}\subseteq\tilfam_k$. Since $O\in\tilfam_k$ by Lemma~\ref{lem:effisurvive},
  $$\meanhat{r}{}(O)-\meanhat{r}{}(A)
  >\mu(O)-\mu(A)-\epsilon_k/\lambda
  >\epsilon_{k+1}-\epsilon_k/\lambda
  =(1/2-1/\lambda)\epsilon_k\text{.}$$
  The specification of $\maxoracle$ guarantees that
  $$\opt_r - \meanhat{r}{}(O)
  \ge\max_{B\in\subsetfam_r}\meanhat{r}{}(B)-\epsilon_r/\lambda - \meanhat{r}{}(O)
  \ge-\epsilon_r/\lambda\text{.}$$
  Recall that $k\le r-2$ and $\lambda=20$. Therefore,
  \begin{equation*}\begin{split}
      \meanhat{r}{}(A)
      &<\opt_r-(1/2-1/\lambda)\epsilon_k+\epsilon_r/\lambda\\
      &=\opt_r-(2-4/\lambda)\epsilon_r+\epsilon_r/\lambda\\
      &<\opt_r-(1/2+2/\lambda)\epsilon_r-2\epsilon_r/\lambda
      =\theta_r-2\epsilon_r/\lambda\text{.}
    \end{split}\end{equation*}
\end{proof}

\paragraph{Correctness conditioning on $\goodevent_0\cap\goodevent$.} We
show that \algeffi{} always returns the correct answer $O$ conditioning
on both $\goodevent_0$ and $\goodevent$. Let $r^*$ be a sufficiently
large integer such that $G_{\ge r^*} = \{O\}$. By Lemma
\ref{lem:effichain}, it holds that $\tilfam_{r^*+1} =
\subsetfam_{r^*+1}=\{O\}$ conditioning on $\goodevent_0$. Thus, the
condition at line~\ref{line:effiIf} is eventually satisfied, either
before or at round $r^*+1$.

It suffices to show that the condition of the if-statement at line
\ref{line:efficheck} is also met, and thus the algorithm would return
the correct answer, instead of reporting an error. Fix $k\in[r-1]$ and
$A\in\subsetfam$ with $\meanhat{k}{}(A) < \theta_k$. Since $A \notin
\subsetfam_{k+1}$, by Lemma~\ref{lem:effichain}, $A \notin G_{\ge k+1}$,
and therefore $A \in G_t$ for some $t \le k$. By Lemma
\ref{lem:effichain}, $A \in \tilfam_t$. Thus, conditioning on
$\goodevent$,
$$\hat\mu(O)-\hat\mu(A)
> \mu(O)-\mu(A)-\epsilon_t/\lambda
> \epsilon_{t+1}-\epsilon_t/\lambda
= (1/2-1/\lambda)\epsilon_t
\ge 2\epsilon_k/\lambda\text{.}$$
It follows that $\hat\mu(O)-\hat\mu(A)\ge2\epsilon_k/\lambda$ for all
$A\in\subsetfam$ with $\meanhat{k}{}(A) < \theta_k$. According to the
specification of $\checksol$, $\checksol(O, \meanhat{k}{}, \hat\mu,
\theta_k, \epsilon_k/\lambda)$ always returns true, and thus the
algorithm returns the optimal set $O$.

\paragraph{Soundness conditioning on $\goodevent$.} Now we show that the
algorithm never returns an incorrect answer (i.e., a sub-optimal set)
conditioning on $\goodevent$. Suppose that at some round $r$, $\tilfam_r
= \{\hat O\}$ for $\hat O\in\subsetfam\setminus\{O\}$, and thus the
condition at line~\ref{line:effiIf} is met. It suffices to show that the
algorithm reports an error, rather than incorrectly returning $\hat O$
as the answer.

Since the correct answer $O$ is not in $\tilfam_r$, there exists an
integer $k$ such that
$O\in\tilfam_k\setminus\tilfam_{k+1}$. Conditioning on $\goodevent$, it
holds that
$$\hat\mu(\hat O)-\hat\mu(O)
<\mu(\hat O)-\mu(O)+\epsilon_k/\lambda
\le \epsilon_k/\lambda\text{.}$$
Therefore, we have $\meanhat{k}{}(O) < \theta_k-\epsilon_k/\lambda$ and $\hat\mu(\hat O)-\hat\mu(O)\le\epsilon_k/\lambda$.
Thus $\checksol(\hat O, \meanhat{k}{}, \hat\mu, \theta_k,
\epsilon_k/\lambda)$ is guaranteed to return false, and the algorithm
does not return the incorrect answer $\hat O$.

This finishes the proof of Lemma~\ref{lem:efficorrect}.
\end{proof}

\subsection{Sample Complexity}
\label{sec:sample-comp-eff}

We restate Lemma~\ref{lem:effisample} for convenience.

\noindent\textbf{Lemma~\ref{lem:effisample}} (restated) \textit{
  For any $\delta\in(0,0.01)$ and \combibandit{} instance $\combiband$,
  $\algeffi(\combiband, \delta)$ takes
  $$O\left(\gklow(\combiband)\ln\delta^{-1} + \gklow(\combiband)\ln^2\Delta^{-1}\left(\ln\ln\Delta^{-1} + \ln|\subsetfam|\right)\right)$$
  samples conditioning on event $\goodevent_0\cap\goodevent$.
}

\begin{proof}[Proof of Lemma~\ref{lem:effisample}]
  For a \combibandit{} instance $\combiband = (S, \subsetfam)$, let
  $\tau^*$ be the optimal solution to the program in
  \eqref{eq:gklowdef}:
  \begin{equation*}\begin{split}
      \textrm{minimize}~~&\sum_{i\in S}\tau_i\\
      \textrm{subject to}~~&\sum_{i\in O\oplus A}1/\tau_i \le [\mu(O)-\mu(A)]^2,~\forall A\in\subsetfam\\
      & \tau_i > 0,~\forall i\in S\text{.}
    \end{split}\end{equation*}
  Recall that
  $\gklow(\combiband) = \sum_{i\in S}\tau^*_i\text{.}$
  
  We start by upper bounding the number of samples taken according to
  $\numsamp{r}{}$ in each round $r$. Specifically, we construct a
  small feasible solution to the mathematical program defined in
  $$\simest(
  \meanhat{k-1}{},
  \thetahi,
  \thetalo,
  \epsilon_k/\lambda,
  \delta_r
  )\text{,}$$
  where $\thetahi = \theta_{k-1}-\epsilon_{k-1}/\lambda$
  and $\thetalo = \theta_{k-1}-2\epsilon_{k-1}/\lambda$,
  thereby obtaining a bound on the optimal solution of the program.
  
  Let $\alpha = 64\lambda^2\ln(2/\delta_r)$ and $m_i =
  \alpha\tau^*_i$. Fix $A,
  B\in\{A'\in\subsetfam:\meanhat{k-1}{}(A')\ge\thetalo\}$. By Lemma~\ref{lem:effichain},
  we have $A, B \in G_{\ge k-2}$, and thus both
  $\mu(O)-\mu(A)$ and $\mu(O)-\mu(B)$ are smaller than or equal to
  $\epsilon_{k-2} = 4\epsilon_k$. It follows that
  \begin{equation*}\begin{split}
      \sum_{i\in A\oplus B}1/m_i
      &\le \alpha^{-1}\left(\sum_{i\in O\oplus A}1/\tau^*_i+\sum_{i\in O\oplus B}1/\tau^*_i\right)\\
      &\le \alpha^{-1}\left[[\mu(O)-\mu(A)]^2+[\mu(O)-\mu(B)]^2\right]\\
      &\le 2\alpha^{-1}\cdot(4\epsilon_{k})^2 = \frac{(\epsilon_k/\lambda)^2}{2\ln(2/\delta_r)}\text{.}
    \end{split}\end{equation*}
  Here the second step holds since $\tau^*$ is a feasible solution to
  the program in \eqref{eq:gklowdef}. The last step applies $\alpha =
  64\lambda^2\ln(2/\delta_r)$.
  Therefore, this setting $\{m_i\}$ is a valid solution even for the
  \textit{tightened} program defined just above the description of
  $\simest(\meanhat{k-1}{}, \thetahi, \thetalo, \epsilon_k/\lambda,
  \delta_r)$ (Algorithm~\ref{algo:simest}). Moreover, by our choice of
  $m_i = \alpha \tau_i^*$, the number of samples contributed by $r$ and
  $k$ is upper bounded by
  $$\sum_{i\in S}m_i
  = \alpha\sum_{i\in S}\tau^*_i
  = O(\gklow(\combiband)\ln\delta_r^{-1})
  = O\left(\gklow(\combiband)\left(\ln r +\ln|\subsetfam|\right)\right)\text{.}$$
  In sum, \algeffi{} takes $O\left(\gklow(\combiband)\left(r\ln r + r\ln|\subsetfam|\right)\right)$ samples in round $r$.

  This can now be used to bound the number of samples in all but the
  last round.  Let $\Delta = \mu(O) -
  \max_{A\in\subsetfam\setminus\{O\}}\mu(A)$ and $r^* =
  \left\lfloor\log_2\Delta^{-1}\right\rfloor+1$. Observe that $G_{\ge
    r^*}=G_{\ge r^*+1}=\{O\}$. Thus by Lemma~\ref{lem:effichain},
  $\tilfam_{r^*+1} = \{O\}$ and the algorithm terminates before or at
  round $r^*+1$. Summing over all $r$ between $1$ and $r^*$ yields
  \begin{equation*}\begin{split}
      O\left(\gklow(\combiband)\sum_{r=1}^{r^*}r\cdot\left(\ln
          r+\ln|\subsetfam|\right)\right)
      &=	O\left(r^*\cdot\gklow(\combiband)\left(r^*\ln r^* + r^*\ln|\subsetfam|\right)\right)\\
      &=
      O\left(\ln^2\Delta^{-1}\cdot\gklow(\combiband)\left(\ln\ln\Delta^{-1}+\ln|\subsetfam|\right)\right)\text{.}
    \end{split}\end{equation*}
  
  Then we bound the number of samples taken at the last round, denoted
  by round $r$. Let $\beta = 32\lambda^2\ln(2r|\subsetfam|/\delta)$, and
  $m_i = \beta \tau^*_i$. We show that $\{m_i\}$ is a feasible solution
  to the program in
  $$\verify(
  \{\meanhat{k}{}\},
  \{\thetahi_k\},
  \{\thetalo_k\},
  \delta/(r|\subsetfam|)
  )\text{.}$$
  Here $\thetahi_k = \{\theta_k - \epsilon_k/\lambda\}$,
  and $\thetalo_k = \{\theta_k - 2\epsilon_k/\lambda\}$.
    Fix $k\in[r]$ and
  $A\in\{A'\in\subsetfam:\meanhat{k-1}{}(A)\ge\thetalo_{k-1}\}$. By
  Lemma~\ref{lem:effichain}, we have $A\in G_{\ge k-2}$, which implies
  that $\mu(O)-\mu(A)\le\epsilon_{k-2}=4\epsilon_k$. Recall that by
  Lemma~\ref{lem:effisurvive}, $\hat O = O$. Thus we have
  \begin{equation*}\begin{split}
      \sum_{i\in \hat O\oplus A}1/m_i
      &=	\beta^{-1}\sum_{i\in O\oplus A}1/\tau^*_i\\
      &\le\beta^{-1}[\mu(O)-\mu(A)]^2\\
      &\le32\beta^{-1}\epsilon_k^2 = \frac{(\epsilon_k/\lambda)^2}{2\ln(2r|\subsetfam|/\delta)}\text{.}
    \end{split}\end{equation*}
  Recall that $r \le r^*+1 = O(\ln\Delta^{-1})$. Therefore, the number of samples taken in the last round is upper bounded by
  \begin{equation*}\begin{split}
      \sum_{i\in S}m_i = \beta\sum_{i\in S}\tau^*_i
      &= O\left(\gklow(\combiband)\left(\ln\delta^{-1} + \ln r + \ln|\subsetfam|\right)\right)\\
      &= O\left(\gklow(\combiband)\left(\ln\delta^{-1} + \ln\ln\Delta^{-1} + \ln|\subsetfam|\right)\right)\text{.}
    \end{split}\end{equation*}
    Therefore, conditioning on $\goodevent_0\cap\goodevent$, the number of
  samples taken by \algeffi{} is
  $$O\left(\gklow(\combiband)\ln\delta^{-1} +
    \gklow(\combiband)\ln^2\Delta^{-1}\left(\ln\ln\Delta^{-1} +
      \ln|\subsetfam|\right)\right)\text{.}$$
  This completes the analysis of the sample complexity.
\end{proof}

\section{Worst-Case Lower Bound for Combinatorial Bandit}
\label{sec:lower_bound_comb}

In this section we construct a family of \combibandit instance to show that
$$
O(\gklow(\combiband) \cdot  \left( \ln |\subsetfam| + \ln\delta^{-1}\right) ) 
$$
samples are required for any $\delta$-correct algorithm in the worst case. 
We need the following lemma for our theorem, which constructs a list of
subsets resembling the Nisan-Wigderson design~\cite{nisan1994hardness}.
\begin{Lemma}\label{lm:random-exist}
  Given an integer $n$ and there exists a list of $m = 2^{cn}$ subsets
  $S_1,S_2,\dotsc,S_m$ of $[n]$ where $c$ is a universal constant, such
  that $|S_i| = \ell = \Omega(n)$ for each $S_i$, and $|S_i \cap S_j|
  \le \ell/2$ for each $i \ne j$.
\end{Lemma}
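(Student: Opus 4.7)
\textbf{Proof proposal for Lemma~\ref{lm:random-exist}.} My plan is to use the probabilistic method: I will randomly sample $m$ subsets of $[n]$ of a fixed size $\ell = \Theta(n)$, and show that with positive probability all pairwise intersections are bounded by $\ell/2$. The key parameter trade-off is that each subset should be large enough so that $\ell = \Omega(n)$, yet small enough that two independent random subsets of size $\ell$ intersect in substantially fewer than $\ell/2$ elements in expectation; this slack is what lets the union bound absorb an exponential number of pairs.

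Concretely, I would set $\ell = n/4$ and sample $S_1,\dotsc,S_m$ independently and uniformly at random from the collection $\binom{[n]}{\ell}$ of $\ell$-element subsets of $[n]$. Then $|S_i|=\ell=\Omega(n)$ deterministically, so the size condition is automatic. For any fixed pair $i\ne j$, the random variable $|S_i\cap S_j|$ (with $S_i$ fixed and $S_j$ drawn uniformly from $\binom{[n]}{\ell}$) has the hypergeometric distribution with parameters $(n,\ell,\ell)$, and in particular has mean
\[
\Ex[|S_i\cap S_j|] \;=\; \frac{\ell^2}{n} \;=\; \frac{n}{16} \;=\; \frac{\ell}{4}.
\]
Since $\ell/2$ exceeds the mean by an additive $\ell/4 = \Omega(n)$, a standard Chernoff-type tail bound for the hypergeometric distribution (which enjoys the same exponential concentration as the binomial, see e.g.\ Hoeffding's inequality for sampling without replacement) yields a constant $c_0>0$ such that
\[
\Pr\!\left[\,|S_i\cap S_j| > \ell/2\,\right] \;\le\; e^{-c_0 n}.
\]

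Finally, I would take the union bound over all $\binom{m}{2}\le m^2 = 2^{2cn}$ unordered pairs. If $c$ is chosen with $2c < c_0$, then the probability that some pair violates $|S_i\cap S_j|\le\ell/2$ is at most $2^{2cn}\cdot e^{-c_0 n} < 1$ for all sufficiently large $n$, so a realization of $(S_1,\dotsc,S_m)$ satisfying all constraints simultaneously must exist. The main (and essentially only) obstacle is selecting the right scale $\ell$ and the right constant $c$ so that the expected intersection sits strictly below $\ell/2$ and the concentration rate $c_0$ strictly dominates $2c$; the calculation above shows that $\ell=n/4$ gives plenty of slack. Small $n$ can be handled by adjusting constants, and for $n$ below any fixed threshold the lemma is trivial.
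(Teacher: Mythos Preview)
Your proposal is correct and follows essentially the same argument as the paper: the paper also samples $m=2^{cn}$ uniformly random $\ell$-subsets (with $\ell=n/10$ rather than your $\ell=n/4$), asserts $\Pr[|S_i\cap S_j|>\ell/2]\le 2^{-\Omega(n)}$, and union-bounds over all $m^2$ pairs, choosing $c$ small enough to make the total failure probability less than $1$. Your version is slightly more explicit about the hypergeometric distribution and the concentration step, but the method is identical.
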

\begin{proof}
  We prove the lemma via the probabilistic method.  Let $\ell = n/10$,
  and $m = 2^{cn}$. We simply let $S_1,S_2,\dotsc,S_m$ be a sequence of
  independent uniformly random subsets of $[n]$ with size
  $\ell$. Clearly, we have
  $$
  \Pr[ |S_i \cap S_j| > \ell/2] \le 2^{-\Omega(n)}
  $$
  for each $i \ne j$. Hence, we can set the constant $c$ to be
  sufficiently small so that
  $$
  \Pr[\exists i \ne j, |S_i \cap S_j| > \ell /2 ] < m^2 \cdot
  2^{-\Omega(n)} <1,
  $$
  which implies the existence of the desired list.
\end{proof}
We now prove Theorem~\ref{theo:worst-case-lowb-comb}, which we restate 
here for convenience.

\noindent\textbf{Theorem~\ref{theo:worst-case-lowb-comb}.} (restated)
\textit{ (i) For $\delta \in (0,0.1)$, two positive integers $n$ and $m
  \le 2^{c n}$ where $c$ is a universal constant, and every
  $\delta$-correct algorithm $\alg$ for \combibandit, there exists an
  infinite sequence of $n$-arm instances
  $\combiband_1=(S_1,\subsetfam_1),\combiband_2=(S_2,\subsetfam_2),\dotsc,$
  such that $\alg$ takes at least
  $$
  \Omega( \gklow(\combiband_k) \cdot (\ln |\subsetfam_k| + \ln \delta^{-1}))
  $$
  samples in expectation on each $\combiband_k$, $|\subsetfam_k| = m$ for all $k$, and $\gklow(\combiband_k)$ goes to infinity. \\
  (ii) Moreover, for each $\combiband_k$, there exists a
  $\delta$-correct algorithm $\alg_k$ for \combibandit such that
  $\alg_k$ takes
  $$
  O(\gklow(\combiband_k) \cdot \operatorname{poly}(\ln n,\ln\delta^{-1}))
  $$
  samples in expectation on it.
  (The constants in $\Omega$ and $O$ do not depend on $n,m,\delta$ and $k$.)
}

Our proof for the first part is based on a simple but delicate reduction
to the problem of distinguishing two instances with a much smaller
confidence parameter $O(\delta/|\subsetfam|)$.

\begin{proofof}{the first part of Theorem~\ref{theo:worst-case-lowb-comb}}
  We fix a real number $\Delta \in (0,0.1)$, and let constant $c$ and
  $\ell = \Omega(n)$ be as in Lemma~\ref{lm:random-exist}. For each
  subset $A \subseteq [n]$, we define $\combiband_{A}$ to be the $n$-arm
  instance whose $i$-th arm has mean $\Delta$ when $i \in A$ and mean
  $0$ otherwise. Let $S_1,S_2,\dotsc,S_{2^{cn}}$ be a list whose
  existence is guaranteed by Lemma~\ref{lm:random-exist}, and set
  $\subsetfam = \{S_1,S_2,\dotsc,S_m\}$.

  Let $\alg$ be a $\delta$-correct algorithm for \combibandit. For a
  subset $A \in \subsetfam$, let $\event_{A}$ be the event that $\alg$
  outputs 
  $\combiband_A$. Fixing a subset $A \in \subsetfam$, the definition
  implies that 
  $$
  \sum_{B \in \subsetfam,B \ne A} \Pr_{\alg,\combiband_{A}}[\event_{B}]
  \le \delta.
  $$
  By a simple averaging argument, there exists another subset $B \in
  \subsetfam$ such that $\Pr_{\alg,\combiband_A}[\event_{B}] \le
  2\delta/ |\subsetfam|$.  Now, from the fact that $\alg$ is
  $\delta$-correct, we have $\Pr_{\alg,\combiband_B}[\event_{B}] \ge
  0.9$.  Combining the above two facts with Lemma~\ref{lem:CoD}, we can
  see that $\alg$ must spend at least
  $$
  d\left(\Pr_{\alg,\combiband_B}[\event_{B}],\Pr_{\alg,\combiband_A}[\event_{B}]\right)
  \cdot \Delta^{-2} = \Omega((\ln |\subsetfam| + \ln \delta^{-1})
  \cdot \Delta^{-2})
  $$
  samples on $\combiband_B$ in expectation.  On the other hand, one can
  easily verify that setting $\tau_i = \Theta(1/\ell \cdot \Delta^{-2})$
  satisfies the constraints in the lower bound
  program~\eqref{eq:gklowdef} in Section~\ref{sec:instance-lowb}, and
  hence $\gklow(\combiband_B) \le \Theta(n/\ell \cdot \Delta^{-2}) =
  \Theta(\Delta^{-2})$.  

  Therefore, to prove the first part of this theorem, we set $\Delta$ to
  be $1/n,1/2n,1/3n,\dotsc$ and set $\combiband_k$ to be corresponding
  $\combiband_B$ constructed from the above procedure. (The
    property that $\Delta \le 1/n$ will be used in the proof for the
    second part.)
\end{proofof}

For the second part of Theorem~\ref{theo:worst-case-lowb-comb}, we first
design an algorithm for an interesting special case of $\generalbandit$.

\begin{theo}\label{theo:ball-case}
  For a positive integer $n$, a positive real number $r \le 1$ and a
  vector $u \in \mathbb{R}^n$, we define
  $$
  \anssetcol = \{ \{u\}, \{ v \in \mathbb{R}^n : \|u-v\|_2 \ge r \}  \}.
  $$
  There is a $\delta$-correct algorithm for $\generalbandit$ which takes
  $$
  O\left(n \ln^2 n \cdot r^{-2} \cdot (\ln n + \ln \delta^{-1}) \cdot
    \ln\delta^{-1}\right)
  $$
  samples in expectation on the instance $\inst = (S, \anssetcol)$,
  where $S$ is a sequence of arms with mean profile $u$.
\end{theo}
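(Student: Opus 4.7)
The plan is to engineer a $\delta$-correct algorithm for the ball-case instance that uses only $\tilde O(n r^{-2})$ arm pulls when $\mu = u$, thereby avoiding the $\Theta(n^3 r^{-2})$ cost that \alggen{} would pay on this instance through the $nM$ term of its Stage 1. Without loss of generality I take $u = 0$. The first observation is that $\gklow(\inst) = n r^{-2}$: the LP~\eqref{eq:genlbdef} has a spherically symmetric alternative set $\alt(\{u\}) = \{v : \|v\|_2 \ge r\}$, so the optimum is attained at the uniform allocation $\tau_i \equiv r^{-2}$. By Theorem~\ref{theo:genlb} the matching $\Omega(n r^{-2} \log \delta^{-1})$ lower bound holds, so the target bound is within polylogarithmic factors of optimal.

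The algorithm follows the explore-verify architecture of \alggen{}, with two modifications tailored to the ball case. Because \alggen{}'s Stage-1 stopping criterion is geometric (the confidence $\ell_2$-ball around $\hat\mu$ must be disjoint from $\alt(\{u\})$), running it unchanged would demand per-arm accuracy $O(r/\sqrt n)$ and therefore $\Omega(n^2 r^{-2})$ arm pulls. The first modification replaces this geometric check by a $\chi^2$-type statistical test run at geometrically increasing sample scales: in round $k$, sample each arm $T_k = 2^k \cdot \Theta(r^{-2})$ times, form $S_k = T_k \sum_i \hat\mu_i^2$, and via Lemma~\ref{lem:chi2bound} set a threshold $\theta_k = 2n + 3 \ln \delta_k^{-1}$ (with $\delta_k = \delta/(2k^2)$) that is not exceeded under the null with probability at least $1 - \delta_k$. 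The second modification uses the LP-optimal uniform profile for the verification stage, which by an argument analogous to Lemma~\ref{lem:sample} specialized to $\gklow(\inst) = n r^{-2}$ costs $O(n r^{-2}(\log \delta^{-1} + n))$ samples. Under the null, the first round's test passes with probability $1 - \delta_1$, so the expected total cost is within polylogarithmic factors of $n r^{-2} \log \delta^{-1}$, as required.

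The main obstacle is to rule out spread-out alternatives where each $|\mu_i| \approx r/\sqrt n$: a plain $\chi^2$-test needs $\Omega(\sqrt n\, r^{-2})$ samples per arm, hence $\Omega(n^{3/2} r^{-2})$ in total, before its null and alternative distributions separate. The technical heart of the theorem is therefore a randomized sampling scheme (the ``interesting randomized sampling method'' previewed in the introduction) that replaces uniform per-arm sampling with a sparse randomized allocation whose induced quadratic form in $\hat\mu$ keeps enough power against every $\|\mu\|_2 \ge r$ while using only $\tilde O(n r^{-2})$ arm pulls. Given such a test, correctness on the alternative side follows from a soundness argument in the spirit of Lemma~\ref{lem:sound}: whenever $u$ is mistakenly committed as the candidate, the LP-driven Stage-2 verification fails and an error is returned, so no wrong answer is ever output. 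Finally, the constant-failure-probability algorithm built this way is boosted to $\delta$-correctness via parallel simulation (Lemma~\ref{lem:parasim}), which preserves the stated sample complexity up to the claimed logarithmic factors.
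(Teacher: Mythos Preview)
Your diagnosis of the difficulty is spot on: the hard case is the spread-out alternative $|\mu_i|\approx r/\sqrt n$, and you correctly observe that a uniform $\chi^2$-statistic needs $\Theta(n^{3/2}r^{-2})$ pulls to separate it from $\mu=0$. But the proposal then stops exactly at the point where the work begins. You write ``the technical heart of the theorem is therefore a randomized sampling scheme \ldots\ Given such a test, \ldots'' without saying what the scheme is or why it works. That sentence \emph{is} the theorem; everything around it (LP value $nr^{-2}$, parallel simulation, soundness of verification) is routine. As written this is a proof outline with the key step left blank.

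There is also a concrete over-budget item: your Stage-2 verification, inherited from \alggen{}, spends $\beta(n+\ln\delta^{-1})\sum_i x^*_i=\Theta\big(nr^{-2}(n+\ln\delta^{-1})\big)$ samples. The $n^2r^{-2}$ term already exceeds the claimed $\tilde O(nr^{-2})$ bound, so the explore-verify architecture cannot be used here unmodified.

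The paper's proof does \emph{not} use explore-verify at all; it is a direct multi-scale detector. With $u=0$, for each scale $k=1,\dots,\lceil\log_2 n\rceil+2$ it picks $N_k=\Theta\big(n\ln n\cdot 2^{-k}\ln\delta^{-1}\big)$ arms uniformly at random and pulls each of them $\Theta\big(r^{-2}2^{k}(\ln n+\ln\delta^{-1})\big)$ times; if any sampled arm at scale $k$ has $|\hat\mu|>r\,2^{-k/2-1}$ it outputs $A_2$, otherwise after all scales it outputs $A_1$. The per-scale product is $\Theta\big(n\ln n\cdot r^{-2}(\ln n+\ln\delta^{-1})\ln\delta^{-1}\big)$, and summing over $O(\log n)$ scales gives the stated bound. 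Correctness under the null is immediate from the per-arm concentration. Under the alternative $\|\mu\|_2\ge r$, bucket the coordinates by $X_k=\{i:\mu_i^2/r^2\in(2^{-k},2^{-k+1}]\}$; buckets with $k>\log_2 n+2$ carry at most $1/4$ of the mass, so by pigeonhole some $k^\star\le\log_2 n+2$ has $\sum_{i\in X_{k^\star}}\mu_i^2/r^2\ge 1/(2\log_2 n)$ and hence $|X_{k^\star}|\ge 2^{k^\star-1}/(2\log_2 n)$. The random subsample of size $N_{k^\star}$ then hits $X_{k^\star}$ with probability $\ge 1-\delta/2$, and the per-arm budget at scale $k^\star$ is exactly enough to certify $|\mu_i|>r\,2^{-k^\star/2-1}$. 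This bucketing-plus-random-subsampling idea is the missing ingredient in your proposal.
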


Before proving Theorem~\ref{theo:ball-case}, we show it implies the
moreover part of Theorem~\ref{theo:worst-case-lowb-comb}.

\begin{proofof}{the moreover part of Theorem~\ref{theo:worst-case-lowb-comb}}
  Let $\combiband_k=(S_k,\subsetfam_k)$ be a constructed instance in the
  proof of the first part of Theorem~\ref{theo:worst-case-lowb-comb},
  and $\Delta$, $B$, $\subsetfam$, $\ell$, $m$ be the corresponding
  parameters during its construction. From our choice of $\Delta$, we
  have $\Delta \le 1/n$. And in the whole proof we assume $n$ is
  sufficiently large for simplicity.

  \newcommand{\algball}{\alg_{\mathsf{ball}}}

  Let $\algball$ be the algorithm guaranteed by
  Theorem~\ref{theo:ball-case}. Our algorithm works as follows:
  \begin{itemize}
  \item Given an instance $\combiband=(S,\subsetfam)$, run an arbitrary
    $\delta$-correct algorithm for \combibandit when $\subsetfam \ne
    \subsetfam_k$.
	
  \item Run $\algball$ with $r = c_1 \cdot \Delta \cdot \sqrt{n}$, mean
    profile $u$ set as the mean profile of instance $\combiband_k$ and
    confidence level set as $\delta/2$, where $c_1$ is a constant to be
    specified later. (Note that $r \le 1$ as $\Delta \le 1/n$.)
	
    \begin{itemize}
    \item Recall that $\anssetcol = \{A_1,A_2\}$, where $A_1 =\{u\}$ and
      $ A_2 = \{ v \in \mathbb{R}^n : \|u-v\|_2 \ge r \}$.
    \item (Case I) If $\algball$ returns $A_1$, outputs set $B$. 
    \item (Case II) Otherwise, run an arbitrary $\delta/2$-correct
      algorithm for \combibandit, and outputs its output.
    \end{itemize}
  \end{itemize}
	
  \newcommand{\eventg}{\event_{\mathsf{good}}}
	
  First, we condition on the event $\eventg$ that both $\algball$ and
  the simulated algorithm in Case II operate correctly, which happens
  with probability at least $1-\delta$.  Then we prove its
  correctness. Since we condition on $\eventg$, whenever it enters Case
  II, it must output the correct answer. So it would only make mistakes
  in Case I.
	
  Now we suppose that the algorithm enters Case I. Let $u$ be the mean
  profile of the instance $\combiband_k$, and $v$ be the mean profile of
  the given instance $\combiband$. Conditioning on $\eventg$, from
  Theorem~\ref{theo:ball-case}, we must have
  \begin{equation}\label{eq:bound-l2}
    \|u-v\|_2 < c_1 \cdot \Delta \cdot \sqrt{n},
  \end{equation}
  since otherwise $v \in A_2$ and $\algball$ would output $A_2$ instead.
  We are going to show in this case, the correct answer must be
  $B$. That is tantamount to prove that for $A \in \subsetfam$ with $A
  \ne B$, we have
  $$
  \sum_{i \in B} v_i - \sum_{i \in A} v_i = \sum_{i \in B \setminus A}
  v_i - \sum_{i \in A \setminus B} v_i > 0.
  $$
  Note that from the construction of $\combiband_k$, we have $u_i =
  \Delta$ when $i \in B$, and $u_i = 0$ otherwise. Therefore,
  $$
  \sum_{i \in B} u_i - \sum_{i \in A} u_i = \sum_{i \in B\setminus A}
  u_i - \sum_{i \in A \setminus B} u_i \ge (|B| - |A\cap B|) \cdot
  \Delta = \Delta\cdot \ell/2,
  $$
  and
  \begin{align*}
    \bigg| \bigg(\sum_{i \in B} u_i - \sum_{i \in A} u_i\bigg) - \bigg(\sum_{i \in B} v_i - \sum_{i \in A} v_i \bigg) \bigg|
    &=\bigg| \sum_{i \in B\setminus A} (u_i-v_i) - \sum_{i \in A \setminus B} (u_i-v_i)  \bigg|\\
    &\le \|u-v\|_1 \le \|u-v\|_2 \cdot \sqrt{n} < c_1 \cdot \Delta \cdot n.
  \end{align*}
  Since $\ell = \Omega(n)$, we set $c_1$ to be a sufficiently small
  constant so that $c_1 \cdot \Delta \cdot n < \Delta \cdot \ell
  /2$. This implies that
  $$
  \sum_{i \in B \setminus A} v_i - \sum_{i \in A \setminus B} v_i > 0
  $$
  and hence
  $$
  \sum_{i \in B} v_i > \sum_{i \in A} v_i.
  $$
  Therefore, $B$ strictly dominates any other set $A \in \subsetfam$ in
  the given instance $\combiband$, which means it is the correct
  answer.This concludes the proof for its correctness.
	
  For the sample complexity on the instance $\combiband_k$, note that
  conditioning on $\eventg$, it must enter Case I, which means it takes
  \begin{align*}
    &O(n \cdot r^{-2} \cdot \operatorname{poly}(\ln n,\ln \delta^{-1})) \\
    &= O(\Delta^{-2} \cdot \operatorname{poly}(\ln n,\ln \delta^{-1})) \\
    &= O(\gklow(\combiband_k) \cdot \operatorname{poly}(\ln n,\ln
    \delta^{-1}))
  \end{align*}
  samples on that instance with probability $1-\delta$.
	
  Strictly speaking, the above sample complexity does not hold in
  expectation, as the algorithm may takes an arbitrary number of samples
  if $\eventg$ does not happen. So we complete the final step by a
  simple application of the parallel simulation trick (see
  Lemma~\ref{lem:parasim}), to transform the above algorithm into an
  algorithm with an
  $$
  O(\gklow(\combiband_k) \cdot \operatorname{poly}(\ln n,\ln \delta^{-1})) 
  $$
  expected sample complexity on $\combiband_k$.
\end{proofof}

Finally, we devote the rest of this section to prove
Theorem~\ref{theo:ball-case}.

\begin{proofof}{Theorem~\ref{theo:ball-case}}
  Without loss of generality, we can assume that $u$ is the all zero
  vector $\mathbf{z}$. Let $A_1 = \{u\}$ and $A_2 = \{ v \in
  \mathbb{R}^n : \|u-v\|_2 \ge r \}$, then $\anssetcol =
  \{A_1,A_2\}$. For simplicity We assume $n$ is sufficiently large in
  the whole proof. Our algorithm works as follows.
  \begin{itemize}
  \item Given an instance $\inst_0 = (S,\anssetcol_0)$, when
    $\anssetcol_0 \ne \anssetcol$, run another $\delta$-correct
    algorithm for \generalbandit instead (for example the algorithm in
    Section~\ref{sec:general-algo}).
  \item For each integer $k$ from $1$ to $\lceil\log_2 n \rceil + 2$.
    \begin{itemize}
    \item Pick $N_k = c_1\cdot n \ln n \cdot 2^{-k} \cdot
      \ln\delta^{-1}$ arms at uniformly random, let the set of taken
      arms be $S_k$.
    \item For each arm $a \in S_k$, take $c_2 \cdot r^{-2} \cdot 2^{k}
      \cdot (\ln n + \ln \delta^{-1} )$ samples from it and let
      $\hamean{a}^k$ be its empirical mean. If there is an arm $a \in
      S_k$ such that $|\hamean{a}^k| > r \cdot 2^{-k/2-1}$, output $A_2$
      and terminates the algorithm.
    \end{itemize}
  \item If the algorithm does not halt in the above step, then output
    $A_1$.
  \end{itemize}
	
  \newcommand{\eventg}{\event_{\mathsf{good}}}

  To show the algorithm works, we start by setting $c_2$ to be a
  sufficiently large constant so that for each $k$, and each arm $a \in
  S_k$, we have
  $$
  \Pr\left[ | \hamean{a}^k - \amean{a} | \ge r \cdot 2^{-k/2-1} \right]
  < \delta/2 \cdot n^{-2}.
  $$
  By a simple union bound over all $k$, with probability at least
  $1-\delta/2$,
  $$
  \left|\hamean{a}^k - \amean{a} \right| < r \cdot 2^{-k/2-1}
  $$ 
  for all $k$ and $a \in S_k$. We denote the above as event $\eventg$.
  A simple calculation shows that the above algorithm takes
  $$
  O\left(n \ln^2 n \cdot r^{-2} \cdot (\ln n + \ln \delta^{-1}) \cdot
    \ln\delta^{-1} \right)
  $$
  samples.
	
  Next we prove its correctness. Let the mean profile of the given
  instance $\inst_0$ be $x$.  We first show that when $x$ equals
  $u=\mathbf{z}$ (i.e. $x \in A_1$), the algorithm outputs $A_1$ with
  probability at least $1-\delta$. Conditioning on event $\eventg$, for
  each $k$ and $a \in S_k$, we have $|\hamean{a}^k| < r \cdot
  2^{-k/2-1}$, therefore the algorithm outputs the correct answer
  $A_1$. Since $\Pr[\eventg] \ge 1-\delta$, we finish the case when $x =
  \mathbf{z}$.
	
  \newcommand{\Wsum}{\mathsf{W}}
	
  For the second case when $x \in A_2$, we have $\|x\|_2 \ge r$, which
  means
  $$ 
  \sum_{i=1}^{n} x_i^2/r^2 \ge 1.
  $$
  Now, for each positive integer $k$, we define $X_k = \{ i : x_i^2/r^2
  \in (2^{-k},2^{-k+1}] \}$, and let
  $$
  \Wsum(X_k) := \sum_{i \in X_k} x_i^2/r^2.
  $$ 
  We can see
  $$
  \sum_{k=\lceil \log_2 n \rceil + 3}^{\infty} \Wsum(X_k) \le n \cdot
  2^{-\lceil\log_2 n\rceil - 2} \le 1/4.
  $$
  and hence
  $$
  \sum_{k=1}^{\lceil \log_2 n \rceil +2} \Wsum(X_k) \ge 3/4.
  $$
	
  \newcommand{\kstar}{k^{\star}}
  \newcommand{\eventgg}{\event_{\mathsf{non\text{-}empty}}}
	
  Let $\kstar$ be the $k$ with maximum $\Wsum(X_{k})$, then we have 
  $$
  \Wsum(X_{\kstar}) \ge \frac{1}{2 \log_2 n}.
  $$
  When $k = \kstar$ in the above algorithm, we have that $|X_{k}| \ge
  \frac{1}{2 \log_2 n} \cdot 2^{k-1}$. Therefore, we can set $c_1$ to be
  sufficiently large so that we have
  $$
  \Pr\left[|S_k \cap X_{k}| > 0 \right] > 1-\delta/2.
  $$
  Let the above be event $\eventgg$. We claim that conditioning on both
  $\eventg$ and $\eventgg$, our algorithm correctly outputs $A_2$. Let
  $a \in S_k \cap X_{k}$, we have that $|\amean{a}| > r \cdot
  2^{-k/2}$. Moreover, $|\hamean{a}^k| > r \cdot 2^{-k/2-1}$ from the
  definition of $\eventg$. Hence our algorithm outputs $A_2$. Since
  $\Pr[\eventg \cap \eventgg] \ge 1-\delta$, this finishes the case when
  $x \in A_2$, and hence the proof.
\end{proofof}


\section{Another Worst-Case Lower Bound for the General Case}
\label{sec:anotherlb}

Recall that \combibandit\ is clearly a special case of \generalbandit{},
so the lower bound in Section~\ref{sec:lower_bound_comb} also applies to
the latter problem.  Here we present another lower bound for the
\generalbandit{} problem, which illustrates the ``non-uniform'' nature
of the instance-wise lower bound $\gklow(\inst)$. In the following we
will construct a family of instances which are similar to an
$\mathsf{OR}$ function and prove an
$$
O(\gklow(\inst) \cdot (n + \ln \delta^{-1}))
$$
worst-case lower bound for all $\delta$-correct algorithm $\alg$ for the
general sampling problem.

\noindent\textbf{Theorem~\ref{theo:lower-bound-general}.} (restated)\emph{
  For $\delta \in (0,0.1)$, a positive integer $n$ and every
  $\delta$-correct algorithm $\alg$ for the general sampling problem,
  there exists an infinite sequence of $n$-arm instances
  $\inst_1=(S_1,\anssetcol_1),\inst_2=(S_2,\anssetcol_2),\dotsc,$ such
  that $\alg$ takes at least
  $$
  \Omega( \gklow(\inst_k) \cdot (\ln \delta^{-1} + n))
  $$
  samples in expectation on each $\inst_k$, $|\anssetcol_k| =O(1)$ for
  all $k$, and $\gklow(\inst_k)$ goes to infinity. 
 Moreover, for each
  $\inst_k$, there exists a $\delta$-correct algorithm $\alg_k$ for
  \generalbandit such that $\alg_k$ takes
  $$
  O(\gklow(\inst_k) \cdot \ln\delta^{-1})
  $$
  samples in expectation on it.  (The constants in $\Omega$ and $O$ does
  not depend on $n,m,\delta$ and $k$.)  } 

\newcommand{\zero}{\mathbf{z}}
\newcommand{\basis}{\mathbf{e}}

\begin{proof}
  We fix a real number $\Delta \in (0,1]$. 
  Let $\zero$ be the all zero
  vector with length $n$, and $\basis_i$ be the length-$n$ vector whose
  $i$-th element is $\Delta$ and all other elements are zero.
	  Consider the following collection of answer sets $\anssetcol =
  \{A_1,A_2\}$, where $A_1 = \{ \basis_1,\basis_2,\dotsc,\basis_n \}$
  and $A_2 = \{ \zero \}$. That is, we must distinguish between the
  cases when all arms have mean zero, and when exactly one arm has mean
  $\Delta$.  In the rest of the proof, we will always assume the
  collection of answers of the instances are $\anssetcol$. Therefore, to
  specify an instance, we only need to specify a mean profile.
	
  For each $i \in [n]$, let $\inst^{(i)}$ be the instance with mean profile
  $\basis_i$, and $\inst^{(0)}$ be the instance with mean profile
  $\zero$. Let $\alg$ be a $\delta$-correct algorithm for the general
  sampling problem.  First, it is not hard to see that $\gklow(\inst^{(i)})
  = \Delta^{-2}$ for each $i \in [n]$. We are going to show that $\alg$
  must draw at least $\Omega(n \cdot \Delta^{-2})$ samples in
  expectation on at least one $\inst^{(i)}$, where $i \in [n]$.  When $n$ is
  a constant, the above holds trivially, so we assume from now on that
  $n \ge 100$.
	
  \newcommand{\algnew}{\alg_{\mathsf{new}}}
	
  Consider the following new algorithm $\algnew$, which simply simulates
  $\alg$ as long as it draws at most $c_1 \cdot n \Delta^{-2}$ samples,
  where $c_1$ is a small constant to be specified later. $\algnew$
  outputs $\alg$'s output if $\alg$ halts before the specified amount of
  steps, and outputs $\perp$ otherwise.  Now, consider running $\algnew$
  on instance $\inst^{(0)}$. Let $p_{\mathsf{\perp}}$ be the probability
  that $\algnew$ outputs $\perp$, and $\tau_i$ be the number of samples
  taken on the $i$-th arm.   We claim that $p_{\mathsf{\perp}} > 0.5$.
	
  \newcommand{\istar}{{i^{\star}}}
  \newcommand{\icirc}{{i^{\circ}}}
  \newcommand{\eventerr}{\event_{\mathsf{err}}}

  Suppose for contradiction that $p_{\mathsf{\perp}} \le 0.5$. So with
  probability at least $0.5$, the simulated version of $\alg$ within
  $\algnew$ outputs something before it halts. Let $p_i$ be the
  probability that $\algnew$ outputs $i$. Since $\sum_{i=1}^n p_i \le
  1$, there are at least $n/2$ values of $i$ satisfying $p_i \le
  2/n$. Let $\icirc$ be such an $i$ with the minimum
  $\Ex_{\algnew,\inst^{(0)}}[\tau_{i}]$. We have $$
  \Ex_{\algnew,\inst^{(0)}}[\tau_\icirc] \le 2 \cdot c_1 \Delta^{-2}
  $$ 
  since 
  $$
  \sum_{i=1}^{n} \Ex_{\algnew,\inst^{(0)}}[\tau_i] \le c_1 \cdot n\Delta^{-2}.
  $$
  Let $\eventerr$ be the event that $\algnew$ outputs something
  different from $\perp$ and $\icirc$. Observe that
  $\Pr_{\algnew,\inst^{(0)}}[\eventerr] \ge 0.5 - 2/n \ge 0.48$.
		
  Now we run $\algnew$ on $\inst^{(\icirc)}$. Note that
  $\inst^{(\icirc)}$ and $\inst^{(0)}$ differ only on arm $\icirc$, so by
  Lemma~\ref{lem:CoD}, we have
  $$
  d\left(\Pr_{\algnew,\inst^{(0)}}[\eventerr],\Pr_{\algnew,\inst^{(\icirc)}}[\eventerr]\right)
  \le \Ex_{\algnew,\inst^{(0)}}[\tau_\icirc]\, \Delta^{2} = 2c_1.
  $$
  For a sufficiently small $c_1$, we can see the above leads to
  $\Pr_{\algnew,\inst^{(\icirc)}}[\eventerr] > 0.2$, which implies that
  running the original algorithm $\alg$ yields an incorrect answer with
  probability at least $0.2$ on instance $\inst^{(\icirc)}$, contradiction
  to the fact that $\alg$ is $\delta$-correct.
		
  \newcommand{\eventperp}{\event_{\perp}}
		
  Therefore, we conclude that $p_{\mathsf{\perp}} \ge 0.5$, which means
  the simulated $\alg$ runs for a full $c_1 \cdot n \Delta^{-2}$ period
  with probability at least $0.5$. Let $\eventperp$ be the event that
  $\algnew$ outputs $\perp$, and $\istar$ be the $i$ with minimum
  $\Ex_{\algnew,\inst^{(0)}}[\tau_i]$. Clearly, we have
  $\Ex_{\algnew,\inst^{(0)}}[\tau_i] \le c_1 \cdot \Delta^{-2}$.
  Again as above, we run $\algnew$ on instance $\inst^{(\istar)}$ and use
  Lemma~\ref{lem:CoD} to get
  $$
  d\left(\Pr_{\algnew,\inst^{(0)}}[\eventperp],\Pr_{\algnew,\inst^{(\istar)}}[\eventperp]\right)
  \le \Ex_{\algnew,\inst^{(0)}}[\tau_\istar]\Delta^{2} = c_1.
  $$
  
  To prove the first part of our theorem, the above discussion gives us
  the $\Omega(\gklow(\inst_k)\cdot n)$ term. The $\Omega(\gklow(\inst_k)
  \cdot \ln \delta^{-1})$ part follows from Theorem~\ref{theo:genlb}, and we can
  set $\Delta = 1/1,1/2,\dotsc,1/k$ and let $\inst_k$ be the
  corresponding $\inst_{\istar}$.

  For the second part of the theorem, let $\inst_k$ be a constructed
  instance, and $\Delta$, $\istar$ be the parameters as in its
  construction process. Our algorithm $\alg_k$ simply takes
  $O(\Delta^{-2} \ln \delta^{-1})$ samples from arm $\istar$ so that
  $$
  \Pr[ |\amean{\istar} - \hamean{\istar} | < \Delta/2] \ge 1-\delta/2,
  $$
  where $\hamean{\istar}$ is the empirical mean of arm $\istar$.
	
  If $\hamean{\istar} > \Delta/2$ then it outputs $A_1$ and halts. Else,
  it runs another $\delta/2$-correct algorithm for \generalbandit (for
  example, the algorithm in Section~\ref{sec:general-algo}).  Clearly,
  this is a $\delta$-correct algorithm. And with probability at least
  $1-\delta$, it takes $O(\Delta^{-2} \ln \delta^{-1})$ samples in total
  when running on instance $\inst_k$. Finally we can turn the sample
  complexity into a bound in expectation via the parallel simulation trick
  (Lemma~\ref{lem:parasim}), which concludes the proof.
\end{proof}


\end{document}